\definecolor{airforceblue}{rgb}{0.36, 0.54, 0.66}
\theoremstyle{plain}
\newtheorem{theorem}{Theorem}[section]
\newtheorem{proposition}[theorem]{Proposition}
\newtheorem{lemma}[theorem]{Lemma}
\theoremstyle{definition}
\newtheorem{definition}[theorem]{Definition}
\newtheorem{assumption}[theorem]{Assumption}
\theoremstyle{remark}
\newtheorem{remark}[theorem]{Remark}
\icmltitlerunning{Stability and Generalization of D-SGD}
\newcommand{\calD}{{\cal D}}
\newcommand\calE{{\cal E}}
\newcommand\calO{{\cal O}}
\newcommand\calZ{{\cal Z}}
\newcommand\calU{{\cal U}}
\newcommand\IR{{\mathbb{R}}}
\newcommand\IP{{\mathbb{P}}}
\newcommand{\eg}            {e.g.\ }
\DeclareMathOperator*{\argmin}{arg\,min}
\newcommand{\RR}{\mathbb{R}} % espace réel
\newcommand{\TS}{\Tilde{S}}
\newcommand{\TZ}{\Tilde{Z}}
\newcommand{\Ttheta}{\Tilde{\theta}}
\newcommand{\EE}{\mathbb{E}}
\newcommand{\1}{\mathbf{1}}
\begin{document}

\twocolumn[
\icmltitle{Improved Stability and Generalization
Guarantees \\ of the Decentralized SGD Algorithm }

%\icmltitle{On the Impact of the Graph for Stability and \\ Generalization of Decentralized SGD}

% It is OKAY to include author information, even for blind
% submissions: the style file will automatically remove it for you
% unless you've provided the [accepted] option to the icml2024
% package.

% List of affiliations: The first argument should be a (short)
% identifier you will use later to specify author affiliations
% Academic affiliations should list Department, University, City, Region, Country
% Industry affiliations should list Company, City, Region, Country

% You can specify symbols, otherwise they are numbered in order.
% Ideally, you should not use this facility. Affiliations will be numbered
% in order of appearance and this is the preferred way.
\icmlsetsymbol{equal}{*}

\begin{icmlauthorlist}
\icmlauthor{Batiste Le Bars}{yyy}
\icmlauthor{Aurélien Bellet}{sch}
\icmlauthor{Marc Tommasi}{comp}
\icmlauthor{Kevin Scaman}{yyy}
\icmlauthor{Giovanni Neglia}{gio}
%\icmlauthor{}{sch}
%\icmlauthor{Firstname8 Lastname8}{sch}
%\icmlauthor{Firstname8 Lastname8}{yyy,comp}
%\icmlauthor{}{sch}
%\icmlauthor{}{sch}
\end{icmlauthorlist}

\icmlaffiliation{yyy}{Inria Paris - Ecole Normale Supérieure, PSL Research University}
\icmlaffiliation{comp}{Univ. Lille, Inria, CNRS, Centrale Lille, UMR 9189, CRIStAL, F-59000 Lille}
\icmlaffiliation{sch}{Inria, Université de Montpellier}
\icmlaffiliation{gio}{Inria, Université Côte d’Azur}

\icmlcorrespondingauthor{Batiste Le Bars}{batiste.le-bars@inria.fr}

% You may provide any keywords that you
% find helpful for describing your paper; these are used to populate
% the "keywords" metadata in the PDF but will not be shown in the document
\icmlkeywords{Learning theory, Decentralized optimization}

\vskip 0.3in
]

% this must go after the closing bracket ] following \twocolumn[ ...

% This command actually creates the footnote in the first column
% listing the affiliations and the copyright notice.
% The command takes one argument, which is text to display at the start of the footnote.
% The \icmlEqualContribution command is standard text for equal contribution.
% Remove it (just {}) if you do not need this facility.

\printAffiliationsAndNotice{}  % leave blank if no need to mention equal contribution
%\printAffiliationsAndNotice{\icmlEqualContribution} % otherwise use the standard text.

\begin{abstract}
    This paper presents a new generalization error analysis for
    Decentralized Stochastic Gradient Descent (D-SGD) based on
    algorithmic stability. The obtained results overhaul a series of recent works that suggested an increased instability due to decentralization and a detrimental impact of poorly-connected communication graphs on generalization. On the contrary, we show, for convex, strongly convex and non-convex functions, that D-SGD can always recover generalization bounds analogous to those of classical SGD, suggesting that the choice of graph does not matter. We then argue that this result is coming from a worst-case analysis, and we provide a refined optimization-dependent generalization bound for general convex functions. This new bound reveals that the choice of graph can in fact improve the worst-case bound in certain regimes, and that surprisingly, a poorly-connected graph can even be beneficial for generalization.

\end{abstract}

%\bat{Idée titre: On the real impact of the graph for stability and generalization of the decentralized SGD alorithm }

% !TEX root = ../neurips_2023.tex

\section{Introduction}

Studying the ability of machine learning models to generalize to unseen data is a
fundamental and long-standing objective. Among the several approaches that have
been proposed to bound generalization errors, the most prominent ones are
based on the complexity of the hypothesis class like the Vapnik-Chervonenkis
dimension or Rademacher complexity \citep{bousquet2004introduction},
algorithmic stability \citep{bousquet2002stability}, PAC-Bayesian bounds \citep{ShaweTaylor97,mcallester98,catoni2007pac, alquier2021user}, or more recently information-theoretic generalization bounds \citep{xu2017information}.

Over the last few years, a substantial amount of work has been dedicated to
 the study of the generalization properties of \emph{optimization algorithms},
 more specifically gradient-based methods \citep{lin2016generalization,
 london2017pac, zhou2018generalization,amir2021sgd, neu2021information, scaman2024minimax}. In
 particular, since the seminal work of \citet{hardt2016train}, approaches
 based on \emph{algorithmic stability} have encountered a large success as
 they allow to shed light on the implicit regularization brought by 
(stochastic) gradient methods \citep{kuzborskij2018data,bassily2020stability,lei2020fine,schliserman2022stability}. However, this large amount of work is mostly focusing on \emph{centralized} gradient-based algorithms.
% and analyses for \emph{decentralized} algorithms is lacking.
%, and the generalization properties of \emph{decentralized} gradient methods is far less understood.

Decentralized learning algorithms, such as the celebrated Decentralized
Stochastic Gradient Descent (D-SGD) algorithm \citep{nedic2009distributed}, allow
several agents to train models on their local data by exchanging
model updates rather than the data itself. In D-SGD, agents
solve an empirical risk minimization task by alternating between computing local
gradient steps and averaging model parameters with their neighbors in a
communication graph.
%, which encodes which pairs of agents exchange information.
A sparser graph thus reduces the per-iteration
communication cost but tends to increase the number of iterations needed to
converge. Most theoretical analyses of D-SGD and its variants focus on
understanding the
\emph{optimization error} by
characterizing the convergence rate to the empirical risk minimizer. They 
notably highlight the impact of the communication graph and data heterogeneity
 across agents
\citep{koloskova2020unified,neglia2020decentralized,ying2021exponential,le2023refined}. In contrast,
the \emph{generalization error} of decentralized learning algorithms is far
less understood.

In the work of \citet{richards2020graph}, the authors focus on a specific variant of D-SGD (thereafter referred to as Variant~A, see Algorithm~\ref{alg:d-sgd}) where the agents perform a local stochastic gradient update \emph{before} averaging their parameters with their neighbors. With an analysis based on algorithmic stability, they come to the conclusion that for convex functions the decentralization does not have any impact, recovering the same generalization bounds as those obtained by \citet{hardt2016train} for centralized SGD. In contrast, a more recent line of work \citep{sun2021stability, zhu2022topology, taheri2023generalization} has investigated this question for a more practically relevant variant of D-SGD (thereafter referred to as Variant~B, see Algorithm \ref{alg:d-sgd}), where this time the averaging step and the local gradient computation are done \emph{in parallel}. % This variant can be faster in practice, % is supposedly more adapted to the decentralized setting,
% but is also more difficult to analyze \citep[Remark 4]{richards2020graph}.
The existing generalization results for this variant, which is known to be more difficult to analyze \citep[see][Remark~4 therein]{richards2020graph}, mainly differ in their
technical
assumptions. \citet{sun2021stability} consider Lipschitz and smooth loss
functions, \citet{zhu2022topology} focus on smooth and convex losses, while 
\citet{taheri2023generalization} investigate the overparameterized regime, for convex and Polyak-Lojasiewicz functions. Despite these differences, these three studies all
come to the same conclusion: ``decentralization has a negative impact on
generalization.'' Specifically, their generalization bounds get
larger as the graph gets sparser, and eventually become vacuous for non-connected
graphs. %Worse still, the generalization gap does not even tend to zero as the
%number of local samples increases. 
%This is quite surprising as the two variants of D-SGD, recalled in Section \ref{sec:dsgd}, have the same optimization performances, and there is no reason for them to be so different in terms of generalization.
%The generalization properties of the two  variants of D-SGD would then be in strike contrast. This result is surprising as the two exhibit minor differences (see Section \ref{sec:dsgd}) and for example have the same optimization performances.
In summary, the current literature exhibits strikingly contrasting generalization properties for these two common D-SGD variants, despite their similar optimization performance. While questioning this gap is the main motivation for our work, we also wonder how generalization upper bounds like those obtained by \citet{richards2020graph} can be completely independent of the communication graph. This result is indeed rather counter-intuitive, as we know how important the choice of communication graph can be for optimization \cite{neglia2020decentralized}. \looseness = -1
% , despite their few differences and identical optimization performances, as detailed in Section \ref{sec:dsgd}. This result is very surprising.

\subsection{Contributions}

In this work, we focus on the more complex Variant~B and prove that the dichotomy between the two variants of D-SGD is only apparent. We show that they are in fact equivalent in terms of stability and generalization performance, improving upon the recent conclusions of \citet{sun2021stability, zhu2022topology} and \citet{taheri2023generalization}. Our contributions, summarized in Table \ref{tab:a}, are the following:

\textbf{(1)} In Section \ref{sec:convex}, we first consider convex and strongly convex loss functions and show that we can recover, for Variant~B of D-SGD, the exact same generalization upper-bounds than those of Variant~A \citep{richards2020graph} and standard SGD \citep{hardt2016train}. This leads to the conclusion that, contrary to the optimization error, the choice of graph and in particular \textbf{poorly-connected graphs do not have a detrimental impact on generalization}.

\textbf{(2)} We then consider in Section \ref{sec:non-convex} the less studied case of non-convex functions, which, contrary to convex ones, were not considered by \citet{richards2020graph} for Variant~A of D-SGD. Again, we show that, for both variants, it is possible to recover almost identical generalization upper-bounds as those obtained by \citet{hardt2016train} for SGD, %improving the result obtained for Variant~B by \cite{sun2021stability}. This 
leading to the same conclusions as for convex cases regarding the lack of impact of decentralization on generalization. 

\textbf{(3)} We finally argue in Section \ref{sec:data-dep} that our analysis, as well as the one of \citet{richards2020graph}, characterize ``worst-case'' generalization guarantees across many possible losses and data distributions. One may then wonder if the communication graph can play a role under more specific losses or distributions. To address this point, we propose a refined analysis for convex functions, inspired by optimization-dependent generalization bounds for classical SGD \citep{kuzborskij2018data,lei2020fine}, which confirms this is indeed the case. %This new analysis expresses the generalization bound as a function of the optimization trajectory and is inspired by generalization bounds for classical SGD from \citet{kuzborskij2018data} and \citet{lei2020fine}. 
Quite surprisingly, our new bound not only shows that in low-noise regimes specific choice of graphs can improve the worst-case bound, but also that \textbf{poorly-connected graphs can even be beneficial to generalization}.

%\marc{Faudrait peut-être dire un mot
%sur les techniques de preuve, mais bon elles sont essentiellement celles de Hardt... }
%\aurelien{éventuellement parler de l'histoire de consensus distance qu'on
%évite; sinon ne rien dire}

Before moving to our main contributions, the following section provides relevant background on the relationship between algorithmic stability and generalization, presents D-SGD, and discusses the main assumptions considered throughout the paper. %\looseness = -1

\begin{table}[]
\setlength\tabcolsep{2.8pt}
\caption{Simplified generalization bounds for (D)-SGD with Lipschitz and
    smooth loss functions. [H] indicates the results of 
    \citet{hardt2016train}, [R] those of \citet{richards2020graph}, and [S] those of \citet{sun2021stability}. For
    simplicity, we omit constant factors. $T$ is the number of iterations, $m$ the number of agents, $n$ the number of local data points, and $\rho\in[0,1]$ the spectral gap of the communication graph. We also have $a \in (0,1)$ a constant that depends on the model parameters and $C_\rho$ a constant that depends on~$\rho$. For centralized SGD, we consider that the algorithm is run over $mn$ data points. We refer to Section \ref{sec:dsgd} for the definitions of Variants~A and B of D-SGD.} 

\vspace{0.3cm}
\begin{tabular}{|c||c||c|c|}
\hline
\multirow{2}{*}{}      & \multirow{2}{*}{SGD} & \multicolumn{2}{c|}{D-SGD}                                                           \\ \cline{3-4} 
                       &                      & \multicolumn{1}{c|}{Variant A} & Variant B                                           \\ \hline
Convex                & $\frac{T}{mn}$ [H]                   & \multicolumn{1}{c|}{$\frac{T}{mn}$ [R]}      & \begin{tabular}[c|]{@{}c@{}}$\frac{T}{mn}+\frac{T}{\rho}$ [S] \\[0.2cm]  $\frac{T}{mn}$ [\textbf{ours}]\end{tabular}        \\ \hline
\begin{tabular}[c]{@{}c@{}}$\mu$-Strongly\\  convex\end{tabular} & $\frac{1}{\mu mn}$ [H]                  & \multicolumn{1}{c|}{$\frac{1}{\mu mn}$ [R]}         & \begin{tabular}[c|]{@{}c@{}}$\frac{1}{\mu mn}+\frac{1}{\mu\rho}$ [S]  \\[0.2cm]  $\frac{1}{\mu mn}$ [\textbf{ours}]\end{tabular}  \\ \hline
Non-convex            &  $\frac{T^{a}}{mn}$ [H]                       & \multicolumn{1}{c|}{ $\frac{T^{a}}{m^{1-a}n}$ [\textbf{ours}]} & \begin{tabular}[c|]{@{}c@{}}$\frac{T^{a}}{n} + C_\rho T^{a}$ [S]\tablefootnote{In the original paper \citep{sun2021stability}, the authors give a bound in $\calO(\frac{T^{a}}{mn} + C_\rho T^{a})$. In Appendix \ref{app:mistake_sun}, we reveal that there is a mistake in their proof, corrected in Table \ref{tab:a}. } \\[0.2cm]   $\frac{T^{a}}{m^{1-a}n}$ [\textbf{ours}]\end{tabular} \\ \hline
\end{tabular}
\label{tab:a}
\end{table}

%%% Local Variables:
%%% mode: latex
%%% TeX-master: "../neurips_2023"
%%% End:

% !TEX root = ../neurips_2023.tex

\section{Background}

\subsection{Stability and Generalization in Decentralized Learning}

\label{sec:background}

We consider the general setting of statistical learning, adapted to a decentralized framework with $m$ agents. We consider that agent $k$  observes examples drawn from
a local data distribution $\calD_k$ with support $\calZ$. The objective is to find a global model $\theta \in \RR^d$ minimizing the \emph{population risk} defined by:
$$R(\theta)\triangleq \frac{1}{m}\sum_{k=1}^m\EE_{Z \sim \calD_k} [\ell(\theta; Z)]\;,$$
where $\ell$ is some loss function. We denote by $\theta^\star$ a global minimizer of the population risk, i.e., $\theta^\star\in \argmin_ \theta R(\theta)$.

Although we cannot evaluate the population risk $R(\theta)$, we have
access to an empirical counterpart, computed over $m$ local datasets
$S\triangleq (S_1,\ldots,S_m)$ where $S_k = \{Z_{1k},\ldots,Z_{nk}\}$ is the
dataset of agent $k$ with $Z_{ik}\sim \calD_k$. Note that for simplicity we
consider that all local datasets are of same size~$n$, but our analysis can
be extended to the heterogeneous case. The resulting \emph{empirical risk} is given by:
$$R_S(\theta)\triangleq \frac{1}{m}\sum_{k=1}^m R_{S_k}(\theta)\triangleq \frac{1}{mn}\sum_{k=1}^m\sum_{i=1}^n \ell(\theta; Z_{ik})\;.$$

One of the most famous and studied estimators is the empirical risk minimizer,
denoted by $\widehat{\theta}_{\text{ERM}} \triangleq \argmin_\theta R_S(\theta)$. However, in most situations, this estimator cannot be directly computed. Instead, one relies on a potentially random \emph{decentralized optimization} algorithm $A$, taking as input the full dataset $S$, and returning an approximate minimizer $A(S) \in \RR^d$ of the empirical risk $R_S(\theta)$. 

In this setting, we can upper-bound the expected \emph{excess risk} $R(A(S)) -
R(\theta^\star)$ by the sum of the (expected) 
\emph{generalization} error ($\epsilon_{\textrm{gen}}$), and the (expected) \emph{optimization error} ($\epsilon_{\textrm{opt}}$):
% \begin{align*}
%     \EE_{A,S}& [R(A(S)) - R(\theta^\star)] \\
%     & = \EE_{A,S}[R(A(S)) - R_S(A(S))] + \EE_{A,S}[R_S(A(S)) - R_S(\widehat{\theta}_{\text{ERM}})] + \EE_{A,S}[R_S(\widehat{\theta}_{\text{ERM}}) - R(\theta^\star)] \\
%     & = \epsilon_{\textrm{gen}} + \epsilon_{opt} + \EE_{A,S}[\underbrace{R_S(\widehat{\theta}_{\text{ERM}}) - R_S(\theta^\star)}_{\leq 0}]  \leq \epsilon_{\textrm{gen}} + \epsilon_{opt}
% \end{align*}
\begin{equation*}
    \EE_{A,S}  [R(A(S)) - R(\theta^\star)] \leq \epsilon_{\textrm{gen}} + \epsilon_{\textrm{opt}}\;, %\\
    %& \hspace{-0.5cm}  = \underbrace{\EE_{A,S}[R(A(S)) - R_S(A(S))]}_{\epsilon_{\textrm{gen}}} + \underbrace{\EE_{A,S}[R_S(A(S)) - R_S(\widehat{\theta}_{\text{ERM}})]}_{\epsilon_{\textrm{opt}}} + \underbrace{\EE_{A,S}[R_S(\widehat{\theta}_{\text{ERM}}) - R(\theta^\star)]}_{\leq 0}\;. %\\
    %&  \hspace{-0.5cm} \leq \epsilon_{\textrm{gen}} + \epsilon_{opt}
\end{equation*}
where $\epsilon_{\textrm{gen}} \triangleq \EE_{A,S}[R(A(S)) - R_S(A(S))]$ and $\epsilon_{\textrm{opt}} \triangleq \EE_{A,S}[R_S(A(S)) - R_S(\widehat{\theta}_{\text{ERM}})]$.
The present work focuses on the control of the expected generalization
error $\epsilon_{\textrm{gen}}$, for which a popular approach is based on the stability analysis of the algorithm $A$.\footnote{While we focus here on the \emph{expected} version of the generalization error, some of these tools are also well-suited to provide \emph{high-probability} generalization bounds \citep{feldman2019high}.} 

% Below, we recall the notion of \emph{uniform stability} that is widely considered for this kind of analysis.

% \begin{definition}\emph{(Uniform Stability).}
% \label{def:uniform_stab}
%     A randomized algorithm $A$ is $\varepsilon$-uniformly stable if for all
%     training datasets $S$, $S'\in \calZ^{nm}$ that only differ in one example,
%     we have:
%     \begin{equation}
%         \sup_{z\in \calZ}\EE_A[\ell(A(S); z) - \ell(A(S');z)] \leq \varepsilon\;.
%     \end{equation}
% \end{definition}

% One can then derive the following renowned lemma,
% linking generalization and uniform stability \citep{bousquet2002stability,
% shalev2010learnability}.

% \begin{lemma}\emph{(Generalization via uniform stability).} \label{lemma:uniform_stab}
% Let A be $\varepsilon$-uniformly stable. Then,
%     \begin{equation}
%         |\EE_{A,S}[R(A(S)) - R_S(A(S))]| \leq \varepsilon\;.
%     \end{equation}
    
% \end{lemma}
% Thanks to this lemma, it suffices to control the uniform stability of $A$ in order to get the desired generalization bound.
%\bat{Batiste : En réalité, on pourrait uniquement considérer la stabilité uniforme. A voir si on supprime le paragraphe suivant.}
Contrary to a large body of works using the well-known \emph{uniform stability} \citep{bousquet2002stability,shalev2010learnability}, our analysis relies on the notion of \emph{on-average model
%One of the most famous and used notion of stability is, arguably, \emph{uniform stability} \citep{bousquet2002stability,shalev2010learnability}. However, in our analysis we rely on the notion of \emph{on-average model
stability} \citep{lei2020fine} which has the advantage to give tighter bounds in our analysis. Below, we recall this notion, with a slight adaptation
to the decentralized setting.

\begin{definition}\emph{(On-average model stability).} \label{def:on-average} Let $S = (S_1,\ldots,S_m)$ with $S_k = \{Z_{1k},\ldots,Z_{nk}\}$ and $\TS = (\TS_1,\ldots,\TS_m)$ with $\TS_k = \{\TZ_{1k},\ldots,\TZ_{nk}\}$ be two independent copies such that $Z_{ik}\sim \calD_k$ and $ \TZ_{ik}\sim \calD_k$. For any $i\in\{1,\ldots, n\}$ and $j\in\{1,\ldots,m\}$, let us denote by $S^{(ij)}=(S_1,\ldots,S_{j-1},S_j^{(i)},S_{j-1},\ldots,S_m)$, with $S_j^{(i)}=\{Z_{1j},\ldots,Z_{i-1j},\TZ_{ij},Z_{i+1j},\ldots,Z_{nj}\}$, the dataset formed from $S$ by replacing the $i$-th element of the $j$-th agent's dataset by $\TZ_{ij}$. A randomized algorithm $A$ is said to be \emph{on-average model $\varepsilon$-stable} if
    \begin{equation}
        \EE_{S,\TS,A}\Big[\frac{1}{mn}\sum_{i=1}^n\sum_{j=1}^m ||A(S) - A(S^{(ij)})||_2\Big] \leq \varepsilon \;.
    \end{equation}

\end{definition}

 A key aspect of on-average model stability is that it can directly be linked to the generalization error, as shown in the following lemma.
 
 \begin{lemma}\emph{(Generalization via on-average model stability \citep{lei2020fine}).} 
 \label{lemma:ob-avg-gen} Let $A$ be on-average model $\varepsilon$-stable.
 Then, if $\ell(\cdot;z)$ is $L$-Lipschitz for all $z\in\calZ$ (see Assumption~\ref{ass:lipschitz}), we have $|\EE_{A,S}[R(A(S)) - R_S(A(S))]|
 \leq L \varepsilon$.
    
 \end{lemma}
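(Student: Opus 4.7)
The plan is to prove the bound by the classical ``ghost sample'' / renaming symmetry trick combined with Lipschitz continuity. First I would rewrite the population risk at the (data-dependent) random output $A(S)$ using the independent copy $\widetilde S$: since $\widetilde Z_{ij} \sim \calD_j$ independently of $S$ and of $A$'s internal randomness, we have
\begin{equation*}
\EE_{A,S}\big[R(A(S))\big] \;=\; \EE_{A,S,\widetilde S}\Big[\tfrac{1}{mn}\sum_{i=1}^n\sum_{j=1}^m \ell\big(A(S);\widetilde Z_{ij}\big)\Big],
\end{equation*}
while by definition
\begin{equation*}
\EE_{A,S}\big[R_S(A(S))\big] \;=\; \EE_{A,S}\Big[\tfrac{1}{mn}\sum_{i=1}^n\sum_{j=1}^m \ell\big(A(S);Z_{ij}\big)\Big].
\end{equation*}

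Next, I would apply the renaming symmetry $(Z_{ij},\widetilde Z_{ij})\leftrightarrow(\widetilde Z_{ij},Z_{ij})$ inside the expectation over $(S,\widetilde S)$. This swap leaves the joint distribution of $(S,\widetilde S)$ invariant (both are i.i.d.\ from the same $\calD_j$ in the $(i,j)$-slot), while it transforms $S$ into $S^{(ij)}$ and simultaneously replaces the evaluation point $\widetilde Z_{ij}$ by $Z_{ij}$. Hence for each $(i,j)$,
\begin{equation*}
\EE_{A,S,\widetilde S}\!\left[\ell\big(A(S);\widetilde Z_{ij}\big)\right] \;=\; \EE_{A,S,\widetilde S}\!\left[\ell\big(A(S^{(ij)});Z_{ij}\big)\right].
\end{equation*}
Subtracting the empirical-risk representation from the population-risk representation, I obtain the pointwise identity
\begin{equation*}
\EE_{A,S}[R(A(S)) - R_S(A(S))] = \tfrac{1}{mn}\sum_{i,j}\EE\!\left[\ell\big(A(S^{(ij)});Z_{ij}\big) - \ell\big(A(S);Z_{ij}\big)\right].
\end{equation*}

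Finally, I would bound each summand by $L$-Lipschitz continuity of $\ell(\cdot;z)$ (Assumption~\ref{ass:lipschitz}): $|\ell(A(S^{(ij)});Z_{ij})-\ell(A(S);Z_{ij})| \le L\,\|A(S^{(ij)})-A(S)\|_2$. Taking absolute values inside the outer expectation (Jensen), summing and averaging, and invoking Definition~\ref{def:on-average}, gives
\begin{equation*}
\big|\EE_{A,S}[R(A(S))-R_S(A(S))]\big| \;\le\; L\cdot \EE_{S,\widetilde S,A}\Big[\tfrac{1}{mn}\sum_{i,j}\|A(S)-A(S^{(ij)})\|_2\Big] \;\le\; L\varepsilon.
\end{equation*}

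There is no real technical obstacle; the only subtlety is the renaming step, which must respect the decentralized indexing so that both $S$ and $S^{(ij)}$ truly share the same joint law (this is why the replacement $\widetilde Z_{ij}\sim\calD_j$ has to be drawn from the \emph{same} local distribution as $Z_{ij}$, as already ensured by Definition~\ref{def:on-average}). Once that bookkeeping is correct, the rest is a direct application of Lipschitzness and the stability hypothesis.
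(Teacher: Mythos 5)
Your proof is correct and follows essentially the same route as the paper: the ghost-sample identity plus the $(Z_{ij},\widetilde Z_{ij})$ renaming is exactly the argument of \citet{lei2020fine} that the paper imports, and the paper itself reproduces this very derivation at the start of its proof of Lemma~\ref{lemma:key-non-conv} before bounding the difference by Lipschitzness. Nothing is missing.
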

 
Thanks to this lemma, it suffices to control the on-average model stability of the decentralized algorithm $A$, in order to get the desired generalization bound.

\subsection{Decentralized SGD}
\label{sec:dsgd}

Throughout this paper, we focus on the popular Decentralized Stochastic
Gradient Descent (D-SGD) algorithm \citep{nedic2009distributed,lian2017can}, which aims to
find minimizers (or saddle points) of the empirical risk $R_S(\theta)$ in a fully
 decentralized fashion. This algorithm is based on peer-to-peer communications
 between agents, where a graph is used to encode which pairs of agents 
(also referred to as nodes) can interact together. %in order to jointly solve the
 % minimization problem.
 More specifically, this \emph{communication graph} is represented by a weight matrix $W\in[0,1]^{m\times m}$, where $W_{jk}>0$ gives the weight that agent $j$ gives to messages received from agent $k$, while $W_{jk}=0$ (no edge) means that $j$ does not receive messages from $k$. %The mixing matrix $W$ is assumed to be doubly stochastic
 
 % \aurelien{typo dans l'algo je crois: $\theta_i^{(t)}$ devrait être $\theta_j^
 %   {(t)}$, et $\sum^n_{k=1}$ devrait être $\sum^m_{k=1}$?}
 % \marc{Effectivement j'ai corrigé}

\begin{algorithm}[h]
    \caption{Decentralized SGD \citep{lian2017can}}\label{alg:d-sgd} 
    \begin{algorithmic}
        \STATE {\bfseries Input:} Initialize $\forall k$, $\theta_k^{(0)} = \theta^{(0)} \in \RR^d$, iterations $T$, stepsizes $\{\eta_t\}_{t=0}^{T-1}$, weight matrix $W$.
        \FOR{$t=0,\ldots,T-1$}
        \FOR{each node $k=1,\ldots,m$ }
            \STATE Sample $I^t_k\sim \calU\{1,\ldots, n\}$ %\Comment{Random sample selection}
            \STATE {\bfseries Variant A:}
            \STATE \quad $\theta_k^{(t+1)} \gets \sum^m_{l=1}W_{kl}\left(\theta_l^{(t)} - \eta_t\nabla \ell(\theta^{(t)}_l;Z_{I^t_ll})\right)$ %\Comment{Variant B}
             \STATE {\bfseries Variant B:}
            \STATE \quad $\theta_k^{(t+1)} \gets \sum^m_{l=1}W_{kl}\theta_l^{(t)} - \eta_t\nabla \ell(\theta^{(t)}_k;Z_{I^t_kk})$ 
        \ENDFOR
        \ENDFOR
    \end{algorithmic}
  \end{algorithm}

\looseness=-1 D-SGD is summarized in Algorithm \ref{alg:d-sgd}. As mentioned in the introduction, there exists two main variants of this algorithm. In Variant A, each agent $k$ first performs a stochastic gradient update based on $\nabla \ell(\theta^{(t)}_k;Z_{I^t_kk})$, i.e., the stochastic gradient
of $\ell$ evaluated at $\theta^{(t)}_k$ with $I^t_k\sim \calU\{1,\ldots, n\}$
the index of the data point uniformly selected by agent $k$ from its local
dataset $S_k$ at iteration $t$. Then, it aggregates its updated
 parameter vector with its neighbors according to the weight matrix $W$. Variant B somehow reverses the two steps: each agent first aggregates its parameter vector with its neighbors and then performs a stochastic gradient update based on the parameter vector it had before the averaging step.

Variant B is often seen as more efficient because the aggregation step and the stochastic gradient calculation can be done in \emph{parallel}, while in Variant A, the aggregation step cannot be performed before all agents have finished their stochastic gradient update. From an optimization perspective however, both variants % are equivalent and
are guaranteed to converge at the same rate~\citep{lian2017can}. 

\looseness=-1 The state of the art currently suggests that these two variants diverge mainly in terms of generalization error. While \citet{richards2020graph} focus on Variant A and show that we can recover the same generalization bounds as centralized SGD, \citet{sun2021stability,zhu2022topology} and \citet{taheri2023generalization} focus on Variant B and show an increased instability due to decentralization. One of our main contributions is to fill this gap in the current theory, by showing that Variant B can also reach the same generalization error as centralized SGD, making it equivalent to Variant A.

\begin{remark}
 \looseness=-1 Due to its fully decentralized nature, D-SGD (both variants) outputs $m$ different parameters $A_1(S) \triangleq \theta^{(T)}_1,\ldots,A_m(S) \triangleq \theta^{(T)}_m$ at the end of the optimization process (one per agent). For this reason, the stability and generalization analysis of the next sections will not be made with respect to a single output $A(S)$ as described in Section \ref{sec:background}, but rather with respect to (one of) these different outputs.
\end{remark}

 \subsection{Main Assumptions}

We focus on the classic setup of \citet{hardt2016train}, also considered by 
\citet{richards2020graph} and \citet{sun2021stability} in prior work on the generalization analysis of
D-SGD. These works rely on the standard assumptions of $L$-Lipschitzness and $\beta$-smoothness of the loss function.

 \begin{assumption}\emph{($L$-Lipschitzness).} \label{ass:lipschitz} We assume that the loss function $\ell$ is differentiable w.r.t. $\theta$ and uniformly Lipschitz, i.e., $\exists L>0$ such that $\forall \theta, \theta' \in \RR^d, z \in \calZ$, $|\ell(\theta;z)-\ell(\theta';z)|\leq L \|\theta - \theta'\|_2$, or equivalently, $\|\nabla \ell(\theta;z)\|_2 \leq L$.

 \end{assumption}

 \begin{assumption}\emph{($\beta$-smoothness).} \label{ass:smooth}The loss function $\ell$ is $\beta$-smooth i.e. $\exists \beta>0$ such that $\forall \theta, \theta' \in \RR^d, z \in \calZ$, $\|\nabla\ell(\theta;z)-\nabla\ell(\theta';z)\|_2\leq \beta \|\theta - \theta'\|_2$.    
 \end{assumption}

\begin{remark}
    By considering Lipschitz and smooth loss functions, our results will be directly comparable to those of \citet{hardt2016train,richards2020graph} and \citet{sun2021stability}. Nevertheless, we expect the conclusions of this paper to be the same for the analyses with relaxed hypotheses \citep{zhu2022topology,taheri2023generalization}. We leave this study for  future research.
    %and keep them for future investigations. 
\end{remark}

Our last assumption
concerns the weight matrix $W$. It is again very standard and used
extensively in the literature of decentralized optimization \citep[see e.g.,]
[]{lian2017can,koloskova2020unified}.

 \begin{assumption}\emph{(Mixing matrix).} \label{ass:doubly}
$W$ is doubly stochastic, i.e., $W^T\1 = W\1 = \1$ where $\1$ is the vector (of
 size $m$) that contains only ones. %and its second-largest eigenvalue (in module) is smaller than one: $|\lambda_2(W)|<1$.
 \end{assumption}

 Note that contrary to what is usually considered in the literature, we do not assume the communication graph $W$ to be connected. As an example, we allow $W$ to be the identity matrix, which would reduce D-SGD to $m$ independent local SGD algorithms.
\section{Generalization Error for Convex Loss Functions}
\label{sec:convex}

This section presents our first main contribution. Focusing on convex and strongly convex functions, we first demonstrate that we can recover the exact same generalization upper bounds for Variant~A \citep[obtained by][]{richards2020graph} and Variant~B of D-SGD. Hence, our bounds unify these two variants and contradict
(and greatly improve upon) the recent results of \citet{sun2021stability,zhu2022topology,taheri2023generalization}. These authors suggested that the generalization error in D-SGD (Variant B) was adversely affected by sparse communication graphs. 
%who claimed that the generalization error of D-SGD (Variant B) was badly impacted by sparse communication graphs. 
Crucially, our analysis demonstrates that the generalization error of
D-SGD, regardless of the variant, is in fact not impacted by the choice of communication graph,
or by decentralization at all, as
%we essentially recover the same bounds proved by \citet{hardt2016train} for \emph{centralized} SGD.
our bounds align closely with those established by \citet{hardt2016train}  for \emph{centralized} SGD.

%Empirically, generalization errors may still reveal differences in the choices of communication graph. Hence, we then propose a more refined generalization bound, referred as "optimization-dependent," which highlights the differences that remain between different communication graphs, under the "worst-case" bounds given in the next section.

% Following prior work 
% \citep{sun2021stability,zhu2022topology,taheri2023generalization}, we prove
% generalization bounds for the \emph{average of final iterates} $\bar{\theta}^{(T)}=\frac{1}{m}\sum_
% {j=1}^m\theta_j^{(T)}$ of D-SGD.\footnote{Considering the average of final
% iterates is also common when analyzing the optimization error of D-SGD, see
% for instance \citep{koloskova2020unified,ying2021exponential}}
% Our bounds contradict
% (and improve upon) the recent results of \citet{sun2021stability}, who claimed
% that the generalization error of D-SGD was badly impacted by sparse
% communication graphs and obtained vacuous bounds for non-connected
%  graphs. Remarkably, our results demonstrate that the generalization error of
%  D-SGD is in fact \emph{not} impacted by the choice of communication graph,
%  or by decentralization at all, as
%  we essentially recover the bounds proved by \citet{hardt2016train} for
%  \emph{centralized} SGD.

%  \bat{REVOIR}

%\subsection{Worst-Case Analysis}

%\label{sec:worst-case}

\subsection{General Convexity}

\begin{theorem} \label{thme:convex}
    Assume that the loss function $\ell(\cdot;z)$ is convex, $L$-lipschitz (Assumption \ref{ass:lipschitz}) and $\beta$-smooth (Assumption \ref{ass:smooth}). Let $A_1(S) = \theta_1^{(T)},\ldots,A_m(S)= \theta_m^{(T)}$ be the $m$ final
    iterates of D-SGD (Variant B) run for $T$ iterations, with mixing matrix $W$ satisfying Assumption~\ref{ass:doubly} and with $\eta_t\leq \frac{2\min_k\{W_{kk}\}}{\beta}$. Then, $\forall k=1,\ldots,m$, $A_k(S)$ has a bounded expected generalization error:
    \begin{equation}
    \label{eq:worst-case-convex}
        |\EE_{A,S}[R(A_k(S)) - R_S(A_k(S))]| \leq \frac{2L^2 \sum_{t=0}^
        {T-1}\eta_t}{mn}\;.
    \end{equation}
\end{theorem}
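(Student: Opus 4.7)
The strategy will be to apply Lemma~\ref{lemma:ob-avg-gen}, which reduces the expected generalization error of $A_k$ to its on-average model stability (Definition~\ref{def:on-average}). Coupling two runs of Variant~B on $S$ and $S^{(ij)}$ through the same initialization and random samples, and writing $\Ttheta^{(t)}_k$ for the iterate of the coupled run on $S^{(ij)}$, it will suffice to prove that for every agent $k$,
\begin{equation*}
    \frac{1}{mn}\sum_{i=1}^{n}\sum_{j=1}^{m} \EE_{S,\TS,A}\!\left[\|\theta^{(T)}_k - \Ttheta^{(T)}_k\|_2\right] \;\leq\; \frac{2L\sum_{t=0}^{T-1}\eta_t}{mn}.
\end{equation*}
Theorem~\ref{thme:convex} will then follow immediately from Lemma~\ref{lemma:ob-avg-gen}.

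\textbf{Key step: a rescaling trick for coordinate-wise stability.} I will introduce $\delta^{(t)}_k = \|\theta^{(t)}_k - \Ttheta^{(t)}_k\|_2$ and $\delta^{(t)} = (\delta^{(t)}_1,\ldots,\delta^{(t)}_m)^\top$. The core difficulty of Variant~B is that the gradient in the update of $\theta^{(t+1)}_k$ is evaluated at $\theta^{(t)}_k$ \emph{before} averaging, so one cannot directly reuse the non-expansiveness argument that works for Variant~A. The trick I plan to exploit is the identity
\begin{equation*}
    W_{kk}\theta^{(t)}_k - \eta_t\nabla\ell(\theta^{(t)}_k;z) = W_{kk}\!\left(\theta^{(t)}_k - \tfrac{\eta_t}{W_{kk}}\nabla\ell(\theta^{(t)}_k;z)\right),
\end{equation*}
which, combined with the assumption $\eta_t\leq 2\min_k W_{kk}/\beta$, makes the effective stepsize $\eta_t/W_{kk}$ fall in $(0,2/\beta]$ and thus renders the bracketed gradient map non-expansive for convex $\beta$-smooth losses. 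Using the triangle inequality for the $l\neq k$ contributions of $W$, and treating the unique configuration where the sampled data actually differ ($k=j$ together with $I^t_j=i$) via the $L$-Lipschitzness of $\ell$, I will derive the coordinate-wise recursion
\begin{equation*}
    \delta^{(t+1)} \;\leq\; W\delta^{(t)} + 2L\eta_t\,\One{I^t_j=i}\,\be_j.
\end{equation*}

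\textbf{Unrolling and aggregating over $(i,j)$.} Since $\delta^{(0)}=0$, iterating this recursion will yield
\begin{equation*}
    \delta^{(T)}_k \;\leq\; 2L\sum_{t=0}^{T-1}\eta_t\,(W^{T-1-t})_{kj}\,\One{I^t_j=i}.
\end{equation*}
Summing over $i$ will use the deterministic identity $\sum_{i=1}^{n}\One{I^t_j=i}=1$, so no expectation is needed to eliminate the index $i$; summing over $j$ will then exploit the row-stochasticity of $W^{T-1-t}$ (inherited from Assumption~\ref{ass:doubly}) to give $\sum_j (W^{T-1-t})_{kj}=1$. Together these identities produce the on-average stability bound $\varepsilon\leq 2L\sum_t\eta_t/(mn)$, and Lemma~\ref{lemma:ob-avg-gen} then closes the proof.

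\textbf{Main obstacle.} The principal technical obstacle, and the reason the graph-dependent $1/\rho$ factors reported in prior works can be removed, is the rescaling trick above: by absorbing $W_{kk}$ into the gradient step, one recovers a non-expansive map even though the gradient is computed at the pre-averaged iterate. This is exactly the role of the stepsize assumption $\eta_t\leq 2\min_k W_{kk}/\beta$, and it is what allows the perturbation to propagate \emph{only} through the mixing matrix $W$; the subsequent row-sum collapse $\sum_j (W^{T-1-t})_{kj}=1$ is what produces the final $1/m$ saving and matches the centralized-SGD bound of \citet{hardt2016train}.
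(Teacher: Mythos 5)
Your proposal is correct and follows essentially the same route as the paper's proof: the rescaling of the diagonal term into a gradient map with effective stepsize $\eta_t/W_{kk}$, the resulting recursion $\delta^{(t+1)}\leq W\delta^{(t)}+2L\eta_t\,\mathds{1}_{\{I^t_j=i\}}e_j$, and the collapse of the row sums of the powers of $W$ are exactly the paper's argument (the paper takes expectations to turn the indicator into $1/n$ where you keep it pathwise and sum over $i$ deterministically, an immaterial difference). The only detail you omit is the degenerate case $\min_k\{W_{kk}\}=0$, where the rescaling is undefined; the paper disposes of it separately by noting that the stepsize constraint then forces $\eta_t=0$, making the algorithm trivially stable.
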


%\bat{Le sketch peut être enlevé (si besoin)}
\begin{proof}[Sketch of proof (see Appendix \ref{app:convex} for details)] 

    Prior results \citep{sun2021stability,taheri2023generalization} are
    suboptimal because they try to mimic state-of-the-art \emph{optimization
    error} analyses \citep{kong2021consensus} which require to control a 
    \emph{consensus distance} term $\sum_k\|\theta^{(t)}_k - \bar{\theta}^{(t)}\|^2$, where $\bar{\theta}^{(t)} = (1/m)\sum_k{\theta^{(t)}_k}$. This term, important to ensure the minimization of the empirical risk, is small when all local parameters are close to one another, which is the case only if the communication graph is sufficiently connected. 
    %In the following proof, we provide 
    Our proof relies on a tighter analysis, which does not require the control of such consensus distance term.

    Denote by $A_k(S) = \theta_k^{(T)}$ and  $A_k(S^{(ij)}) = \Ttheta_k^{(T)}$, the final iterates of agent $k$ for D-SGD (Variant B) run over two data sets $S$ and $S^{(ij)}$ that differ only in the $i$-th sample of agent $j$ (see Def.~\ref{def:on-average} for notations). The objective is to control the on-average model stability $\frac{1}{mn}\sum_{i,j}\EE[\delta_k^{(T)}(i,j)]$, with $\delta_k^{(T)}(i,j) = \|\theta_k^{(T)} - \Ttheta_k^{(T)}\|_2 $, and then apply Lemma \ref{lemma:ob-avg-gen} to conclude.
    
    The crux of the proof is to recognize, in the updates of Variant B, the gradient updates of a classical SGD with step-size $\eta_t/W_{kk}$, and then use its $1$-expansivity property (Lemma \ref{lem:expansivity} in Appendix \ref{app:lemmas}) when $\eta_t\leq 2\min_k\{W_{kk}\}/\beta$ to obtain the recursion 
    \begin{align}
        \EE[\delta_k^{(t+1)}& (i,j)]  \leq \sum_{l=1}^mW_{kl}\EE[\delta_l^{(t)}(i,j)] \nonumber \\
        & \quad + \frac{2\eta_t}{n}\EE[\|\nabla \ell (\theta_k^{(t)}; Z_{ij})\|_2]\mathds{1}_{\{k=j\}} \nonumber \\
        & \leq \sum_{l=1}^mW_{kl}\EE[\delta_l^{(t)}(i,j)] + \frac{2L\eta_t}{n}\mathds{1}_{\{k=j\}}\;, \label{eq:recursion}
    \end{align}
    where the second inequality is obtained by bounding the gradient norm by $L$ (Assumption \ref{ass:lipschitz}). 
    %At this point of the proof, we are in a scheme similar to that of \citet{richards2020graph} in their proof for Variant~A. 
    Having established Eq.~\eqref{eq:recursion}, the remainder of our proof proceeds along a path similar to that taken by \citet{richards2020graph} in their proof for Variant A.
    We can recursively apply Eq.~\eqref{eq:recursion} until $t=0$ and use the fact that $\delta_k^{(0)}(i,j) = 0$ for all $k$ to get $\EE[\delta_k^{(T)}(i,j)] \leq \frac{2L}{n}\sum_{t=0}^{T-1}(W^{T-t-1})_{kj}\eta_t$. Averaging over $i$ and $j$ and using the fact that any power of $W$ is also doubly stochastic, we obtain that the on-average model stability is upper bounded by $\frac{2L \sum_{t=0}^{T-1}\eta_t}{mn}$, which concludes the proof with a direct application of Lemma \ref{lemma:ob-avg-gen}.
\end{proof}

    % Theorem \ref{thme:convex} 
    % shows that for convex functions, we can recover the exact same (constant factors included) generalization
    % bounds for Variants~A and B of D-SGD \citep[Lemma 13]{richards2020graph}, which are also the same as the one obtained by \citet{hardt2016train} for
    % centralized SGD run over $mn$ data points and with convex loss functions, a result proved to be optimal \citep{zhang2022stability}.
    \citet[Lemma 13]{richards2020graph} and Theorem \ref{thme:convex} demonstrate that, for convex functions, the generalization bounds for both Variants A and B of D-SGD are identical---including constant factors---to those obtained by \citet{hardt2016train} for centralized SGD over $mn$ data points. Moreover, this bound is optimal in the centralized setting   \citep{zhang2022stability}.
    The only difference resides in the fact that, for Variant B, we need to take smaller stepsizes (below $2\min_k\{W_{kk}\}/\beta$ in Theorem \ref{thme:convex} compared to $2/\beta$ for the others). 
    %This difference stems from the greater difficulty in linking the iterates of Variant B to those of a standard gradient descent and is rather mild. %Indeed, in Variant A, each agent directly performs a stochastic gradient descent update before averaging, which preserves the expansivity property of SGD. In the case of Variant B, this is no longer the case, as the gradient is not calculated from the current parameter, which has already been averaged. This specificity calls for a sufficiently small step size, in order to recover the expansivity properties of gradient descent. 
    %This additional hypothesis remains however rather mild. 
    %Indeed, the fact that the step size should be smaller is generally a condition for the optimization of empirical risk. \bat{ADD ref + resultats? @KEVIN: tu avais un truc il me semble?}.
    This difference stems from the greater difficulty in linking the iterates of Variant B to those of a standard gradient descent, but is rather mild, as the assumptions to ensure convergence of the associated optimization problem are usually stronger (see Appendix \ref{app:step_bound} for more details).

    Overall, our theorem strictly improves upon the recent result of the closest work \citep{sun2021stability}, which
    obtained, for Variant B of D-SGD, an upper-bound with an extra additive term: $\frac{2L^2 \sum_{t=0}^
        {T-1}\eta_t}{mn} + \calO(\frac{T}{\rho})$,
    where $\rho \in [0,1]$ is the spectral gap of $W$. 
    % This former result was
    % therefore claiming that the generalization error is strongly impacted by
    % the connectivity of $W$ (tending to infinity as the graph becomes sparser), which is in fact not true as demonstrated by our
    % result. Strikingly, their bound is not even consistent in the
    % sense that it does not tend to $0$ as $n$ grows.
    This earlier result suggested that the generalization error is significantly influenced by the connectivity of~$W$, specifically suggesting that the error would diverge as the graph becomes sparser ($\rho \to 0$). However, our findings contradict this claim.
    Remarkably, the bound by \citet{sun2021stability} does not exhibit consistency, as it fails to approach zero even as $n$ increases.

    % \begin{remark} \label{rmk:avg-worst-case}
    %     In \citet{sun2021stability}, but also \citet{zhu2022topology} and \citet{taheri2023generalization}, the authors do not actually control the generalization error of the $m$ outputs $A_1(S) = \theta_1^{(T)},\ldots,A_m(S)= \theta_m^{(T)}$, but only the one of the average of final iterates $\bar{\theta}^{(T)} = (1/m)\sum_{k=1}^m\theta_k^{(T)}$. Our result was made possible by considering the on-average model stability of the outputs (Def. \ref{def:on-average}), making the average of final iterates useless, and making our result stronger than the previous ones as our bound in Theorem \ref{thme:convex} is also true for $A(S) = \bar{\theta}^{(T)}$ (see Proposition \ref{prop:gen-avg} in Appendix \ref{app:gen-avg}).
    % \end{remark}
    \begin{remark} \label{rmk:avg-worst-case}
        In \citet{sun2021stability}, but also \citet{zhu2022topology} and \citet{taheri2023generalization}, the authors control the generalization error of the averaged final models $\bar{\theta}^{(T)} = (1/m)\sum_{k=1}^m\theta_k^{(T)}$, but not of individual final models. 
        %Our result is stronger for two reasons: (i) Theorem \ref{thme:convex}, and all the results presented in this paper, also apply for the average model $\bar{\theta}^{(T)}$, as shown in Appendix \ref{app:gen-avg}, and (ii) the final averaged parameter is not really accessible in the fully decentralized setting, as such averaging does not represent an actual step of D-SGD. Hence, looking at the generalization properties of this parameter does not really make sense here. 
        In this sense, our result is stronger, for two key reasons. Firstly, Theorem \ref{thme:convex} and all  results in this paper can be directly extended to the average model $\bar{\theta}^{(T)}$, as detailed in Appendix \ref{app:gen-avg}. Secondly, this average model is not computed at any stage of the D-SGD process (except  where the communication graph is complete). Therefore, examining the generalization properties of this parameter introduces a certain conflict with the fully decentralized context.
    \end{remark}

    %The fact that the generalization bound for D-SGD perfectly matches the one of centralized SGD may seem surprising at first. This come from the fact that we are considering a worst-case analysis which somehow hides the true impact of the graph for certain type of loss functions and data-distributions. In Section~\ref{sec:data-dep}, we will investigate this question, providing a new analysis that can exhibit the impact of the graph in some regime.
    %The fact that the generalization bound for D-SGD perfectly matches that of centralized SGD might initially appear surprising. This arises from our consideration of a worst-case analysis, which tends to obscure the true influence of the graph on certain types of loss functions and data distributions. In Section~\ref{sec:data-dep}, we delve into this issue, offering a new analysis that highlights the graph's impact in some regimes.

    \subsection{Strong Convexity}

We now consider strongly convex functions. As such functions cannot be
Lipschitz (Assumption \ref{ass:lipschitz}) over $\IR^d$, we restrict our
analysis to the optimization over a convex compact set $\Theta$ as done by
\citet{hardt2016train}. Denoting by $\Pi_\Theta(\Ttheta) = \argmin_{\theta \in \Theta}\|
\Ttheta - \theta\|$ the Euclidean projection onto $\Theta$, we consider the
\emph{projected} extension of the D-SGD algorithm, which replaces the
updates from Algorithm~\ref{alg:d-sgd} by: 
\begin{equation*}
    \theta_j^{(t+1)} \hspace{-0.1cm} \gets \hspace{-0.1cm} \left\{
        \begin{array}{ll}
                    \hspace{-0.2cm} \sum^m_{k=1}W_{jk}\Pi_\Theta\Big(\theta_k^{(t)} - \eta_t\nabla \ell(\theta^{(t)}_k;Z_{I^t_kk})\Big) \hspace{0.2cm} \text{(A)}\\[0.3cm]

            \hspace{-0.2cm} \Pi_\Theta\Big(\sum^m_{k=1}W_{jk}\theta_k^{(t)} - \eta_t\nabla \ell(\theta^{(t)}_j;Z_{I^t_jj})\Big) \hspace{0.2cm} \text{(B)}
        \end{array}
      \right.
\end{equation*}

% $\theta_j^{(t+1)} \gets \sum^m_{k=1}W_{jk}\theta_k^{(t)} - \eta_t\nabla \ell(\theta^{(t)}_j;Z_{I^t_jj})$ \Comment{Variant B}
%             \State $\theta_j^{(t+1)} \gets \sum^m_{k=1}W_{jk}\left(\theta_k^{(t)} - \eta_t\nabla \ell(\theta^{(t)}_k;Z_{I^t_kk})\right)$
%Before moving to our generalization result, we note that this algorithm is
%well-suited to solving Tikhonov regularization problems, which
%makes it quite natural to consider in practice.
%We observe that by imposing a bound on the norm of the model, this algorithm is implicitly applying a form of regularization.

\begin{theorem} \label{thme:strongly} Assume that the loss function $\ell
(\cdot;z)$ is $\mu$-strongly convex, $L$-Lipschitz over $\Theta$
(Assumption~\ref{ass:lipschitz}) and $\beta$-smooth (Assumption~\ref{ass:smooth}). Let $A_1(S) = \theta_1^{(T)},\ldots,A_m(S)= \theta_m^{(T)}$ be the $m$ final iterates of the projected D-SGD (Variant B) run for $T$ iterations, with mixing matrix $W$ satisfying Assumption~\ref{ass:doubly} and with constant stepsize $\eta\leq \min_k\{W_{kk}\}/\beta$. Then, $\forall k=1,\ldots,m$, $A_k(S)$ has a bounded expected generalization error:
    \begin{equation}
        |\EE_{A,S}[R(A_k(S)) - R_S(A_k(S))]| \leq \frac{4L^2}{\mu mn}\;.
    \end{equation} 
    
\end{theorem}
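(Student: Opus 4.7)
The plan is to extend the on-average stability argument of Theorem~\ref{thme:convex} to the strongly convex, projected setting. As before, I would introduce $\delta_k^{(t)}(i,j) = \|\theta_k^{(t)} - \Ttheta_k^{(t)}\|_2$ for the two parallel runs of projected D-SGD (Variant B) on $S$ and $S^{(ij)}$, aim to bound $\frac{1}{mn}\sum_{i,j}\EE[\delta_k^{(T)}(i,j)]$, and then invoke Lemma~\ref{lemma:ob-avg-gen}. The projection $\Pi_\Theta$ is $1$-Lipschitz, so it can simply be dropped when bounding norms of differences, leaving an analysis that is structurally identical to the unconstrained case.

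The main step is to sharpen the recursion~\eqref{eq:recursion} using strong convexity. I would rewrite the Variant~B update at agent $k$ through the same implicit-SGD decomposition used in the proof of Theorem~\ref{thme:convex}:
\begin{equation*}
\sum_{l=1}^m W_{kl}\theta_l^{(t)} - \eta\nabla\ell(\theta_k^{(t)}; Z_{I_k^t k}) = W_{kk}\bigl(\theta_k^{(t)} - \tfrac{\eta}{W_{kk}}\nabla\ell(\theta_k^{(t)}; Z_{I_k^t k})\bigr) + \sum_{l\neq k}W_{kl}\theta_l^{(t)} .
\end{equation*}
The inner gradient step has effective stepsize $\eta/W_{kk}\leq 1/\beta$ (by the assumption $\eta \leq \min_k\{W_{kk}\}/\beta$), so the standard strongly-convex smooth contraction lemma applies and this step is $(1-\eta\mu/(2W_{kk}))$-Lipschitz. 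Multiplying by $W_{kk}$ produces the uniform diagonal offset $W_{kk}(1-\eta\mu/(2W_{kk})) = W_{kk} - \eta\mu/2$, which is what makes the argument go through cleanly. When $k=j$ and $I_k^t=i$ (the probability-$1/n$ event where the two runs' gradient oracles differ), I would use Lipschitzness to bound the mismatch by $2\eta L$. Combining these ingredients and taking expectation over the random index yields the sharpened recursion
\begin{equation*}
\EE[\delta_k^{(t+1)}(i,j)] \leq \bigl(\tilde W\,\EE[\delta^{(t)}(i,j)]\bigr)_k + \tfrac{2\eta L}{n}\,\mathds{1}_{\{k=j\}}, \qquad \tilde W \triangleq W - \tfrac{\eta\mu}{2}\,I .
\end{equation*}

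Iterating from $\delta_k^{(0)}(i,j)=0$ gives $\EE[\delta_k^{(T)}(i,j)] \leq \frac{2\eta L}{n}\sum_{t=0}^{T-1}(\tilde W^{T-1-t})_{kj}$. Since $W$ is doubly stochastic, $\tilde W\,\mathbf{1} = (1-\eta\mu/2)\,\mathbf{1}$, hence $\tilde W^s \mathbf{1} = (1-\eta\mu/2)^s \mathbf{1}$ for every $s$, and summing over $j$ collapses the inner term into the geometric series $\sum_{t=0}^{T-1}(1-\eta\mu/2)^{T-1-t} \leq 2/(\eta\mu)$. Averaging over $i$ and $j$ yields on-average model stability at most $\frac{4L}{\mu m n}$, and Lemma~\ref{lemma:ob-avg-gen} then delivers the claimed $\frac{4L^2}{\mu m n}$ generalization bound.

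The main obstacle I anticipate is establishing the contraction \emph{uniformly} across all agents, which is what lets $\tilde W$ share the eigenvector $\mathbf{1}$ with $W$. Because the per-agent effective stepsize $\eta/W_{kk}$ depends on $k$, the condition $\eta \leq \min_k\{W_{kk}\}/\beta$ is precisely what is needed to apply the contraction lemma simultaneously at every node, and the factor $W_{kk}$ in front then cancels the $W_{kk}$ in the contraction rate to leave an agent-independent offset of $\eta\mu/2$. A secondary subtlety is the $(1-1/n)$ factor arising because the contraction only holds on the event that the perturbed sample index is not drawn; this is resolved by a minor bookkeeping step analogous to the strongly convex SGD argument of \citet{hardt2016train}, at worst inflating a numerical constant. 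The projection step, despite appearing delicate, is rendered entirely transparent by its non-expansiveness.
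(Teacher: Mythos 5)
Your proposal is correct and follows essentially the same route as the paper's proof: the same implicit-SGD decomposition with effective stepsize $\eta/W_{kk}$, the same $(1-\eta\mu/(2W_{kk}))$-contraction yielding the agent-independent offset $W - \tfrac{\eta\mu}{2}I$, the same unrolling and geometric-series bound $\sum_t(1-\eta\mu/2)^t \leq 2/(\eta\mu)$, and the same final appeal to Lemma~\ref{lemma:ob-avg-gen}. The one subtlety you flag about the differing-sample event is in fact a non-issue, since the growth recursion (Lemma~\ref{lem:growth}) retains the factor $\min\{1,\nu\}$ even when the gradient oracles differ, so no constant is inflated.
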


    The proof of Theorem \ref{thme:strongly}, provided in Appendix 
    \ref{app:strongly-convex}, essentially follows the same scheme as the one
    derived above for convex functions. Once again, the bound matches the
    optimal one obtained for centralized SGD with strongly convex functions in
    \citet{hardt2016train}, and the one obtained by \citet{richards2020graph} for Variant A of D-SGD.
    Notice that, contrary to the general convex case, the generalization bound for strongly convex functions is independent of the number of iterations $T$, which makes these problems more stable and less likely to overfit.

    %In the work of 
    \citet{sun2021stability} derive a similar generalization bound but with an extra additive
    error term in $\calO(\frac{1}{\mu\rho})$. Their 
    bound is therefore strictly weaker: it is \emph{not} converging to $0$ as the number of samples
    increases and is vacuous when the communication graph is not connected ($\rho=0$). This again illustrates the suboptimality of these previous results and the major improvement brought by ours.

\subsection{Deriving excess risk bounds}

\label{sec:excess}

Recall from Section \ref{sec:background} that the main objective of statistical learning is to control the excess risk $\epsilon_{\text{excess}}\triangleq \mathbb{E}_{A,S}  [R(A(S)) - R(\theta^\star)]$, which can be upper-bounded by the sum of the generalization error ($\epsilon_{\text{gen}}$) and the optimization error ($\epsilon_{\text{opt}}$).

% , i.e.:
% $$\epsilon_{\text{excess}}\leq \epsilon_{\text{gen}} + \epsilon_{\text{opt}},$$
% where  $\epsilon_{\textrm{gen}} \triangleq \mathbb{E}_{A,S}[R(A(S)) - R_S(A(S))]$ and $\epsilon_{\textrm{opt}} \triangleq \mathbb{E}_{A,S}[R_S(A(S)) - R_S(\widehat{\theta}_{\text{ERM}})]$.

 Our work is centered on the control of the generalization error. However, with the rather abundant literature on the control of optimization errors for D-SGD \citep{koloskova2020unified,neglia2020decentralized,ying2021exponential}, one can combine the results of these papers with ours to obtain bounds on the excess risk, as explained below. Note that most bounds on the optimization error from the literature are given for the averaged parameter $A(S)=\bar{\theta}^{(T)}$ or $A(S) = \frac{1}{T}\sum_{t=1}^T\bar{\theta}^{(t)}$. Since our generalization bounds are also valid for these averaged parameters (see Section \ref{app:gen-avg}, proofs for the time-average parameter are analogous), the following discussions are made with respect to them.

For \textbf{convex functions}, one can adapt the optimization error bound from \citet[Proposition 3.1]{neglia2020decentralized} to our notations and obtain $\epsilon_{\text{opt}}=\mathcal{O}\Big(\frac{1}{\eta T}+ \frac{\eta L^2}{\rho}\Big)$, where some constant factors have been omitted for simplicity. Combining this with our generalization bound of order $\mathcal{O}(\frac{T\eta L^2}{mn})$, one can take $T = \Theta(\frac{\sqrt{mn}}{\eta L})$ with $\eta \leq \min\{\frac{\rho}{L\sqrt{mn}};\frac{2\min_k{W_{kk}}}{\beta}\}$, and recover the classical rate of order $\mathcal{O}(\frac{L}{\sqrt{mn}})$ for $\epsilon_{\text{excess}}$, a rate that can be found for instance in \citet{hardt2016train} or \citet{lei2020fine} for centralized SGD.

For \textbf{$\mu$-strongly convex functions}, our result from Theorem \ref{thme:strongly} exhibits a generalization bound independent of the algorithm parameters $\eta$, $W$ and $T$ (as soon as $\eta$ satisfies the constraint of our theorem). Moreover, we know \citep[see e.g.][]{koloskova2020unified,neglia2020decentralized} that the optimization error $\epsilon_{\text{opt}}$ can be set arbitrary small (in particular smaller than $\epsilon_{\text{gen}}$), as soon as the graph $W$ is connected, the number of iterations $T$ is sufficiently large and the stepsize $\eta$ is sufficiently small (in particular satisfying our constraint). Hence, as soon as $W$ is connected, there exists an instance of parameters $\eta$, and $T$ of D-SGD that gives an excess risk bound $\epsilon_{\text{excess}}$ with a "fast" rate of order $\epsilon_{\text{gen}}$ = $\mathcal{O}(\frac{L^2}{\mu mn})$, a rate that can be found for instance in \cite{hardt2016train} for centralized SGD. %\bat{Finally, note that the optimization error bound from \citet{koloskova2020unified} is given for the Variant A of D-SGD. However, as noted by \citet{lian2017can}, the optimization error analysis is similar between the two variants, so that it does not change the conclusion.}

We emphasize that the convergence of the above excess risk bounds actually depend on the communication graph. Indeed, the convergence is possible only if the graph is connected (i.e. $\rho<1$, a necessary condition to control the optimization error), and the number of iterations $T$ needed to make the optimization error small depends on $\rho$.

\section{Generalization Error for Non-Convex Loss Functions}
\label{sec:non-convex}

%The less-studied 
The case of non-convex (but bounded) loss functions was only investigated by \citet{sun2021stability}, for Variant B of D-SGD.
%with bounded non-convex functions. 
%As in convex cases, they derive a generalization upper bound similar to that of \citet{hardt2016train}, but with an additional term not converging with the increasing sample size.
Similar to their findings in convex scenarios, they established a generalization upper bound 
akin to that of \citet{hardt2016train}. However, their bound again includes an additional term that does not diminish with increasing sample size.
This raises the following question: 
%As in Section \ref{sec:convex}, 
can a finer-grained analysis
than that of \citet{sun2021stability} recover, \emph{for both variants of D-SGD}, a result analogous to that of \citet[Theorem~3.12]{hardt2016train}
%, obtained 
for centralized SGD with bounded non-convex loss functions?

To answer this question, we adopt the set of hypotheses of \citet{hardt2016train} and seek to extend their proof technique to the decentralized framework. Hereafter, we provide our generalization  bound for bounded non-convex functions. %The proof is deferred to Appendix \ref{app:non-convex}.  %Their proof relies on two key points that differ from convex analyses. Firstly, they use a uniform stability argument (Lemma \ref{lemma:uniform_stab}) rather than an on-average model stability argument (Lemma \ref{lemma:ob-avg-gen}). Then, by conditioning their analysis on the time $t_0$ at which the swapped sample is first selected, they can minimize their upper bound with respect to $t_0$ and thus obtain a tighter bound.

%These differences are not without consequences for our analysis. Indeed, one of the main reason why the graph $W$ had no impact on the worst-case analysis with convex losses was the use of the on-average model stability argument. As a matter of fact, a close look at Equation \eqref{eq:recursion} and discussion below reveal that the model stability depends on the communication graph, the latter then disappear after averaging with respect to $i$ and $j$. Without such an averaging step, we therefore cannot hope to obtain a generalization bound independent of the graph. %For this reason, we split our worst-case analysis in two parts. First, we analyse the average of generalization errors, where the averaging step will allow to erase the impact of the communication graph and recover a result analogue to that of \citet{hardt2016train}. Then, we perform a node-wise generalization analysis which, contrary to Theorem \ref{thme:convex} and \ref{thme:strongly} will this time reveal some impact of $W$.

%We first focus on the average of final iterates i.e. $A(S)=
% \bar{\theta}^{(T)}$.

\begin{theorem}
    \label{thme:non-convex}
    
Assume that $\ell(\cdot;z) \in [0,1]$ is an $L$-Lipschitz
    (Assumption~\ref{ass:lipschitz}) and $\beta$-smooth
    (Assumption~\ref{ass:smooth}) loss function for every $z$. Let $A_1(S),\ldots,A_m(S)$ be the $m$ final
    iterates of D-SGD (Variant A and B) run for $T$ iterations, with mixing matrix~$W$ satisfying Assumption~\ref{ass:doubly}, such that $\min_k\{W_{kk}\}>0$, and with monotonically non-increasing step sizes $\eta_t\leq \frac{c}{t+1}$, $c>0$. Then, we have:
    \begin{align}
        |\EE_{A,S} [R(A_k(S)) & - R_S(A_k(S))]| \nonumber \\
        & \leq (1+\frac{1}{\beta c})(2cL^2)^{
        \frac{1}{\beta c +1}}\frac{T^{\frac{\beta c}{\beta c +1}}}{nm^{\frac{1}{\beta c +1}}}\;.
    \end{align}    
\end{theorem}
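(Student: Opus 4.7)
The plan is to extend \citet[Theorem~3.12]{hardt2016train} to D-SGD (both variants) by coupling the single-step expansivity recursion underlying the proof of Theorem~\ref{thme:convex} with Hardt's ``cutoff'' trick for bounded losses. The extra $m^{1/(\beta c + 1)}$ factor in the denominator---which is what removes the troublesome $\mathcal{O}(C_\rho T^a)$ term of \citet{sun2021stability}---will come from a graph-averaging step over the index $j$ of the perturbed agent.

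First, I would redo the one-step recursion of Eq.~\eqref{eq:recursion} while retaining the expansivity factor appropriate to the non-convex regime. For Variant~B, rewriting $\theta_k^{(t+1)} = W_{kk}(\theta_k^{(t)} - (\eta_t/W_{kk})\nabla\ell(\theta_k^{(t)};\cdot)) + \sum_{l\neq k}W_{kl}\theta_l^{(t)}$ and invoking the $(1+\eta_t\beta/W_{kk})$-expansivity of a smooth gradient step (Lemma~3.7 of \citet{hardt2016train}) yields, after taking expectation over the sampling,
\begin{equation*}
\EE[\delta_k^{(t+1)}(i,j)] \leq \sum_l W_{kl}\EE[\delta_l^{(t)}(i,j)] + \eta_t\beta\,\EE[\delta_k^{(t)}(i,j)] + \frac{2L\eta_t}{n}\mathds{1}_{\{k=j\}},
\end{equation*}
which in vector form becomes $\alpha^{(t+1)}\leq (W+\eta_t\beta I)\alpha^{(t)} + (2L\eta_t/n)\be_j$ with $\alpha_k^{(t)}=\EE[\delta_k^{(t)}(i,j)]$. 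An analogous recursion, with transition matrix $(1+\eta_t\beta)W$, holds for Variant~A. In both cases the transition matrix has row and column sums equal to $1+\eta_t\beta$ by Assumption~\ref{ass:doubly}.

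Next, I would apply Hardt's cutoff argument. Fix $t_0\in\{0,\ldots,T\}$ and let $E_{ij}$ be the event that agent $j$ samples index $i$ within the first $t_0$ steps; a union bound yields $\PP(E_{ij})\leq t_0/n$. On $E_{ij}^c$ the two coupled trajectories coincide up to time $t_0$, so $\alpha^{(t_0)}\mathds{1}_{E_{ij}^c}=0$, and the same recursion multiplied by $\mathds{1}_{E_{ij}^c}$ still holds. Unrolling from $t_0$ to $T-1$ then gives
\begin{equation*}
\EE[\delta_k^{(T)}(i,j)\mathds{1}_{E_{ij}^c}] \leq \frac{2L}{n}\sum_{s=t_0}^{T-1}\eta_s\Bigl[\prod_{r=s+1}^{T-1}(W+\eta_r\beta I)\,\be_j\Bigr]_k.
\end{equation*}
The key observation is that, since each $W+\eta_r\beta I$ has constant column sum $1+\eta_r\beta$, averaging over $j$ collapses the product into the scalar $(1/m)\prod_r(1+\eta_r\beta)$. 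The standard estimate $\prod_{r=s+1}^{T-1}(1+\eta_r\beta)\leq (T/(s+1))^{\beta c}$ (valid since $\eta_r\leq c/(r+1)$) combined with $\sum_{s\geq t_0}(s+1)^{-\beta c-1}\leq (\beta c)^{-1}t_0^{-\beta c}$ then produces the on-average stability bound $(mn)^{-1}\sum_{i,j}\EE[\delta_k^{(T)}(i,j)\mathds{1}_{E_{ij}^c}] \leq 2LT^{\beta c}/(mn\beta t_0^{\beta c})$.

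Finally, combining this with $\ell\in[0,1]$ through the standard decomposition $\EE|\ell(A_k(S);\tilde Z_{ij})-\ell(A_k(S^{(ij)});\tilde Z_{ij})|\leq \PP(E_{ij}) + L\,\EE[\delta_k^{(T)}(i,j)\mathds{1}_{E_{ij}^c}]$ and averaging over $(i,j)$ yields
\begin{equation*}
|\EE_{A,S}[R(A_k(S))-R_S(A_k(S))]| \leq \frac{t_0}{n} + \frac{2L^2T^{\beta c}}{mn\beta t_0^{\beta c}}.
\end{equation*}
Optimizing over $t_0$ gives $t_0^\star=(2cL^2/m)^{1/(\beta c+1)}T^{\beta c/(\beta c+1)}$, and plugging back recovers the claimed $(1+1/(\beta c))(2cL^2)^{1/(\beta c+1)}T^{\beta c/(\beta c+1)}/(nm^{1/(\beta c+1)})$ bound. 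I expect the main obstacle to be the graph-averaging step: cleanly extracting the $1/m$ factor from the column-sum structure of $W+\eta_r\beta I$ (respectively of $(1+\eta_r\beta)W$ for Variant~A) while correctly handling the cutoff indicator. The remainder of the argument is then a direct adaptation of \citet{hardt2016train}.
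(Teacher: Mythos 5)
Your proposal is correct and follows essentially the same route as the paper's proof: the same $(W+\eta_t\beta I)$ (resp.\ $(1+\eta_t\beta)W$) one-step expansivity recursion, the same cutoff on the first time the swapped sample is drawn with $\PP \leq t_0/n$, the same averaging over the perturbed agent $j$ to collapse the matrix product via the column-sum identity $(W+\eta_r\beta I)\1=(1+\eta_r\beta)\1$ (this is exactly where the paper also extracts the $1/m$ factor), and the same optimization over $t_0$. The only cosmetic difference is that you carry the indicator $\mathds{1}_{E_{ij}^c}$ through the recursion whereas the paper conditions on the event $\delta^{(t_0)}(i,j)=\mathbf{0}$; both implementations of the Hardt cutoff are equivalent here.
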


\begin{proof}[Sketch of proof (see Appendix \ref{app:non-convex} for details)] The crux of the proof is to condition the analysis on the time $t_0$ at which the swapped sample is first selected, then it is possible to minimize the generalization upper bound with respect to $t_0$ and obtain a tighter bound.
%However while \citet{hardt2016train} use this technique combined with uniform stability arguments, we seek to use it with an on-average model stability argument instead. This is crucial as this is what makes the impact of the graph disappear. 
\citet{hardt2016train} employed this same technique in conjunction with uniform stability arguments; our approach diverges by integrating it with an on-average model stability argument. This distinction is critical, as it is precisely what eliminates the impact of the graph in our analysis.
%Indeed, a close look at Equation~\eqref{eq:recursion} and the discussion below reveal that, before averaging it, the model stability was depending on the communication graph. 
Indeed, the discussion after Equation~\eqref{eq:recursion} shows that, before averaging over $i$ and $j$, model deviations $\delta_k(i,j)$ depend on the powers of the communication graph $W$.
The appropriate conditioning on $t_0$ combined with on-average model stability is given in the key Lemma \ref{lemma:key-non-conv} and ensures that $\forall t_0\in\mathbb{N}$ we have
\begin{equation*}
%\label{eq:right-hand}
    |\EE_{A,S}[R(A_k(S)) - R_S(A_k(S))]| \leq \frac{t_0}{n} + L\Delta_k^{(T)}\;,
\end{equation*}
where $\Delta_k^{(t)} = \frac{1}{mn}\sum_{i,j}\EE[\delta_k^{(t)}(i,j)\big| {\delta}^{(t_0)}(i,j) = \mathbf{0}]$ and ${\delta}$ is the vector containings $\delta_1,\ldots,\delta_m$ (see the proof sketch of Theorem \ref{thme:convex} for other notations). Then, using the $(1+\eta_t\beta)$-expansivity property of our updates, combined with the fact that $\Delta_k^{(t_0)} = 0$, we prove that:
\begin{equation*}
    \Delta_k^{(T)} \leq \frac{2L}{\beta mn}\Big(\frac{T}{t_0}\Big)^{c \beta}\;.
\end{equation*}
It then suffices to plug this equation into the first one and to minimize it with respect to $t_0$ to complete the proof.
\end{proof}

For the clarity of the discussion, below we omit constant factors in $\beta$, $c$ and $L$, but we stress that our bound has the exact same constant factors as those in \citet{hardt2016train}. Our generalization bound is of order $\calO(T^{\frac{\beta c}{\beta c
+1}}/nm^{\frac{1}{\beta c +1}})$ and several comments can be made. First,
contrary to the convex cases, our bound does not exactly match the one of 
\citet{hardt2016train}. Indeed, when centralized SGD is run over $mn$ data
points, they obtain a bound of order $\calO(T^{\frac{\beta
c}{\beta c +1}}/nm)$ which is strictly better than our bound. This comes from
the fact that the proof technique relies on characterizing the number of steps
that occur before the algorithm picks the data point that differs in $S$ and
$S^{(ij)}$. In centralized SGD, the probability to pick this
point is $1/mn$ at each iteration, while it is only $1/n$ for D-SGD.
Importantly, this means that the weaker bound is not directly due to
decentralization, but rather to the fact that D-SGD selects $m$ samples at
each iteration (instead of only one for SGD). A fairer comparison% would
thus be to compare D-SGD to centralized SGD with batch size $m$.

Importantly, our generalization bound is valid for the two variants of D-SGD, is still independent of the
choice of communication graph, and tends towards $0$ as $n$ and $m$ increase. This
significantly improves the results obtained for
D-SGD in the prior work of \citet{sun2021stability} for Variant B only, where the obtained bound has an extra additive term of order $\calO(T^{\frac{\beta c}{\beta c +1}}C_\rho)$, where
$C_\rho$ depends on the spectral gap $\rho$ of $W$ and can be arbitrarily large.  Note that, as in convex cases (Remark \ref{rmk:avg-worst-case}), their result is given for the average of final iterates, for which our result is also valid (see Proposition~\ref{prop:gen-avg-nonconv} in Appendix~\ref{app:gen-avg}).  

\begin{remark}
    As pointed out in Table \ref{tab:a}, there is a mistake in the original proof of the upper bound of  \citet[Theorem 3]{sun2021stability}. We explain this in Appendix \ref{app:mistake_sun}.
\end{remark}

Finally, notice that for non-convex functions, the excess population risk $\epsilon_{\text{excess}}$ cannot be directly upper-bounded and a discussion analogue to the one of Section \ref{sec:excess} cannot be made. Indeed, in the non-convex case, optimization errors usually control an upper bound on the gradient norm of the objective function and not on function values, which is required by our definition of $\epsilon_{\text{opt}}$.

    \section{Towards Optimization-Dependent Generalization Bounds}
    \label{sec:data-dep}

    Based on the results of the previous sections, one could conclude that decentralization and the choice of communication graph do not have an impact on the generalization of D-SGD. %However in practice, different communication graphs can lead to different behaviors for the associated generalization errors. 
    In this section, we suggest that this rather counter-intuitive result comes from the fact that the previous analyses are ``worst-case'', thereby hiding the true influence of the graph on certain types of loss functions and data distributions. %Although optimal, the previous generalization bounds were indeed extensively relying on the $L$-Lipschitzness property of the loss to bound gradient norms by $L$. While in some regimes this is optimal, there exist other regimes where we can hope to do better. 
    In order to highlight such potential effects, we propose to investigate a certain type of generalization bounds referred to as \emph{optimization-dependent}. This type of refined bounds, also referred as ``data-dependent'' in \citet{kuzborskij2018data}, has been widely investigated in the generalization error analysis of \emph{centralized} gradient methods \citep{kuzborskij2018data,lei2020fine}. Also based on algorithmic stability arguments, they reveal that a good optimization of the empirical risk can be beneficial for generalization (hence the term ``optimization-dependent''). Since it is well known that the choice of graph affects the specific trajectories of the optimization algorithm and have an impact on the optimization error \citep{koloskova2020unified}, we can expect this type of analysis to be appropriate to reveal the impact of the graph's connectivity on generalization. 
    
    Let us start with an additional assumption.

    %\hspace{-1cm}
    
    \begin{assumption}\emph{(Bounded empirical variance).} \label{hyp:bounded-variance} For all agents $k=1,\ldots, m$, training dataset $S\in \calZ^{mn}$ and model parameter $\theta\in \IR^d$, there exists $\sigma^2>0$ such that $\frac{1}{n}\sum_{i=1}^n\|\nabla\ell(\theta;Z_{ik}) - \nabla R_{S_k}(\theta)\|_2^2 \leq \sigma^2$.
    \end{assumption}
    
    This assumption is rather standard in the control of the optimization error of stochastic gradient methods. Here, it is necessary to reveal different regimes in our new optimization-dependent bound, but notice that Assumption \ref{hyp:bounded-variance} is always satisfied under Assumption \ref{ass:lipschitz}, with $\sigma^2=L^2$.
    
    The following lemma first links generalization errors to empirical risk minimization errors. Here, we focus on averaged generalization errors, instead of looking at the one of any fixed agent. This makes the analysis more
    tractable and reveals interesting links with optimization
    errors.

    \begin{lemma}\emph{(Link with local optimization errors).} \label{lemma:link-local-optim} Under the same hypotheses as in  Theorem \ref{thme:convex} and additional Assumption \ref{hyp:bounded-variance}, we have:
        \begin{align*}
            &\frac{1}{m}\sum_{k=1}^m|\EE_{A,S}[R(A_k(S)) - R_S(A_k(S))]| \\
            & \quad \leq \frac{2L\sigma}{mn}\sum_{t=0}^{T-1}\eta_t+ \frac{2L}{mn}\sum_{t=0}^{T-1}\eta_t\frac{1}{m}\sum_{j=1}^{m}\EE[\|\nabla R_{S_j}(\theta_j^{(t)})\|_2]
        \end{align*}
    \end{lemma}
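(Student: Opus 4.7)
The plan is to revisit the proof of Theorem~\ref{thme:convex} but to refrain from bounding the stochastic gradient norm by $L$ at the step immediately preceding the unrolling. Starting again from the first inequality of Eq.~\eqref{eq:recursion}, namely
\begin{equation*}
    \EE[\delta_k^{(t+1)}(i,j)] \leq \sum_{l=1}^m W_{kl}\EE[\delta_l^{(t)}(i,j)] + \frac{2\eta_t}{n}\EE[\|\nabla\ell(\theta_k^{(t)};Z_{ij})\|_2]\mathds{1}_{\{k=j\}},
\end{equation*}
I would unroll from $t=T-1$ down to $t=0$ using $\delta_k^{(0)}(i,j) = 0$. The indicator $\mathds{1}_{\{k=j\}}$ combined with the matrix product of the $W$ factors produces
\begin{equation*}
    \EE[\delta_k^{(T)}(i,j)] \leq \frac{2}{n}\sum_{t=0}^{T-1} \eta_t\,(W^{T-t-1})_{kj}\,\EE[\|\nabla\ell(\theta_j^{(t)};Z_{ij})\|_2].
\end{equation*}

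Next I would form the average on-average stability $\frac{1}{m}\sum_k\varepsilon_k$, where $\varepsilon_k = \frac{1}{mn}\sum_{i,j}\EE[\delta_k^{(T)}(i,j)]$. The crucial observation is that, by Assumption~\ref{ass:doubly}, every power of $W$ is doubly stochastic, so $\sum_{k=1}^m (W^{T-t-1})_{kj} = 1$ for each $j$. Averaging the displayed inequality over $k$ collapses the $W^{T-t-1}$ factor and yields
\begin{equation*}
    \frac{1}{m}\sum_{k=1}^m \varepsilon_k \leq \frac{2}{m^2 n^2}\sum_{t=0}^{T-1}\eta_t \sum_{j=1}^m \sum_{i=1}^n \EE[\|\nabla\ell(\theta_j^{(t)};Z_{ij})\|_2].
\end{equation*}

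Now I would introduce the local empirical risk gradient via the triangle inequality $\|\nabla\ell(\theta_j^{(t)};Z_{ij})\|_2 \leq \|\nabla R_{S_j}(\theta_j^{(t)})\|_2 + \|\nabla\ell(\theta_j^{(t)};Z_{ij}) - \nabla R_{S_j}(\theta_j^{(t)})\|_2$. Averaging the noise term over $i$ and applying Jensen's inequality (Cauchy--Schwarz) gives
\begin{equation*}
    \frac{1}{n}\sum_{i=1}^n \|\nabla\ell(\theta_j^{(t)};Z_{ij}) - \nabla R_{S_j}(\theta_j^{(t)})\|_2 \leq \sqrt{\frac{1}{n}\sum_{i=1}^n\|\nabla\ell(\theta_j^{(t)};Z_{ij}) - \nabla R_{S_j}(\theta_j^{(t)})\|_2^2} \leq \sigma,
\end{equation*}
by Assumption~\ref{hyp:bounded-variance}. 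Substituting back decomposes the bound on $\frac{1}{m}\sum_k \varepsilon_k$ into the two announced contributions: a variance term $\frac{2\sigma}{mn}\sum_t \eta_t$ and an optimization-error term $\frac{2}{mn}\sum_t \eta_t \cdot \frac{1}{m}\sum_j \EE[\|\nabla R_{S_j}(\theta_j^{(t)})\|_2]$. To finish, I would apply Lemma~\ref{lemma:ob-avg-gen} per agent (giving $|\EE[R(A_k(S)) - R_S(A_k(S))]| \leq L\varepsilon_k$) and average over $k$, multiplying the previous display by $L$.

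The only delicate step is the tracking of constants in the decomposition --- in particular, not losing the $1/m$ factors coming from double stochasticity when passing to absolute values before applying Lemma~\ref{lemma:ob-avg-gen}. Everything else is a direct refinement of the proof of Theorem~\ref{thme:convex}, and crucially avoids invoking Lipschitzness on the gradient, which is what lets the empirical risk gradient appear explicitly and enables a data-dependent interpretation in the next section.
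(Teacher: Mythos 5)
Your proof is correct and follows essentially the same route as the paper: it refines the recursion of Theorem~\ref{thme:convex} by retaining the stochastic gradient norms instead of bounding them by $L$, unrolls to obtain $\EE[\delta_k^{(T)}(i,j)]\leq \frac{2}{n}\sum_t \eta_t (W^{T-t-1})_{kj}\EE[\|\nabla\ell(\theta_j^{(t)};Z_{ij})\|_2]$ (the paper's Eq.~\eqref{eq:final-data-lemma}, stated there as Lemma~\ref{lemma:data-dep-nodewise}), averages over $k$ using double stochasticity of the powers of $W$, and then splits via the triangle inequality and Jensen/Cauchy--Schwarz together with Assumption~\ref{hyp:bounded-variance}. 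The only presentational difference is that the paper isolates the per-agent point-wise bound as an intermediate lemma before averaging, whereas you average first; both orderings yield the identical final bound.
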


    Lemma \ref{lemma:link-local-optim}, and all the results of this section, are proved in Appendix \ref{app:data-dep}. 
    
    From Lemma \ref{lemma:link-local-optim}, we notice that we can bound $\sigma$ and the gradient norms $\|\nabla R_{S_j}(\theta_j^{(t)})\|_2$ by $L$ and recover (up to a factor~$2$), the worst-case upper bound from Theorem~\ref{thme:convex}. This illustrates well the ``worst-case'' notion mentioned above, but also the fact that the upper bound of Lemma \ref{lemma:link-local-optim} can be better than the one of Theorem \ref{thme:convex} in some regimes. This will notably be the case in ``low noise'' regimes, when $\sigma \ll L$, and when the expected gradient norms $\EE[\|\nabla R_{S_j}(\theta_j^{(t)})\|_2]$ reach small values. Interestingly, these gradient norms are linked with the optimization error of local empirical risks: $\EE[\|\nabla R_{S_j}(\theta_j^{(t)})\|_2]$ will get smaller as the parameter $\theta_j^{(t)}$ of the $j$-th agent minimizes the associated local empirical risk $R_{S_j}(\theta)$. %, and conversely in convex cases.
    In other words, the more rapidly each agent optimizes its own local empirical risk, the smaller (and the better) the bound in Lemma \ref{lemma:link-local-optim}.

    % \begin{remark} \label{rmk:avg-vs-nodewise}
    %     In Lemma \ref{lemma:link-local-optim} and subsequent results, we are looking at a different generalization error that the one from Section \ref{sec:worst-case}. Here, we are looking at the averaged generalization error, whereas we were looking at the one of any fixed agent before. This difference makes the analysis more tractable and still reveals interesting links with optimization errors. However, we must keep in mind that conclusions for a node-wise generalization error might be different and are currently kept for future work. \bat{Donner version du Lemme pour le cas node-wise en appendix ? Lemma \ref{lemma:data-dep-nodewise} }
    % \end{remark}

% \begin{remark}\label{rmk:avg-data-dep}
%     Similarly to Remark \ref{rmk:avg-worst-case}, all the results of this section are also valid for the generalization error of the average of final iterates $A(S)=\bar{\theta}^{(T)}$ considered in related works (Appendix \ref{app:gen-avg}). We chose to present the result on the average of generalization errors over the one on the averaged parameter $\bar{\theta}^{(T)}$ as the latter is not really accessible in the fully decentralized setting, making the result less significant.
% \end{remark}

The fact that the agents should minimize their \emph{local} empirical risks may seem surprising at first. Indeed, as opposed to the minimization of the \emph{full} empirical risk, it suggests that local SGD (D-SGD with identity graph) should be preferred to D-SGD with any other communication graph. This quite counter-intuitive observation comes from the averaging of  the node-wise generalization errors, which increases stability. % and leads to such conclusions.
Using tools from convex optimization, we can now provide a more explicit upper bound, given in the following theorem. % highlighting the idea that the graph should be close to the identity.

\begin{figure*}[!h]
	\centering
	\includegraphics[width=0.4\linewidth]{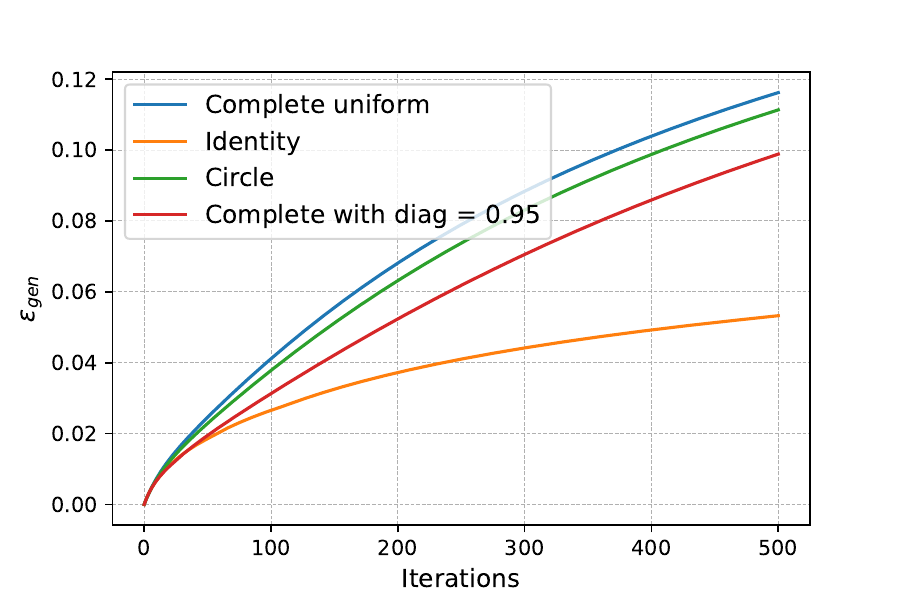}
	\includegraphics[width=0.4\linewidth]{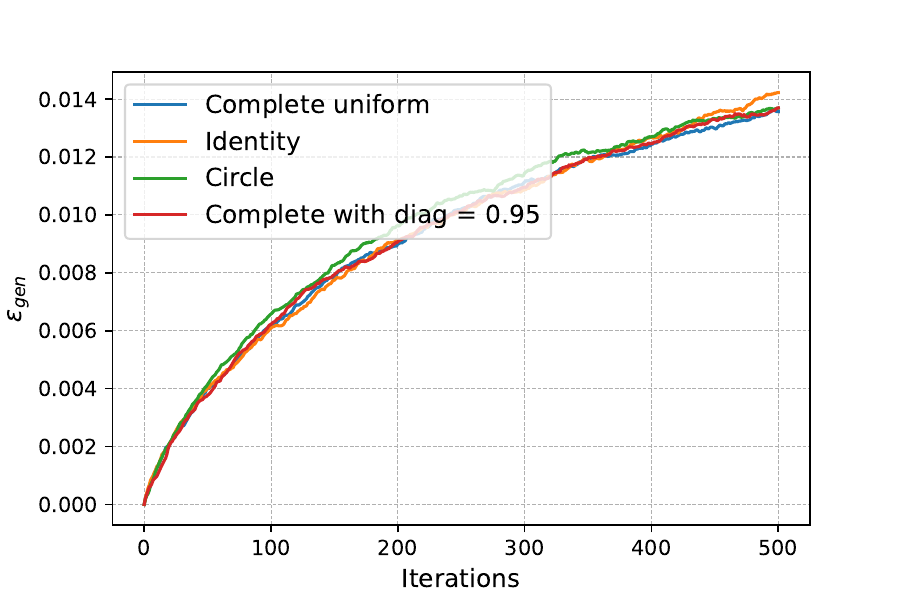}
	\caption{Empirical generalization error, as a function of the number of iterations $T$, and for different communication graphs. Constant stepsize $\eta=0.03$. (Left) Low-noise regime with $\sigma\simeq 0$. (Right) Noisy regime with $\sigma > 0$. See Appendix \ref{app:exps} for experimental details.}
	\label{fig:convex-avg}
\end{figure*}

    \begin{theorem}\emph{(optimization-dependent generalization bound)} \label{thme:optim-dep} Consider the same setting as in Theorem \ref{thme:convex}, with a constant stepsize $\eta$ and additional Assumption \ref{hyp:bounded-variance}. Assume further that $W$ is symmetric. Then, there exists a graph-dependent constant $C_W<\infty$ such that:
\begin{align*}
    \frac{1}{m}\sum_{k=1}^m &|\EE_{A,S}[R(A_k(S)) - R_S(A_k(S))]| \\
    & \leq \frac{2\sqrt{2}L\sqrt{T\eta}}{mn}\sqrt{\frac{1}{m}\sum_{j=1}^{m}\EE[R_{S_j}(\theta^{(0)}) - R_{S_j}(\theta_{S_j}^*)]} \\
    & \hspace{0.8cm} + \frac{2L\sigma\eta T}{mn} + \frac{2L\sqrt{\beta}\sigma\eta^{\frac{3}{2}} T}{mn} + \frac{2L^2T\eta C_W}{mn}\;,
\end{align*}
where $\theta_{S_j}^*$ is the (local) empirical risk minimizer of $R_{S_j}$.
\end{theorem}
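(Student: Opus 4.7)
The plan is to start from Lemma~\ref{lemma:link-local-optim}, which already produces the $\tfrac{2L\sigma\eta T}{mn}$ term of the bound and reduces the problem to controlling $Q := \sum_{t=0}^{T-1}\eta\cdot\tfrac{1}{m}\sum_{j=1}^m \EE\|\nabla R_{S_j}(\theta_j^{(t)})\|_2$. A Jensen-inequality step (over $j$) followed by Cauchy--Schwarz (over $t$) gives the clean upper bound $Q \le \sqrt{T\eta}\,\sqrt{\sum_t \eta\cdot\tfrac{1}{m}\sum_j \EE\|\nabla R_{S_j}(\theta_j^{(t)})\|_2^2}$. The three non-trivial additive terms in the theorem will then emerge by bounding the sum of squared gradient norms as a sum of three contributions (initial suboptimality, consensus error, stochastic noise) and splitting the square root via $\sqrt{a+b+c}\le\sqrt a+\sqrt b+\sqrt c$.

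The heart of the proof is a descent inequality for each local risk $R_{S_j}$ along the D-SGD trajectory of agent $j$. Writing the Variant~B update as $\theta_j^{(t+1)} - \theta_j^{(t)} = v_j^{(t)} - \eta g_j^{(t)}$ with mixing increment $v_j^{(t)} := \sum_l W_{jl}(\theta_l^{(t)}-\theta_j^{(t)})$ and stochastic gradient $g_j^{(t)} := \nabla \ell(\theta_j^{(t)}; Z_{I_j^t j})$, the $\beta$-smoothness of $R_{S_j}$, together with $\EE[g_j^{(t)}\mid\theta_j^{(t)}] = \nabla R_{S_j}(\theta_j^{(t)})$ and Assumption~\ref{hyp:bounded-variance}, yields (after expansion of $\|v_j^{(t)} - \eta g_j^{(t)}\|_2^2$)
\[
\EE R_{S_j}(\theta_j^{(t+1)}) \le \EE R_{S_j}(\theta_j^{(t)}) - \eta\bigl(1-\tfrac{\beta\eta}{2}\bigr)\EE\|\nabla R_{S_j}(\theta_j^{(t)})\|_2^2 + (1-\beta\eta)\EE\langle \nabla R_{S_j}(\theta_j^{(t)}), v_j^{(t)}\rangle + \tfrac{\beta}{2}\EE\|v_j^{(t)}\|_2^2 + \tfrac{\beta\eta^2\sigma^2}{2}.
\]
Applying Young's inequality to the cross term with weight matched to the factor $(1-\beta\eta)$ produces an exact cancellation of the coefficient of $\|\nabla R_{S_j}\|_2^2$ down to $-\eta/2$ whenever $\eta\le 1/\beta$, a condition implied by the step-size assumption of Theorem~\ref{thme:convex}. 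Telescoping and using $R_{S_j}(\theta^{(T)})\ge R_{S_j}(\theta_{S_j}^*)$ then yields, for each $j$,
\[
\sum_{t=0}^{T-1}\eta\,\EE\|\nabla R_{S_j}(\theta_j^{(t)})\|_2^2 \le 2\,\EE\bigl[R_{S_j}(\theta^{(0)}) - R_{S_j}(\theta_{S_j}^*)\bigr] + \tfrac{1}{\eta}\sum_t \EE\|v_j^{(t)}\|_2^2 + T\beta\eta^2\sigma^2.
\]

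The last ingredient is to bound the accumulated consensus term by a graph-dependent quantity. Stacking the iterates into $\Theta^{(t)} \in \RR^{m\times d}$ and defining $Z^{(t)} := \Theta^{(t)} - \bar\Theta^{(t)}$, the D-SGD recursion reads $Z^{(t+1)} = A Z^{(t)} - \eta(G^{(t)} - \bar G^{(t)})$ with $A := W - \tfrac{1}{m}\mathbf{1}\mathbf{1}^\top$. The symmetry of $W$ implies $\|A\| = \lambda := \max_{k\ge 2}|\lambda_k(W)|$, and, under the implicit assumption $\lambda<1$ that ensures $C_W<\infty$, unrolling the recursion with a Cauchy--Schwarz on the geometric convolution together with the Lipschitz bound $\|G^{(t)}-\bar G^{(t)}\|_F^2 \le m L^2$ (Assumption~\ref{ass:lipschitz}) yields $\sum_t \|Z^{(t)}\|_F^2 \le \eta^2 T m L^2/(1-\lambda)^2$. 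Since $v_j^{(t)}$ is row $j$ of $(W-I)Z^{(t)}$, a triangle inequality transfers this into $\tfrac{1}{m}\sum_{j,t}\EE\|v_j^{(t)}\|_2^2 \le 4\eta^2 T L^2/(1-\lambda)^2$. Plugging everything back, defining $C_W := 2/(1-\lambda)$, and recombining the three square-roots with the $\sigma$ term already extracted from Lemma~\ref{lemma:link-local-optim} recovers the stated bound.

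I expect the main obstacle to be the descent step: a naive Young's inequality on the mixing cross term gives a $-\eta/4$ coefficient instead of $-\eta/2$, which propagates into a worse numerical constant in front of $\sqrt{T\eta\,\bar R_0}$. The exact cancellation to $-\eta/2$, and hence the clean factor $2\sqrt 2$ in the theorem, requires coupling the Young's weight precisely to $(1-\beta\eta)$ so that all first-order-in-$\eta$ remainders in the descent inequality align and vanish by design; this is the only place where the step-size constraint $\eta\le 1/\beta$ is really used qualitatively.
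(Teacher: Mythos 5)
Your overall architecture matches the paper's: reduce via Lemma~\ref{lemma:link-local-optim} and Cauchy--Schwarz to $\sum_t \eta\,\frac{1}{m}\sum_j \EE\|\nabla R_{S_j}(\theta_j^{(t)})\|_2^2$, control this by a per-agent descent inequality, telescope, and split the square root. Your descent step is exactly the paper's Lemma~\ref{lemma:descent}, including the point you flag as the main obstacle: the paper applies $\langle a,b\rangle \le \frac{\eta}{2}\|a\|^2 + \frac{1}{2\eta}\|b\|^2$ to the cross term carrying the factor $(1-\beta\eta)$, and the coefficients $(\frac{\beta\eta^2}{2}-\eta)+(1-\beta\eta)\frac{\eta}{2}=-\frac{\eta}{2}$ and $\frac{\beta}{2}+(1-\beta\eta)\frac{1}{2\eta}=\frac{1}{2\eta}$ cancel exactly as you predict, yielding the $2\sqrt{2}$ constant.

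The genuine gap is in your control of the accumulated consensus term. You contract $Z^{(t)}=\Theta^{(t)}-\bar\Theta^{(t)}$ through $A=W-\frac{1}{m}\1\1^\top$ and need $\lambda=\max_{k\ge 2}|\lambda_k(W)|<1$, i.e.\ a connected graph, to sum the geometric series and obtain $C_W=2/(1-\lambda)$. But the theorem makes no connectivity assumption: it only requires $W$ symmetric, doubly stochastic, and (through the stepsize condition) $\min_k W_{kk}>0$. The identity matrix and any disconnected graph satisfy these hypotheses, have $\lambda=1$, and your $C_W$ is infinite there, so your proof does not establish the statement as written. The paper instead defines $C_W$ as an upper bound on $C_W^{(t)}=\sum_{s=0}^{t-1}\|W^s-W^{s+1}\|_2$, obtained by writing $(I-W)\Theta^{(t)}=-\eta\sum_s(W^s-W^{s+1})\nabla L(\Theta^{(t-s-1)})$ (the initialization term vanishes since all agents start at the same point); since $\|W^s-W^{s+1}\|_2=\sup_k|\lambda_k|^s|1-\lambda_k|$, eigenvalues equal to $1$ contribute nothing and the series converges by the root test whenever all eigenvalues lie in $(-1,1]$, which Lemma~\ref{lem:second_eigenvalue} guarantees from symmetry and positive diagonal alone. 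Beyond covering the disconnected case, this yields a constant with the opposite qualitative behaviour from yours: the paper's $C_W$ equals $0$ for $W=I$ and stays bounded (e.g.\ $C_W=1$ for $W=(1-\epsilon)I+\epsilon\frac{1}{m}\1\1^\top$, uniformly in $\epsilon$) precisely where your $2/(1-\lambda)$ blows up. Since the whole point of Section~\ref{sec:data-dep} is that poorly-connected graphs make the graph-dependent term \emph{small}, your version of $C_W$ would invert the theorem's intended conclusion even in the connected regime where your argument is valid. A secondary point: your claim that $\eta\le 1/\beta$ (needed so that $1-\beta\eta\ge 0$ before applying Young's inequality) follows from the stepsize condition of Theorem~\ref{thme:convex} is not automatic, since $\eta\le 2\min_k W_{kk}/\beta$ only gives $\eta\le 1/\beta$ when $\min_k W_{kk}\le 1/2$; this caveat applies to the paper's own proof as well.
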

Here, $C_W$ corresponds to an upper bound on the series $C_W^{(t)} \triangleq \sum_{s=0}^{t-1}\|W^s - W^{s+1}\|_2$. Its existence is guaranteed (see Lemma \ref{lemma:C_W} in Appendix \ref{app:data-dep}), but unfortunately in most cases  $C_W^{(t)}$ and $C_W$ do not have a closed form expression. However, it can be shown for instance that $C_W=0$ for $W=I$ (local SGD) or $C_W=1$ for $W=\frac{1}{m}\1\1^T$ (complete graph with uniform weights). More generally, a condition for $C_W$ to be small is to be close to the identity graph.

\textbf{Upper bound analysis.} The first term of the upper bound is of order $\calO(\frac{\sqrt{T}}{mn})$ and depends on the averaged optimization error at initialization. This illustrates that if we are good at the initial point $\theta^{0}$, few steps of D-SGD are going to be necessary before reaching a stable point. The other terms are of order $\calO(\frac{T}{mn})$, the last one being graph-dependent with the constant $C_W$, while the other two  depend on the variance $\sigma^2$. From the worst case point of view, we therefore recover the rate $\calO(\frac{T}{mn})$ provided in Theorem \ref{thme:convex}. However, this new bound is more informative as it showcases other regimes. For instance, when $\sigma$ and $C_W$ are sufficiently small, the first term becomes dominant, and the bound becomes of order $\calO(\frac{\sqrt{T}}{mn})$, which strictly improves the worst-case upper bound. Last but not least, if $W=I$ and $\eta\leq\frac{1}{\sqrt{T}}$, we have $C_W=0$ and we obtain a bound of order  \looseness = -1
\begin{equation*}
    \calO\left(\frac{\sqrt[4]{T}}{mn}\sqrt{\frac{1}{m}\sum_{j=1}^{m}\EE[R_{S_j}(\theta^{(0)}) - R_{S_j}(\theta_{S_j}^*)]} + \frac{\sigma \sqrt{T}}{mn} \right)\;,
\end{equation*}

which is the same as the optimization-dependent bound obtained in \citet[Theorem 3]{kuzborskij2018data} for centralized SGD with $mn$ data points. This illustrates that our results generalize those obtained for centralized SGD in past studies.

In Figure \ref{fig:convex-avg}, we represent the generalization errors observed empirically for different communication graphs (see Appendix \ref{app:exps} for experimental details). In the low noise regime (left plot), we observe that the generalization error is strongly impacted by the choice of graph, the best one being the identity. This is in line with our analysis, as in the low noise regime the graph-dependent term of our bound becomes dominant. On the contrary, in the high noise regime (right plot), the choice of communication graph is less significant as we essentially recover the worst-case behavior, in which the choice of communication graph does not matter. Interestingly, we observe that in the firsts iterations of the low-noise regime, all curves have the same slope. This suggests that during this phase, generalization errors evolve linearly and do not depend on the graph, exhibiting the worst-case behavior described by Eq. \eqref{eq:worst-case-convex}. Then, as the algorithm continues, the optimization progresses (depending on the graph), making the algorithm more stable and the worst-case bound too conservative.

In the end, after refuting results which claimed that a poorly connected graph was detrimental to generalization, our bound and our empirical results show that such a graph can, on the contrary, help generalization in certain regimes. This can be contrasted, however, with the fact that the optimization error $\epsilon_{\textrm{opt}}$ of the \emph{full} empirical risk must also be controlled, which can only be done with a connected graph.
Overall, our analysis paves the way for the future development of optimization-dependent generalization bounds, whose ability to characterize the practical impact of decentralization and choice of graph is well illustrated by our results.

\section{Conclusion}

In this paper, we 
showed that previous generalization error analyses of Variant~B of D-SGD were very loose and led to incorrect conclusions
regarding the impact of decentralization on generalization. On the contrary, we show that Variants A and B recover upper bounds analogous to those obtained in the centralized setting, suggesting that decentralization and the choice of graph do not have an impact on generalization. We then argue that this result is coming from a worst-case analysis and propose a refined bound revealing that the choice of graph can in fact improve the worst-case bound in certain regimes, and that a poorly-connected graph can even be beneficial for generalization.

All our generalization results, however, should not be completely dissociated from the
optimization error. As seen in Section \ref{sec:excess}, if we want to recover the optimal excess risk bounds from the centralized setting, the optimization error must be sufficiently small. Contrary to the generalization error, this means that the graph should be connected and the number of iterations sufficiently large (depending on the connectivity of the communication graph). Future work could therefore include a better understanding of the generalization-optimization trade-off, notably with respect to the minimum number of iterations needed to reach the optimal bounds. In an other vein, future investigations could relax the Lipschitz condition, by considering an analysis similar to those proposed by \citet{lei2020fine,schliserman2022stability,schliserman2023tight}, or develop more refined optimization-dependent generalization bounds that would be able to capture, for instance, the impact of data heterogeneity between agents.

% \textbf{although generalization is
% mildly affected by the choice of the graph, the optimization error remains
% heavily impacted by it}.

% \begin{remark}
%     Although standard, Assumptions \ref{ass:lipschitz} and \ref{ass:smooth} are not strictly necessary.
%     %could be relaxed. 
%     The $L$-Lipschitz condition can be avoided by considering an analysis similar to the one proposed by \citet{lei2020fine}, and more generally they can both be relaxed by imposing instead a self-bounding property on the gradients \citep{schliserman2022stability,schliserman2023tight}.
% \end{remark}

% \bat{Future investigations could include the relaxation of certain assumptions, or the development of more refined optimization-dependent generalization bounds that would be able to capture, for instance, the impact of data heterogeneity between agents.}

% Meilleur analyses + resultats coherent par rapport à ce qui est attendu\\
% Technique de preuve simple pouvant être adapté à d'autres techniques de preuve centralisées. \\
% Prendre neanmoins les resultats avec des pincettes : ne pas oublier que l'erreur d'optim est importante et que dans ces cas là le graphe importe bcp \\

% Future: lower bound + relaxed hypothesis + data-dependent gen bound + high probability ?  

\section*{Acknowledgments}

This work was supported by the French government managed by the Agence Nationale de la Recherche (ANR) through grant ANR-20-CE23-0015 (Project PRIDE), through the 3IA Côte d’Azur Investments in the Future project with the reference number ANR-19-P3IA-0002 and under France 2030 program with the reference ANR-23-PEIA-005 (REDEEM project).
It was also funded in part by the European Network of Excellence dAIEDGE under Grant Agreement Nr. 101120726 and by the Groupe La Poste, sponsor of the Inria Foundation, in the framework of the FedMalin Inria Challenge. Batiste Le Bars was supported by an Inria-EPFL fellowship.

\section*{Impact Statement}
This paper presents work whose goal is to advance the field of Machine Learning. There are many potential societal consequences of our work, none which we feel must be specifically highlighted here.

% In the unusual situation where you want a paper to appear in the
% references without citing it in the main text, use \nocite
%\nocite{langley00}

\bibliography{biblio}

\begin{thebibliography}{36}
\providecommand{\natexlab}[1]{#1}
\providecommand{\url}[1]{\texttt{#1}}
\expandafter\ifx\csname urlstyle\endcsname\relax
  \providecommand{\doi}[1]{doi: #1}\else
  \providecommand{\doi}{doi: \begingroup \urlstyle{rm}\Url}\fi

\bibitem[Alquier et~al.(2024)]{alquier2021user}
Alquier, P. et~al.
\newblock User-friendly introduction to {PAC}-{B}ayes bounds.
\newblock \emph{Foundations and Trends{\textregistered} in Machine Learning},
  17\penalty0 (2):\penalty0 174--303, 2024.

\bibitem[Amir et~al.(2021)Amir, Koren, and Livni]{amir2021sgd}
Amir, I., Koren, T., and Livni, R.
\newblock {SGD} generalizes better than {GD} (and regularization doesn’t
  help).
\newblock In \emph{Conference on Learning Theory}, pp.\  63--92. PMLR, 2021.

\bibitem[Bassily et~al.(2020)Bassily, Feldman, Guzm{\'a}n, and
  Talwar]{bassily2020stability}
Bassily, R., Feldman, V., Guzm{\'a}n, C., and Talwar, K.
\newblock Stability of stochastic gradient descent on nonsmooth convex losses.
\newblock \emph{Advances in Neural Information Processing Systems},
  33:\penalty0 4381--4391, 2020.

\bibitem[Bousquet \& Elisseeff(2002)Bousquet and
  Elisseeff]{bousquet2002stability}
Bousquet, O. and Elisseeff, A.
\newblock Stability and generalization.
\newblock \emph{The Journal of Machine Learning Research}, 2:\penalty0
  499--526, 2002.

\bibitem[Bousquet et~al.(2004)Bousquet, Boucheron, and
  Lugosi]{bousquet2004introduction}
Bousquet, O., Boucheron, S., and Lugosi, G.
\newblock Introduction to statistical learning theory.
\newblock \emph{Advanced Lectures on Machine Learning: ML Summer Schools 2003,
  Canberra, Australia}, pp.\  169--207, 2004.

\bibitem[Bubeck(2015)]{bubeck2014convex}
Bubeck, S.
\newblock Convex optimization: Algorithms and complexity.
\newblock 2015.

\bibitem[Catoni(2007)]{catoni2007pac}
Catoni, O.
\newblock {PAC}-{B}ayesian supervised classification: the thermodynamics of
  statistical learning.
\newblock \emph{arXiv preprint arXiv:0712.0248}, 2007.

\bibitem[Feldman \& Vondrak(2019)Feldman and Vondrak]{feldman2019high}
Feldman, V. and Vondrak, J.
\newblock High probability generalization bounds for uniformly stable
  algorithms with nearly optimal rate.
\newblock In \emph{Conference on Learning Theory}, pp.\  1270--1279. PMLR,
  2019.

\bibitem[Garrigos \& Gower(2023)Garrigos and Gower]{garrigos2023handbook}
Garrigos, G. and Gower, R.~M.
\newblock Handbook of convergence theorems for (stochastic) gradient methods,
  2023.

\bibitem[Hardt et~al.(2016)Hardt, Recht, and Singer]{hardt2016train}
Hardt, M., Recht, B., and Singer, Y.
\newblock Train faster, generalize better: Stability of stochastic gradient
  descent.
\newblock In \emph{International conference on machine learning}, pp.\
  1225--1234. PMLR, 2016.

\bibitem[Koloskova et~al.(2020)Koloskova, Loizou, Boreiri, Jaggi, and
  Stich]{koloskova2020unified}
Koloskova, A., Loizou, N., Boreiri, S., Jaggi, M., and Stich, S.
\newblock A unified theory of decentralized {SGD} with changing topology and
  local updates.
\newblock In \emph{International Conference on Machine Learning}, pp.\
  5381--5393. PMLR, 2020.

\bibitem[Kong et~al.(2021)Kong, Lin, Koloskova, Jaggi, and
  Stich]{kong2021consensus}
Kong, L., Lin, T., Koloskova, A., Jaggi, M., and Stich, S.
\newblock Consensus control for decentralized deep learning.
\newblock In \emph{International Conference on Machine Learning}, pp.\
  5686--5696. PMLR, 2021.

\bibitem[Kuzborskij \& Lampert(2018)Kuzborskij and Lampert]{kuzborskij2018data}
Kuzborskij, I. and Lampert, C.
\newblock Data-dependent stability of stochastic gradient descent.
\newblock In \emph{International Conference on Machine Learning}, pp.\
  2815--2824. PMLR, 2018.

\bibitem[Le~Bars et~al.(2023)Le~Bars, Bellet, Tommasi, Lavoie, and
  Kermarrec]{le2023refined}
Le~Bars, B., Bellet, A., Tommasi, M., Lavoie, E., and Kermarrec, A.-M.
\newblock Refined convergence and topology learning for decentralized sgd with
  heterogeneous data.
\newblock In \emph{International Conference on Artificial Intelligence and
  Statistics}, pp.\  1672--1702. PMLR, 2023.

\bibitem[Lei \& Ying(2020)Lei and Ying]{lei2020fine}
Lei, Y. and Ying, Y.
\newblock Fine-grained analysis of stability and generalization for stochastic
  gradient descent.
\newblock In \emph{International Conference on Machine Learning}, pp.\
  5809--5819. PMLR, 2020.

\bibitem[Lian et~al.(2017)Lian, Zhang, Zhang, Hsieh, Zhang, and
  Liu]{lian2017can}
Lian, X., Zhang, C., Zhang, H., Hsieh, C.-J., Zhang, W., and Liu, J.
\newblock Can decentralized algorithms outperform centralized algorithms? a
  case study for decentralized parallel stochastic gradient descent.
\newblock \emph{Advances in Neural Information Processing Systems}, 2017.

\bibitem[Lin et~al.(2016)Lin, Camoriano, and Rosasco]{lin2016generalization}
Lin, J., Camoriano, R., and Rosasco, L.
\newblock Generalization properties and implicit regularization for multiple
  passes {SGM}.
\newblock In \emph{International Conference on Machine Learning}, pp.\
  2340--2348. PMLR, 2016.

\bibitem[London(2017)]{london2017pac}
London, B.
\newblock A {PAC}-{B}ayesian analysis of randomized learning with application
  to stochastic gradient descent.
\newblock \emph{Advances in Neural Information Processing Systems}, 30, 2017.

\bibitem[McAllester(1998)]{mcallester98}
McAllester, D.~A.
\newblock Some {PAC}-bayesian theorems.
\newblock In \emph{Proceedings of the Eleventh Annual Conference on
  Computational Learning Theory}, pp.\  230–234, 1998.

\bibitem[Meyer(2001)]{meyer01}
Meyer, C.~D.
\newblock \emph{Matrix Analysis and Applied Linear Algebra}.
\newblock SIAM, 2001.

\bibitem[Nedic \& Ozdaglar(2009)Nedic and Ozdaglar]{nedic2009distributed}
Nedic, A. and Ozdaglar, A.
\newblock Distributed subgradient methods for multi-agent optimization.
\newblock \emph{IEEE Transactions on Automatic Control}, 54\penalty0
  (1):\penalty0 48--61, 2009.

\bibitem[Neglia et~al.(2020)Neglia, Xu, Towsley, and
  Calbi]{neglia2020decentralized}
Neglia, G., Xu, C., Towsley, D., and Calbi, G.
\newblock Decentralized gradient methods: does topology matter?
\newblock In \emph{International Conference on Artificial Intelligence and
  Statistics}, pp.\  2348--2358. PMLR, 2020.

\bibitem[Neu et~al.(2021)Neu, Dziugaite, Haghifam, and Roy]{neu2021information}
Neu, G., Dziugaite, G.~K., Haghifam, M., and Roy, D.~M.
\newblock Information-theoretic generalization bounds for stochastic gradient
  descent.
\newblock In \emph{Conference on Learning Theory}, pp.\  3526--3545. PMLR,
  2021.

\bibitem[Richards \& Rebeschini(2020)Richards and
  Rebeschini]{richards2020graph}
Richards, D. and Rebeschini, P.
\newblock Graph-dependent implicit regularisation for distributed stochastic
  subgradient descent.
\newblock \emph{Journal of Machine Learning Research}, 21\penalty0
  (34):\penalty0 1--44, 2020.

\bibitem[Scaman et~al.(2024)Scaman, Even, Le~Bars, and
  Massoulie]{scaman2024minimax}
Scaman, K., Even, M., Le~Bars, B., and Massoulie, L.
\newblock Minimax excess risk of first-order methods for statistical learning
  with data-dependent oracles.
\newblock In \emph{International Conference on Artificial Intelligence and
  Statistics}, pp.\  3709--3717. PMLR, 2024.

\bibitem[Schliserman \& Koren(2022)Schliserman and
  Koren]{schliserman2022stability}
Schliserman, M. and Koren, T.
\newblock Stability vs implicit bias of gradient methods on separable data and
  beyond.
\newblock In \emph{Conference on Learning Theory}, pp.\  3380--3394. PMLR,
  2022.

\bibitem[Schliserman \& Koren(2023)Schliserman and Koren]{schliserman2023tight}
Schliserman, M. and Koren, T.
\newblock Tight risk bounds for gradient descent on separable data.
\newblock \emph{Advances in Neural Information Processing Systems}, 2023.

\bibitem[Shalev-Shwartz et~al.(2010)Shalev-Shwartz, Shamir, Srebro, and
  Sridharan]{shalev2010learnability}
Shalev-Shwartz, S., Shamir, O., Srebro, N., and Sridharan, K.
\newblock Learnability, stability and uniform convergence.
\newblock \emph{The Journal of Machine Learning Research}, 11:\penalty0
  2635--2670, 2010.

\bibitem[Shawe-Taylor \& Williamson(1997)Shawe-Taylor and
  Williamson]{ShaweTaylor97}
Shawe-Taylor, J. and Williamson, R.~C.
\newblock A {PAC} analysis of a bayesian estimator.
\newblock In \emph{Proceedings of the Tenth Annual Conference on Computational
  Learning Theory}, pp.\  2–9, 1997.

\bibitem[Sun et~al.(2021)Sun, Li, and Wang]{sun2021stability}
Sun, T., Li, D., and Wang, B.
\newblock Stability and generalization of decentralized stochastic gradient
  descent.
\newblock In \emph{Proceedings of the AAAI Conference on Artificial
  Intelligence}, volume~35, pp.\  9756--9764, 2021.

\bibitem[Taheri \& Thrampoulidis(2023)Taheri and
  Thrampoulidis]{taheri2023generalization}
Taheri, H. and Thrampoulidis, C.
\newblock On generalization of decentralized learning with separable data.
\newblock In \emph{International Conference on Artificial Intelligence and
  Statistics}, pp.\  4917--4945. PMLR, 2023.

\bibitem[Xu \& Raginsky(2017)Xu and Raginsky]{xu2017information}
Xu, A. and Raginsky, M.
\newblock Information-theoretic analysis of generalization capability of
  learning algorithms.
\newblock \emph{Advances in Neural Information Processing Systems}, 30, 2017.

\bibitem[Ying et~al.(2021)Ying, Yuan, Chen, Hu, Pan, and
  Yin]{ying2021exponential}
Ying, B., Yuan, K., Chen, Y., Hu, H., Pan, P., and Yin, W.
\newblock Exponential graph is provably efficient for decentralized deep
  training.
\newblock \emph{Advances in Neural Information Processing Systems}, 34, 2021.

\bibitem[Zhang et~al.(2022)Zhang, Zhang, Bald, Pingali, Chen, and
  Goswami]{zhang2022stability}
Zhang, Y., Zhang, W., Bald, S., Pingali, V., Chen, C., and Goswami, M.
\newblock Stability of {SGD}: Tightness analysis and improved bounds.
\newblock In \emph{Uncertainty in Artificial Intelligence}, pp.\  2364--2373.
  PMLR, 2022.

\bibitem[Zhou et~al.(2018)Zhou, Liang, and Zhang]{zhou2018generalization}
Zhou, Y., Liang, Y., and Zhang, H.
\newblock Generalization error bounds with probabilistic guarantee for {SGD} in
  nonconvex optimization.
\newblock \emph{arXiv preprint arXiv:1802.06903}, 2018.

\bibitem[Zhu et~al.(2022)Zhu, He, Zhang, Niu, Song, and Tao]{zhu2022topology}
Zhu, T., He, F., Zhang, L., Niu, Z., Song, M., and Tao, D.
\newblock Topology-aware generalization of decentralized {SGD}.
\newblock In \emph{International Conference on Machine Learning}, pp.\
  27479--27503. PMLR, 2022.

\end{thebibliography}
\bibliographystyle{icml2024}

%%%%%%%%%%%%%%%%%%%%%%%%%%%%%%%%%%%%%%%%%%%%%%%%%%%%%%%%%%%%%%%%%%%%%%%%%%%%%%%
%%%%%%%%%%%%%%%%%%%%%%%%%%%%%%%%%%%%%%%%%%%%%%%%%%%%%%%%%%%%%%%%%%%%%%%%%%%%%%%
% APPENDIX
%%%%%%%%%%%%%%%%%%%%%%%%%%%%%%%%%%%%%%%%%%%%%%%%%%%%%%%%%%%%%%%%%%%%%%%%%%%%%%%
%%%%%%%%%%%%%%%%%%%%%%%%%%%%%%%%%%%%%%%%%%%%%%%%%%%%%%%%%%%%%%%%%%%%%%%%%%%%%%%
\newpage
\appendix
\onecolumn

% !TEX root = ../neurips_2023.tex

\begin{center}
    {\Large\textbf{Appendix}}
\end{center}
\section{Technical lemmas}
\label{app:lemmas}

Below, we provide important definitions and lemmas that are going to be useful in our analysis. All proofs can be found in \citet{hardt2016train}.

%Let $G_{\eta,z}(\theta) = \theta - \eta \nabla \ell(\theta;z)$ be the (stochastic) gradient update rule with $\eta>0$ and $z\in\calZ$.

\begin{definition}
    \label{def:sgd-update-rule}
    The (stochastic) gradient update rule with $\eta>0$, $z\in\calZ$ and loss function $\ell$ is given by $$G_{\eta,z}(\theta) = \theta - \eta \nabla \ell(\theta;z).$$
\end{definition}

\begin{definition}
    An update rule $G(\theta)$ is said to be $\nu$-expansive if:
    \begin{equation*}
        \sup_{\theta,\theta'}\frac{\|G(\theta) - G(\theta')\|_2}{\|\theta - \theta'\|_2} \leq \nu .
    \end{equation*}
\end{definition}

\begin{lemma}\emph{(Expansivity of $G_{\eta,z}$).} \label{lem:expansivity} If $\ell$ is $\beta$-smooth (Assumption \ref{ass:smooth}), we have:

    \begin{enumerate}
        \item $G_{\eta,z}(\theta)$ is $(1+\eta \beta)$-expansive;
        \item Assume in addition that $\ell(\cdot;z)$ is convex and $\eta<2/\beta$. Then $G_{\eta,z}(\theta)$ is $1$-expansive;
        \item Assume in addition that $\ell(\cdot;z)$ is $\mu$-strongly convex and $\eta<\frac{2}{\beta+\mu}$. Then $G_{\eta,z}(\theta)$ is $(1-\frac{\eta\beta\mu}{\beta+\mu})$-expansive.
    \end{enumerate}
    
\end{lemma}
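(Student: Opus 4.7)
The plan is to prove each of the three expansivity claims by directly analyzing
$$G_{\eta,z}(\theta) - G_{\eta,z}(\theta') = (\theta - \theta') - \eta \bigl(\nabla \ell(\theta;z) - \nabla \ell(\theta';z)\bigr),$$
and bounding either its norm (for part 1) or its squared norm (for parts 2 and 3), invoking the appropriate convex-analytic inequality depending on which regularity assumptions are active.

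Part (1) is immediate from the triangle inequality followed by $\beta$-smoothness (Assumption \ref{ass:smooth}): one gets
$\|G_{\eta,z}(\theta) - G_{\eta,z}(\theta')\|_2 \le \|\theta-\theta'\|_2 + \eta \|\nabla\ell(\theta;z) - \nabla\ell(\theta';z)\|_2 \le (1+\eta\beta)\|\theta-\theta'\|_2$. No interplay between the two terms is needed.

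For part (2), I would square the difference to obtain
$$\|G_{\eta,z}(\theta) - G_{\eta,z}(\theta')\|_2^2 = \|\theta-\theta'\|_2^2 - 2\eta \langle \nabla\ell(\theta;z) - \nabla\ell(\theta';z),\, \theta-\theta'\rangle + \eta^2 \|\nabla\ell(\theta;z) - \nabla\ell(\theta';z)\|_2^2,$$
and then invoke the Baillon--Haddad co-coercivity inequality, which asserts that any convex $\beta$-smooth function $f$ satisfies $\langle \nabla f(\theta) - \nabla f(\theta'), \theta-\theta'\rangle \ge \tfrac{1}{\beta}\|\nabla f(\theta)-\nabla f(\theta')\|_2^2$. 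Substituting yields the bound $\|\theta-\theta'\|_2^2 + \eta(\eta - 2/\beta)\|\nabla\ell(\theta;z) - \nabla\ell(\theta';z)\|_2^2$, whose second term is non-positive exactly when $\eta \le 2/\beta$. Taking square roots gives $1$-expansivity.

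For part (3), the same squared-norm expansion is combined with the stronger inequality available for any $\mu$-strongly convex, $\beta$-smooth function (a standard consequence of applying Baillon--Haddad to $\theta \mapsto \ell(\theta;z) - \tfrac{\mu}{2}\|\theta\|_2^2$, which is convex with $(\beta-\mu)$-smooth gradient):
$$\langle \nabla\ell(\theta;z) - \nabla\ell(\theta';z),\, \theta-\theta'\rangle \ge \tfrac{\beta\mu}{\beta+\mu}\|\theta-\theta'\|_2^2 + \tfrac{1}{\beta+\mu}\|\nabla\ell(\theta;z)-\nabla\ell(\theta';z)\|_2^2.$$
Substituting and using $\eta \le 2/(\beta+\mu)$ to eliminate the gradient-difference term leaves $\bigl(1 - \tfrac{2\eta\beta\mu}{\beta+\mu}\bigr)\|\theta-\theta'\|_2^2$. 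I would then close the proof by the elementary estimate $1 - 2x \le (1-x)^2$ applied with $x = \eta\beta\mu/(\beta+\mu)$, yielding the advertised contraction factor $(1-\eta\beta\mu/(\beta+\mu))$.

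No step is conceptually difficult; the only real work is recalling (or re-deriving) the two convex-analytic lower bounds on the cross term $\langle \nabla\ell(\theta;z)-\nabla\ell(\theta';z),\theta-\theta'\rangle$. If needed, Baillon--Haddad can be obtained in a self-contained way by noting that, for $f$ convex and $\beta$-smooth, $g(\theta) = \tfrac{\beta}{2}\|\theta\|_2^2 - f(\theta)$ is convex, and expanding the monotonicity inequality $\langle \nabla g(\theta)-\nabla g(\theta'),\theta-\theta'\rangle \ge 0$. The strongly convex version then follows by splitting off the quadratic $\tfrac{\mu}{2}\|\cdot\|_2^2$ and applying the convex case to the residual.
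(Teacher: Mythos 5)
Your proof is correct and follows essentially the same route as the argument the paper defers to (\citet{hardt2016train}, Lemma~3.7): triangle inequality plus $\beta$-smoothness for part~1, the Baillon--Haddad co-coercivity bound $\langle \nabla\ell(\theta;z)-\nabla\ell(\theta';z),\theta-\theta'\rangle \ge \tfrac{1}{\beta}\|\nabla\ell(\theta;z)-\nabla\ell(\theta';z)\|_2^2$ applied to the squared norm for part~2, and its strongly convex refinement followed by the elementary estimate $1-2x\le(1-x)^2$ for part~3. All steps check out, including the implicit requirement $1-\tfrac{2\eta\beta\mu}{\beta+\mu}\ge 0$, which holds since $\eta<\tfrac{2}{\beta+\mu}$ and $4\beta\mu\le(\beta+\mu)^2$.
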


\begin{lemma}\emph{(Growth recursion)} \label{lem:growth} Fix an arbitrary sequence of gradient update rule $G_{\eta_1,z_1}, \ldots, G_{\eta_T,z_T}$ and another sequence $G_{\eta_1,z'_1}, \ldots, G_{\eta_T,z'_T}$ with same loss function $\ell$ (Def. \ref{def:sgd-update-rule}). Let $\theta_0 = \theta'_0$ be a starting point in $\RR^d$ and define $\delta_t = \|\theta_t - \theta_t'\| $  where $\theta_t$, $ \theta_t'$ are defined recursively through 
    \begin{equation*}
        \theta_{t+1} = G_{\eta_t,z_t}(\theta_t), \theta_{t+1}' = G_{\eta'_t,z'_t}(\theta'_t).
    \end{equation*}
Then, we have the recurrence relation
\begin{align*}
    \delta_0 & = 0 \\
    \delta_{t+1} &\leq \left\{
        \begin{array}{ll}
            \nu\delta_t & \mbox{if } G_{\eta_t,z_t} = G_{\eta_t,z'_t} \mbox{ is } \nu\mbox{-expansive} \\
            \min{\{1,\nu\}}\delta_t + 2\eta_tL & \mbox{if } \ell \mbox{ is } L\mbox{-Lipschitz} \mbox{ and } G_{\eta_t,z_t} \mbox{ is } \nu\mbox{-expansive}
        \end{array}
    \right.
\end{align*}

\end{lemma}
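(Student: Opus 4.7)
The plan is to handle the two cases of the recurrence separately, starting from the observation that $\delta_0 = 0$ follows immediately from the assumption $\theta_0 = \theta'_0$.

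For the first case, when both trajectories apply exactly the same update rule $G = G_{\eta_t,z_t} = G_{\eta_t,z'_t}$ at step $t$, the bound is immediate: by definition of $\nu$-expansivity applied to $G$,
\begin{equation*}
\delta_{t+1} = \|G(\theta_t) - G(\theta'_t)\|_2 \leq \nu \|\theta_t - \theta'_t\|_2 = \nu \delta_t.
\end{equation*}

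For the second case, when the two update rules may differ through $z_t \neq z'_t$ but $\ell(\cdot;z)$ is $L$-Lipschitz for every $z$, I would produce two upper bounds on $\delta_{t+1}$ and then take the minimum. The first bound uses $\nu$-expansivity of $G_{\eta_t,z_t}$ and the triangle inequality to interpose $G_{\eta_t,z_t}(\theta'_t)$:
\begin{equation*}
\delta_{t+1} \leq \|G_{\eta_t,z_t}(\theta_t) - G_{\eta_t,z_t}(\theta'_t)\|_2 + \|G_{\eta_t,z_t}(\theta'_t) - G_{\eta_t,z'_t}(\theta'_t)\|_2 \leq \nu \delta_t + \eta_t \|\nabla\ell(\theta'_t;z_t) - \nabla\ell(\theta'_t;z'_t)\|_2,
\end{equation*}
and I bound the last norm by $2L$ using Assumption~\ref{ass:lipschitz} (which gives $\|\nabla\ell(\theta;z)\|_2 \leq L$ for every $z$). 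The second bound avoids expansivity entirely and simply unfolds the SGD update from Definition~\ref{def:sgd-update-rule}:
\begin{equation*}
\delta_{t+1} = \|(\theta_t - \theta'_t) - \eta_t\bigl(\nabla\ell(\theta_t;z_t) - \nabla\ell(\theta'_t;z'_t)\bigr)\|_2 \leq \delta_t + \eta_t(\|\nabla\ell(\theta_t;z_t)\|_2 + \|\nabla\ell(\theta'_t;z'_t)\|_2) \leq \delta_t + 2\eta_t L.
\end{equation*}
Combining the two bounds yields $\delta_{t+1} \leq \min\{1,\nu\}\delta_t + 2\eta_t L$, which is the stated recurrence.

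There is no genuine obstacle here: the argument is a short triangle-inequality manipulation together with the two hypotheses ($\nu$-expansivity of $G_{\eta_t,z_t}$ and $L$-Lipschitzness of $\ell(\cdot;z)$, which gives the uniform gradient bound). The only subtle point worth stating explicitly in the write-up is that the $\min\{1,\nu\}$ coefficient comes from deliberately keeping \emph{both} derivations (one using expansivity, one not) rather than picking a single one, so that the bound remains non-trivial both when $\nu < 1$ (contractive regime, e.g.\ strongly convex case of Lemma~\ref{lem:expansivity}) and when $\nu \geq 1$ (non-convex regime).
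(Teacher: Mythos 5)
Your proof is correct and follows essentially the same route as the one the paper relies on: the paper does not reprove this lemma but defers to \citet{hardt2016train}, whose argument is exactly your two-bound scheme --- expansivity plus a triangle inequality through $G_{\eta_t,z_t}(\theta'_t)$ giving $\nu\delta_t + 2\eta_t L$, a direct unfolding of the update giving $\delta_t + 2\eta_t L$, and taking the minimum (there the perturbation term is phrased via $\sigma$-boundedness of the update rules, which for gradient updates with $L$-Lipschitz loss is the same $\eta_t L$ bound you use). No gaps.
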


\section{Proofs of Section \ref{sec:convex}}

\subsection{Theorem \ref{thme:convex}}
\label{app:convex}

First, notice that if $\min_k\{W_{kk}\} = 0$, by assumption we have $\eta_t\leq \frac{2\min_k\{W_{kk}\}}{\beta} = 0$. Hence our algorithm is perfectly stable and the bound is trivially obtained as $A(S)=\theta^{(0)}$ is data-independent. In the following, we focus on the case where $\min_k\{W_{kk}\} > 0$.

Thanks to Lemma \ref{lemma:ob-avg-gen}, we simply need to prove that $A(S)$ is on average $\varepsilon$-stable with $\varepsilon \leq \frac{2L \sum_{t=0}^{T-1}\eta_t}{mn}$. Taking the notations of Def. \ref{def:on-average}, we denote by $A_k(S)=\theta_k^{(T)}$, and $A_k(S^{(ij)})=\tilde{\theta}_k^{(T)}$, the outputs of agent $k$ for D-SGD (Variant~B) at round $T$, run over $S$ and $S^{(ij)}$ respectively. More generally, $\{\theta_k^{(t)}\}_{t=0}^T$ (respectively $\{\tilde{\theta}_k^{(t)}\}_{t=0}^T$), refer to the iterates of agent $k$ run over $S$ (respectively $S^{(ij)}$). We also denote by $\{Z'_{vk}\}_{vk}$ the elements of the data set $S^{(ij)}$, i.e. $Z'_{vk} = Z_{vk}$ for $(v,k) \neq (i,j)$ and $Z'_{ij} = \TZ_{ij} \neq Z_{ij}$.

    For all $k = 1,\ldots,m$ and $t\geq 1$, we have

    \begin{align}
        \|\theta_k^{(t+1)} - & \tilde{\theta}_k^{(t+1)}\|_2   = \Big\|\sum^m_{l=1}W_{kl}\theta_l^{(t)} - \eta_t\nabla \ell(\theta^{(t)}_k;Z_{I^t_kk}) - \sum^m_{l=1}W_{kl}\tilde{\theta}_l^{(t)} + \eta_t\nabla \ell(\tilde{\theta}^{(t)}_k;Z'_{I^t_kk})\Big\|_2 \nonumber  \\
        & = \Big\|W_{kk}\Big(\theta_k^{(t)}- \frac{\eta_t}{W_{kk}}\nabla \ell(\theta^{(t)}_k;Z_{I^t_kk}) - \tilde{\theta}_k^{(t)} + \frac{\eta_t}{W_{kk}}\nabla \ell(\tilde{\theta}^{(t)}_k;Z'_{I^t_kk})\Big) + \sum^m_{l\neq k}W_{kl}(\theta_l^{(t)}-\tilde{\theta}_l^{(t)})  \Big\|_2 \nonumber \\
        & \leq W_{kk}\Big\|\theta_k^{(t)}- \frac{\eta_t}{W_{kk}}\nabla \ell(\theta^{(t)}_k;Z_{I^t_kk}) - \tilde{\theta}_k^{(t)} + \frac{\eta_t}{W_{kk}}\nabla \ell(\tilde{\theta}^{(t)}_k;Z'_{I^t_kk})\Big\|_2 + \sum^m_{l\neq k}W_{kl}\Big\|\theta_l^{(t)}-\tilde{\theta}_l^{(t)}  \Big\|_2
        \label{eq:almost_sure_convex}
    \end{align}

Thanks to Lemma~\ref{lem:expansivity} (part. 2) in
Appendix~\ref{app:lemmas} and the fact that, by assumption, $\forall k, \frac{\eta_t}{W_{kk}} \leq \frac{2}{\beta}$, the update rules $G_{\eta_t,Z_{I^t_kk}}(\theta)=\theta -\frac{\eta_t}{W_{kk}}\nabla\ell(\theta;Z_{I^t_kk})$ and $G_{\eta_t,Z'_{I^t_kk}}(\theta)=\theta -\frac{\eta_t}{W_{kk}}\nabla\ell(\theta;Z'_{I^t_kk})$ are $1$-expansive. Hence:

\fbox{If $k \neq j$,} we have $Z_{I^t_kk} = Z'_{I^t_kk}$, which gives from Eq. \eqref{eq:almost_sure_convex} and Lemma \ref{lem:growth} that:

\begin{align}
    \|\theta_k^{(t+1)} - \tilde{\theta}_k^{(t+1)}\|_2  & \leq W_{kk}\Big\|\theta_k^{(t)}- \frac{\eta_t}{W_{kk}}\nabla \ell(\theta^{(t)}_k;Z_{I^t_kk}) - \tilde{\theta}_k^{(t)} + \frac{\eta_t}{W_{kk}}\nabla \ell(\tilde{\theta}^{(t)}_k;Z_{I^t_kk})\Big\|_2 + \sum^m_{l\neq k}W_{kl}\Big\|\theta_l^{(t)}-\tilde{\theta}_l^{(t)}  \Big\|_2 \nonumber \\
    &  \leq W_{kk}\Big\|\theta_k^{(t)} - \tilde{\theta}_k^{(t)} \Big\|_2 + \sum^m_{l\neq k}W_{kl}\Big\|\theta_l^{(t)}-\tilde{\theta}_l^{(t)}  \Big\|_2 \nonumber \\
    & \leq \sum^m_{l=1}W_{kl}\Big\|\theta_l^{(t)}-\tilde{\theta}_l^{(t)}  \Big\|_2
    \label{eq:recurs_k}
\end{align}

\fbox{If $k = j$:} \\

\underline{With probability $1-\frac{1}{n}$}, $I^t_j\neq i$ so $Z_{I^t_jj} = Z'_{I^t_jj}$ and we therefore have again the relation of Equation \eqref{eq:recurs_k}.

%$Z_{I^t_ll} = Z'_{I^t_ll}$ for all $l=1,\ldots m$. It results that \underline{with probability $1-\frac{1}{n}$}:
%\begin{align*}
%    \frac{1}{m} \sum^m_{l=1}\Big\|\theta_l^{(t)} - \eta_t\nabla \ell(\theta^{(t)}_l;Z_{I^t_ll}) & - \tilde{\theta}_l^{(t)} + \eta_t\nabla \ell(\tilde{\theta}^{(t)}_l;Z'_{I^t_ll})\Big\|_2 \\
%    & = \frac{1}{m} \sum^m_{l=1}\Big\|\theta_l^{(t)} - \eta_t\nabla \ell(\theta^{(t)}_l;Z_{I^t_ll}) - \tilde{\theta}_l^{(t)} + \eta_t\nabla \ell(\tilde{\theta}^{(t)}_l;Z_{I^t_ll})\Big\|_2 \\
%    & \hspace{-0.25cm} \overset{Lem. \ref{lem:growth}}{\leq}\frac{1}{m} \sum^m_{l=1}\Big\|\theta_l^{(t)} - \tilde{\theta}_l^{(t)} \Big\|_2  = \dbar_t \;.
%\end{align*}

\underline{With probability $\frac{1}{n}$} however, $I^t_j=i$ and in that case $Z_{I^t_jj} = Z_{ij}\neq \TZ_{ij} =  Z'_{I^t_jj} $. 
With probability $\frac{1}{n}$, we therefore have: 

\begin{align}
    \|\theta_j^{(t+1)} - \tilde{\theta}_j^{(t+1)}\|_2  & \leq W_{jj}\Big\|\theta_j^{(t)}- \frac{\eta_t}{W_{jj}}\nabla \ell(\theta^{(t)}_j;Z_{ij}) - \tilde{\theta}_j^{(t)} + \frac{\eta_t}{W_{jj}}\nabla \ell(\tilde{\theta}^{(t)}_j;\TZ_{ij})\Big\|_2 + \sum^m_{l\neq j}W_{jl}\Big\|\theta_l^{(t)}-\tilde{\theta}_l^{(t)}  \Big\|_2 \label{eq:starting-point-lemma}  \\
    & \hspace{-0.2cm} \overset{Lem. \ref{lem:growth}}{\leq}  \hspace{-0.2cm} W_{jj}\Big(\|\theta_j^{(t)}-\tilde{\theta}^{(t)}_j\|_2 + \frac{2\eta_tL}{W_{jj}}\Big)+ \sum^m_{l\neq j}W_{jl}\Big\|\theta_l^{(t)}-\tilde{\theta}_l^{(t)}  \Big\|_2 \nonumber  \\
    & = \sum^m_{l=1}W_{jl}\Big\|\theta_l^{(t)}-\tilde{\theta}_l^{(t)}  \Big\|_2 + 2\eta_tL \nonumber
\end{align}

Denoting by $\delta^{(t)}_k(i,j)\triangleq \|\theta_k^{(t)} - \tilde{\theta}_k^{(t)}\|_2$ and combining previous results, we get for all $k=1,\ldots,m$, the recursion: 
\begin{align*}
    \EE[\delta^{(T)}_k(i,j)] \leq \sum^m_{l=1}W_{kl} \EE[\delta^{(T-1)}_l(i,j)]  + \frac{2\eta_tL}{n}\mathds{1}_{\{k=j\}}\;.
\end{align*}

In vector format, this writes $\EE[\delta^{(T)}(i,j)] \leq W \EE[\delta^{(T-1)}(i,j)]  + \frac{2\eta_tL}{n}e_j$ (the inequality is meant coordinate-wise), where $\delta^{(t)}(i,j)$ contains the values of $\delta_k^{(t)}(i,j)$, $\forall k$ and $e_j$ is the $j$-th element of the canonical basis. Unrolling this recursion until $t=0$, and noticing that $\delta_k^{(0)}(i,j)=0$, we get:
\begin{equation}
\label{eq:final-convex-worst}
\EE[\delta^{(T)}(i,j)] \leq \frac{2L}{n}\sum_{t=0}^{T-1}W^{T-t-1}e_j\eta_t \Longrightarrow \EE[\delta_k^{(T)}(i,j)] \leq \frac{2L}{n}\sum_{t=0}^{T-1}(W^{T-t-1})_{kj}\eta_t
\end{equation}

 Averaging over $i$ and $j$ and using the fact that the power of $W$ is also doubly stochastic, i.e. $\sum_j(W^{T-t-1})_{kj} = 1$, we obtain that the on-average model stability is upper bounded by $\frac{2L \sum_{t=0}^{T-1}\eta_t}{mn}$, which concludes the proof with a direct application of Lemma \ref{lemma:ob-avg-gen}.

\hfill\qedsymbol{}

\subsection{Theorem \ref{thme:strongly}}
\label{app:strongly-convex}

Like for convex functions, if $\min_k\{W_{kk}\} = 0$ we have $\eta_t  = 0$, and the bound is trivially obtained. In the following, we therefore focus on the case where $\min_k\{W_{kk}\} > 0$.

The proof is analogous to the one obtained for the general convex case 
(Theorem \ref{thme:convex}). We keep the same notations, where $
\EE\delta^{(T)}_k(i,j)= \EE\|\theta_k^{(T)} - \tilde{\theta}_k^{(T)}\|_2$ is the quantity we wish to control, on average over $i$ and $j$. Using the fact that the Euclidean projection $\Pi_\Theta$ is 1-expansive  (see e.g. Lemma 4.6 in \citet{hardt2016train}), we can directly obtain Equation \eqref{eq:almost_sure_convex} using the same arguments.

    Thanks to Lemma \ref{lem:expansivity} (part. 3), we notice that for all
    $k$, the update rules $G_{\eta,Z_{I^t_kk}}(\theta)=\theta -\frac{\eta}{W_{kk}}\nabla\ell
    (\theta;Z_{I^t_kk})$ and $G_{\eta,Z'_{I^t_kk}}(\theta)=\theta
    -\frac{\eta}{W_{kk}}\nabla\ell(\theta;Z'_{I^t_kk})$ are $(1-\frac{\eta\mu}
    {2W_{kk}})$-expansive. Indeed, since we always have $\mu\leq\beta$ and by assumption $\frac{\eta}{W_{kk}}\leq
    \frac{1}{\beta}\leq \frac{2}{\beta+\mu}$, we can apply the lemma and then use the fact that $\frac{\eta\beta\mu}{W_{kk}(\beta+\mu)}\geq
    \frac{\eta\beta\mu}{2W_{kk}\beta}= \frac{\eta\mu}{2W_{kk}}$. We now follow the proof of the convex case by splitting the analysis similarly.

    \fbox{If $k \neq j$,} we have $Z_{I^t_kk} = Z'_{I^t_kk}$, which gives from Eq. \eqref{eq:almost_sure_convex} and Lemma \ref{lem:growth} with the $(1-\frac{\eta\mu}
    {2W_{kk}})$-expansivity, that:
    \begin{align}
        \|\theta_k^{(t+1)} - \tilde{\theta}_k^{(t+1)}\|_2  & \leq W_{kk} \Big(1-\frac{\eta\mu}
    {2W_{kk}}\Big)\Big\|\theta_k^{(t)}-\tilde{\theta}_k^{(t)}  \Big\|_2 + \sum^m_{l\neq j }W_{kl}\Big\|\theta_l^{(t)}-\tilde{\theta}_l^{(t)}  \Big\|_2 \nonumber \\
    & = \sum^m_{l=1 }W_{kl}\Big\|\theta_l^{(t)}-\tilde{\theta}_l^{(t)}  \Big\|_2
    - \frac{\eta\mu}{2}\Big\|\theta_k^{(t)}-\tilde{\theta}_k^{(t)}  \Big\|_2
        \label{eq:recurs_k_strongly}
    \end{align}

\fbox{If $k = j$:} \\

\underline{With probability $1-\frac{1}{n}$}, $I^t_j\neq i$ so $Z_{I^t_jj} = Z'_{I^t_jj}$ and we therefore have again the relation of Eq. \eqref{eq:recurs_k_strongly}.

\underline{With probability $\frac{1}{n}$} however, $I^t_j=i$ and in that case $Z_{I^t_jj} = Z_{ij}\neq \TZ_{ij} =  Z'_{I^t_jj} $. 
With probability $\frac{1}{n}$, we therefore have: 

\begin{align}
    \|\theta_j^{(t+1)} - \tilde{\theta}_j^{(t+1)}\|_2  & \leq W_{jj}\Big\|\theta_j^{(t)}- \frac{\eta}{W_{jj}}\nabla \ell(\theta^{(t)}_j;Z_{ij}) - \tilde{\theta}_j^{(t)} + \frac{\eta}{W_{jj}}\nabla \ell(\tilde{\theta}^{(t)}_j;\TZ_{ij})\Big\|_2 + \sum^m_{l\neq j}W_{jl}\Big\|\theta_l^{(t)}-\tilde{\theta}_l^{(t)}  \Big\|_2 \nonumber  \\
    & \hspace{-0.2cm} \overset{Lem. \ref{lem:growth}}{\leq}  \hspace{-0.2cm} W_{jj}\Big(\big(1-\frac{\eta\mu}{2W_{jj}}\big)\|\theta_j^{(t)}-\tilde{\theta}^{(t)}_j\|_2 + \frac{2\eta L}{W_{jj}}\Big)+ \sum^m_{l\neq j}W_{jl}\Big\|\theta_l^{(t)}-\tilde{\theta}_l^{(t)}  \Big\|_2 \nonumber  \\
    & = \sum^m_{l=1}W_{jl}\Big\|\theta_l^{(t)}-\tilde{\theta}_l^{(t)}  \Big\|_2 - \frac{\eta\mu}{2}\Big\|\theta_j^{(t)}-\tilde{\theta}_j^{(t)}  \Big\|_2 + 2\eta L \nonumber
\end{align}

Combining previous results, we get for all $k=1,\ldots,m$, the recursion: 
\begin{align*}
    \EE[\delta^{(T)}_k(i,j)] \leq \sum^m_{l=1}W_{kl} \EE[\delta^{(T-1)}_l(i,j)] - \frac{\eta\mu}{2}\EE[\delta^{(T-1)}_k(i,j)] + \frac{2\eta L}{n}\mathds{1}_{\{k=j\}}\;.
\end{align*}

In vector format, this writes (the inequality is meant coordinate-wise) $$\EE[\delta^{(T)}(i,j)] \leq \Big(W - \frac{\eta\mu}{2}I\Big)\EE[\delta^{(T-1)}(i,j)]  + \frac{2\eta_tL}{n}e_j\;,$$ where $\delta^{(T)}(i,j)$ contains the values of $\delta_k^{(T)}(i,j)$, $\forall k$ and $e_j$ is the $j$-th element of the canonical basis. Unrolling this recursion until $t=0$, and noticing that $\delta_k^{(0)}(i,j)=0$, we get:
\begin{equation*}
\EE[\delta^{(T)}(i,j)] \leq \frac{2\eta L}{n}\sum_{t=0}^{T-1}\Big(W - \frac{\eta\mu}{2}I\Big)^{t}e_j\;.
\end{equation*}
 Averaging the previous equation over $i$ and $j$ and using the fact that $\sum_{j=1}^m e_j=\1$, we have

 \begin{equation*}
     \frac{1}{mn}\sum_{i=1}^n\sum_{j=1}^m\EE[\delta^{(T)}(i,j)] \leq \frac{2\eta L}{mn}\sum_{t=0}^{T-1}\Big(W - \frac{\eta\mu}{2}I\Big)^{t}\1\;.
 \end{equation*}

 Since $(W-\frac{\eta\mu}{2}I)\1=(1-\frac{\eta\mu}{2})\1$, by induction we have $(W-\frac{\eta\mu}{2}I)^t\1=(1-\frac{\eta\mu}{2})^t\1$ and we can finally get $\forall k$:
\begin{align}
\label{eq:final-worst-strongly}
    \frac{1}{mn}\sum_{i=1}^n\sum_{j=1}^m\EE[\delta_k^{(T)}(i,j)] \leq \frac{2\eta L}{mn}\sum_{t=0}^{T-1}\Big(1 - \frac{\eta\mu}{2}\Big)^{t} \leq \frac{4L}{\mu mn}\;,
\end{align}

 which makes $A_k$ on average $\varepsilon$-stable (Def. \ref{def:on-average}) with $\varepsilon = \frac{4L}{\mu mn}$. Like in the convex case, a direct application of Lemma \ref{lemma:ob-avg-gen} completes the proof.

 \hfill\qedsymbol{}

\section{Proofs of Section \ref{sec:non-convex}}

\subsection{Theorem \ref{thme:non-convex}}
\label{app:non-convex}

Our analysis for the non-convex case relies on on-average model stability and leverages the fact that D-SGD can make
several steps before using the one example that has been swapped. This idea is
summarized in the following lemma.

\begin{lemma} \label{lemma:key-non-conv} Assume that the loss function $\ell
(\cdot,z)$ is nonnegative and $L$-Lipschitz for all $z$. For all $i=1,\ldots,n$ and $j=1,\ldots, m$, let $\{\theta_k^{(t)}\}_{t=0}^T$ and $ \{\tilde{\theta}_k^{(t)}(i,j)\}_{t=0}^T$, the iterates of agent $k = 1,\ldots, m$ for D-SGD (Variant A and B) run on $S$ and $S^{(ij)}$ respectively. Then, for every $t_0 \in \{0, 1, \ldots, T\}$ we have:
    \begin{equation}
        |\EE_{A,S}[R(A_k(S)) - R_S(A_k(S))]| \leq \frac{t_0}{n}\sup_{\theta, z} \ell(\theta;z) + \frac{L}{mn}\sum_{i=1}^n\sum_{j=1}^m\EE[\delta_k^{(T)}(i,j)\big| \delta^{(t_0)}(i,j) = \mathbf{0}] 
    \end{equation}

where $\delta^{(t)}(i,j)$ is the vector containing $\forall k=1,\ldots,m$, $\delta_k^{(t)}(i,j) = \|\theta_k^{(t)} - \tilde{\theta}_k^{(t)}(i,j)\|_2$.
\end{lemma}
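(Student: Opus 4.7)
The plan is a three-step reduction: (i) prove the auxiliary Lemma \ref{lemma:key-non-conv}, which the sketch invokes but does not justify, by adapting the classical selection argument of \citet{hardt2016train} to the on-average setting; (ii) bound the conditional on-average model deviation $\Delta_k^{(T)}$ by unrolling an expansivity recursion initialized from $\Delta^{(t_0)} = \mathbf{0}$; and (iii) balance the two terms in Lemma \ref{lemma:key-non-conv} by an optimal choice of $t_0$. For step (i), I would couple the two D-SGD runs on $S$ and $S^{(ij)}$ with shared randomness and let $E_{ij}$ be the event that agent $j$ does not draw index $i$ during the first $t_0$ rounds: a union bound gives $P(E_{ij}^c) \leq t_0/n$, while on $E_{ij}$ the trajectories coincide up to time $t_0$, so $\delta^{(t_0)}(i,j) = \mathbf{0}$. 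Splitting $\EE|\ell(\theta_k^{(T)};z) - \ell(\tilde{\theta}_k^{(T)};z)|$ across $E_{ij}$ and $E_{ij}^c$, using $\ell \in [0,1]$ on the latter and $L$-Lipschitzness plus the conditional stability on the former, and finally averaging over $(i,j)$ as in Lemma \ref{lemma:ob-avg-gen}, yields the claimed decomposition.

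For step (ii), starting from time $t_0$ and conditioning on $\delta^{(t_0)}(i,j) = \mathbf{0}$, I would reuse the Variant-B update decomposition from the proof of Theorem \ref{thme:convex}, but now invoke part (1) of Lemma \ref{lem:expansivity}: $G_{\eta_t/W_{kk}, z}$ is $(1+(\eta_t/W_{kk})\beta)$-expansive, and the outer $W_{kk}$ cancels the inner $1/W_{kk}$ in the growth factor. Combined with Lemma \ref{lem:growth} for the single mismatched sample, and then averaged over $(i,j)$, this yields coordinate-wise the clean vector recursion
\begin{equation*}
    \Delta^{(t+1)} \leq (W + \eta_t \beta I)\,\Delta^{(t)} + \frac{2\eta_t L}{mn}\mathbf{1}.
\end{equation*}
The crucial observation is that $\mathbf{1}$ is an eigenvector of $W$ with eigenvalue one (Assumption \ref{ass:doubly}), so unrolling from $\Delta^{(t_0)} = \mathbf{0}$ gives, for every $k$,
\begin{equation*}
    \Delta_k^{(T)} \leq \frac{2L}{mn}\sum_{s=t_0}^{T-1}\eta_s \prod_{u=s+1}^{T-1}(1+\eta_u\beta).
\end{equation*}
The communication matrix has been absorbed into the scalar factors $1 + \eta_u\beta$, which is precisely what removes the graph dependence from the bound.

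With $\eta_u \leq c/(u+1)$, the inequality $1+x \leq e^x$ combined with $\sum_{u=s+1}^{T-1} 1/(u+1) \leq \ln(T/(s+1))$ gives $\prod_{u=s+1}^{T-1}(1+\eta_u\beta) \leq (T/(s+1))^{c\beta}$, and an elementary integral estimate $\sum_{s=t_0}^{T-1} (s+1)^{-c\beta-1} \leq 1/(c\beta\, t_0^{c\beta})$ yields $\Delta_k^{(T)} \leq (2L/(\beta mn))(T/t_0)^{c\beta}$. Inserting this into Lemma \ref{lemma:key-non-conv} (using $\sup\ell \leq 1$) gives $|\EE[R(A_k(S)) - R_S(A_k(S))]| \leq t_0/n + (2L^2/(\beta mn))(T/t_0)^{c\beta}$; differentiating in $t_0$ yields the minimizer $t_0^\star = (2cL^2/m)^{1/(c\beta+1)} T^{c\beta/(c\beta+1)}$, and a direct calculation produces the stated closed form.

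The main obstacle is step (ii): the per-agent distance recursion must be symmetrized so that the inhomogeneity becomes \emph{uniform} across agents after averaging over $(i,j)$. Without this symmetrization the inhomogeneous term would be proportional to $e_j$ rather than $\mathbf{1}$, and the unrolled bound would carry $\prod(W + \eta_u\beta I)\,e_j$ whose coordinates genuinely depend on powers of $W$; this would reintroduce the spectral-gap dependence responsible for the suboptimal extra term in \citet{sun2021stability}. Using on-average model stability (rather than the uniform stability used by \citet{hardt2016train}) is what enables this cancellation.
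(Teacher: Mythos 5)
Your proof of Lemma \ref{lemma:key-non-conv} is correct and follows essentially the same route as the paper's: symmetrize the generalization error into an average over $(i,j)$ of per-sample loss differences, condition on the swapped sample not having been selected before $t_0$ (the paper conditions on the slightly larger event $\{\delta^{(t_0)}(i,j)=\mathbf{0}\}\supseteq\{T_0>t_0\}$ and bounds its complement by $\IP(T_0\le t_0)\le t_0/n$, which is exactly your union bound), then use the bound $\sup_{\theta,z}\ell(\theta;z)$ on the complement and $L$-Lipschitzness together with the conditional model deviation on the event. The remaining steps of your proposal concern Theorem \ref{thme:non-convex} rather than this lemma, and they too match the paper's argument, including the key observation that averaging over $(i,j)$ turns the inhomogeneity $e_j$ into $\mathbf{1}$ so that the powers of $W$ collapse to scalars.
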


\begin{proof}
    Consider the notation of Def. \ref{def:on-average} and notice that 

$$R(A_k(S)) = \frac{1}{m}\sum_{j=1}^m\EE_{Z \sim \calD_j} [\ell(A_k(S); Z)] = \frac{1}{mn}\sum_{j=1}^m\sum_{j=1}^n\EE_{\TS} [\ell(A_k(S); \TZ_{ij})].$$ Then, for all $k = 1,\ldots,m$, by linearity of expectation we have

\begin{align}
    \EE_{A,S}[R(A_k(S)) - R_S(A_k(S))] & = \EE_{A,S,\TS}\Bigg[\frac{1}{mn}\sum_{j=1}^m\sum_{i=1}^n \Big(\ell(A_k(S); \TZ_{ij}) - \ell(A_k(S); Z_{ij})\Big)\Bigg] \nonumber \\
    & = \EE_{A,S,\TS}\Bigg[\frac{1}{mn}\sum_{j=1}^m\sum_{i=1}^n \Big(\ell(A_k(S^{(ij)}); Z_{ij}) - \ell(A_k(S); Z_{ij})\Big)\Bigg].\nonumber
\end{align}

Hence,
\begin{align*}
    |\EE_{A,S}[R(A_k(S)) - R_S(A_k(S))]| & \leq \EE_{A,S,\TS}\Bigg[\frac{1}{mn}\sum_{j=1}^m\sum_{i=1}^n \Big|\ell(A_k(S^{(ij)}); Z_{ij}) - \ell(A_k(S); Z_{ij})\Big|\Bigg]  \\
    & = \frac{1}{mn}\sum_{j=1}^m\sum_{i=1}^n \EE_{A,S,\TS}\Bigg[ \Big|\ell(A_k(S^{(ij)}); Z_{ij}) - \ell(A_k(S); Z_{ij})\Big|\Bigg]
\end{align*}

Let the event $\calE (i,j) = \{\delta^{(t_0)}(i,j) = \mathbf{0}\} $, we have $\forall i,j$:

    \begin{align*}
        \EE_{A,S,\TS}& \Big[ \big|\ell(A_k(S^{(ij)}); Z_{ij})  - \ell(A_k(S); Z_{ij})\big|\Big] \\
        & = \IP(\mathcal{E}(i,j))\EE[|\ell(A_k(S^{(ij)}); Z_{ij}) - \ell(A_k(S); Z_{ij})| \big| \mathcal{E}(i,j)]  \\
        & \hspace{4cm}+  \IP(\mathcal{E}(i,j)^c)\EE[|\ell(A_k(S^{(ij)}); Z_{ij}) - \ell(A_k(S); Z_{ij})| \big| \mathcal{E}(i,j)^c] \\
        & \leq \EE[|\ell(A_k(S^{(ij)}); Z_{ij}) - \ell(A_k(S); Z_{ij})| \big| \mathcal{E}(i,j)] +\IP(\mathcal{E}(i,j)^c)\cdot \sup_{\theta,z}\ell(\theta;z) \\ 
        & \leq L\EE[\|A_k(S)  - A_k(S^{(ij)})\| \big| \mathcal{E}(i,j)]+\IP(\mathcal{E}(i,j)^c)\cdot \sup_{\theta,z}\ell(\theta;z) \\
        & = L\EE[\delta_k^{(T)}(i,j) \big| \mathcal{E}(i,j)]+\IP(\mathcal{E}(i,j)^c)\cdot \sup_{\theta,z}\ell(\theta;z)
    \end{align*}

    It remains to bound $\IP(\mathcal{E}(i,j)^c)$. Let $T_0$ be the random variable of the first time step D-SGD uses the swapped example. Since we necessarily have $\{T_0 > t_0\} \subset \mathcal{E}(i,j)$, we have $ \mathcal{E}(i,j)^c \subset \{T_0 \leq t_0\}$ and therefore $\IP(\mathcal{E}(i,j)^c) \leq \IP(T_0 \leq t_0) = \sum_{t=1}^{t_0}\IP(T_0=t)\leq \sum_{t=1}^{t_0}\frac{1}{n} = \frac{t_0}{n}$. Averaging over $i$ and $j$ completes the proof.

\end{proof}

We can now move on to the proof of the main theorem. We first apply Lemma \ref{lemma:key-non-conv} and the fact that, by assumption, $\ell \in [0,1]$, so that for any $t_0 \in \{0, 1, \ldots, T\}$ and any $k=1,\ldots, m$, we have:

\begin{equation}
\label{eq:right-hand}
    |\EE_{A,S}[R(A_k(S)) - R_S(A_k(S))]| \leq \frac{t_0}{n}+ \frac{L}{mn}\sum_{i=1}^n\sum_{j=1}^m\EE[\delta_k^{(T)}(i,j)\big| \delta^{(t_0)}(i,j) = \mathbf{0}] 
\end{equation}

It remains to control the right-hand term of Equation \eqref{eq:right-hand}. We start with the proof for the variant B of D-SGD. The proof for Variant A will follow.

\textbf{Variant B:}

For a fixed couple $(i,j)$, we are first going to control the vector $\Delta^{(t)}(i,j) \triangleq\EE[\delta^{(t)}(i,j) | \delta^{(t_0)}(i,j) = \mathbf{0}]$. When it is clear from context, we simply write $\tilde{\theta}_k^{(t)}(i,j) = \tilde{\theta}_k^{(t)}$. The proof is analogous to the one obtained for convex cases 
(Theorem \ref{thme:convex} and \ref{thme:strongly}) and we can start directly from Equation \eqref{eq:almost_sure_convex} using the same arguments.

    Thanks to Lemma \ref{lem:expansivity} (part. 3), we notice that for all
    $k$, the update rules $G_{\eta_t,Z_{I^t_kk}}(\theta)=\theta -\frac{\eta_t}{W_{kk}}\nabla\ell
    (\theta;Z_{I^t_kk})$ and $G_{\eta_t,Z'_{I^t_kk}}(\theta)=\theta
    -\frac{\eta_t}{W_{kk}}\nabla\ell(\theta;Z'_{I^t_kk})$ are $(1+\frac{\eta_t\beta}
    {W_{kk}})$-expansive. Following the proof of the convex cases, we can split the analysis similarly.

    \fbox{If $k \neq j$,} we have $Z_{I^t_kk} = Z'_{I^t_kk}$, which gives from Eq. \eqref{eq:almost_sure_convex} and Lemma \ref{lem:growth} with the $(1+\frac{\eta_t\beta}
    {W_{kk}})$-expansivity, that:
    \begin{align}
        \|\theta_k^{(t+1)} - \tilde{\theta}_k^{(t+1)}\|_2  & \leq W_{kk} \Big(1+\frac{\eta_t\beta}
    {W_{kk}}\Big)\Big\|\theta_k^{(t)}-\tilde{\theta}_k^{(t)}  \Big\|_2 + \sum^m_{l\neq j }W_{kl}\Big\|\theta_l^{(t)}-\tilde{\theta}_l^{(t)}  \Big\|_2 \nonumber \\
    & = \sum^m_{l=1 }W_{kl}\Big\|\theta_l^{(t)}-\tilde{\theta}_l^{(t)}  \Big\|_2
    + \eta_t\beta\Big\|\theta_k^{(t)}-\tilde{\theta}_k^{(t)}  \Big\|_2
        \label{eq:recurs_k_nonconv}
    \end{align}

\fbox{If $k = j$:} \\

\underline{With probability $1-\frac{1}{n}$}, $I^t_j\neq i$ so $Z_{I^t_jj} = Z'_{I^t_jj}$ and we therefore have again the relation of Eq. \eqref{eq:recurs_k_nonconv}.

\underline{With probability $\frac{1}{n}$} however, $I^t_j=i$ and in that case $Z_{I^t_jj} = Z_{ij}\neq \TZ_{ij} =  Z'_{I^t_jj} $. 
With probability $\frac{1}{n}$, we therefore have: 

\begin{align}
    \|\theta_j^{(t+1)} - \tilde{\theta}_j^{(t+1)}\|_2  & \leq W_{jj}\Big\|\theta_j^{(t)}- \frac{\eta_t}{W_{jj}}\nabla \ell(\theta^{(t)}_j;Z_{ij}) - \tilde{\theta}_j^{(t)} + \frac{\eta_t}{W_{jj}}\nabla \ell(\tilde{\theta}^{(t)}_j;\TZ_{ij})\Big\|_2 + \sum^m_{l\neq j}W_{jl}\Big\|\theta_l^{(t)}-\tilde{\theta}_l^{(t)}  \Big\|_2 \nonumber  \\
    & \hspace{-0.2cm} \overset{Lem. \ref{lem:growth}}{\leq}  \hspace{-0.2cm} W_{jj}\Big(\big(1+\frac{\eta_t\beta}
    {W_{jj}}\big)\|\theta_j^{(t)}-\tilde{\theta}^{(t)}_j\|_2 + \frac{2\eta_t L}{W_{jj}}\Big)+ \sum^m_{l\neq j}W_{jl}\Big\|\theta_l^{(t)}-\tilde{\theta}_l^{(t)}  \Big\|_2 \nonumber  \\
    & = \sum^m_{l=1}W_{jl}\Big\|\theta_l^{(t)}-\tilde{\theta}_l^{(t)}  \Big\|_2 + \eta_t\beta\Big\|\theta_j^{(t)}-\tilde{\theta}_j^{(t)}  \Big\|_2 + 2\eta_t L \nonumber
\end{align}

From the previous equations, we get that $\Delta^{(t+1)}(i,j) \leq (W + \eta_t\beta I)\Delta^{(t)}(i,j) + \frac{2\eta_t L}{n}e_j$ (the inequality, and the following ones are meant coordinate-wise). Let $\Delta^{(t)}=\frac{1}{mn}\sum_{i,j}\Delta^{(t)}(i,j)$, then using the fact that $\eta_t\leq \frac{c}{t+1}$, $c>0$, we have $\forall t\geq t_0$:

\begin{align*}
    \Delta^{(t+1)} & \leq (W + \eta_t\beta I)\Delta^{(t)} + \frac{2\eta_t L}{mn}\1 \\
    & \leq (W + \frac{c \beta}{t+1} I)\Delta^{(t)} + \frac{2c L}{mn(t+1)}\1
\end{align*}

Since $\Delta^{(t_0)} = \mathbf{0}$, we can unroll the previous recursion from $T$ to $t_0+1$ and get:

\begin{align*}
    \Delta^{(T)} & \leq \frac{2cL}{Tmn}\1 + \sum_{t=t_0+1}^{T-1}\Big\{\prod_{\tau=t+1}^T \Big(W + \frac{c \beta}{\tau} I\Big)\Big\}\frac{2cL}{tmn}\1 \nonumber \\ 
    & = \frac{2cL}{Tmn}\1 + \sum_{t=t_0+1}^{T-1}\Big\{\prod_{\tau=t+1}^T \Big(1 + \frac{c \beta}{\tau} \Big)\Big\}\frac{2cL}{tmn}\1\;, \nonumber 
\end{align*}

where in the last equality we used the fact that $(W + \frac{c \beta}{\tau} I)\1 = (1+ \frac{c \beta}{\tau})\1$, which by induction gives $\prod_{\tau}\Big(W + \frac{c \beta}{\tau} I\Big)\1 = \prod_{\tau} \Big(1 + \frac{c \beta}{\tau} \Big)\1$. Then, we focus on the coordinate of interest $k$ and using the fact that $1+x\leq \exp(x)$, we have:
\begin{align}
    \Delta_k^{(T)} & \leq \frac{2cL}{Tmn} + \sum_{t=t_0+1}^{T-1}\Big\{\prod_{\tau=t+1}^T \exp \Big(\frac{c\beta}{\tau}\Big)\Big\}\frac{2cL}{tmn} \nonumber \\ 
    & = \frac{2cL}{Tmn} + \sum_{t=t_0+1}^{T-1}\exp \Big(c\beta\sum_{\tau=t+1}^T \frac{1}{\tau}\Big)\frac{2cL}{tmn} \nonumber \\
    & \leq \frac{2cL}{Tmn} + \sum_{t=t_0+1}^{T-1}\exp \Big(c\beta \log\big(\frac{T}{t}\big)\Big)\frac{2cL}{tmn} \nonumber \\
    & = \frac{2cL}{Tmn} + \sum_{t=t_0+1}^{T-1}\Big(\frac{T}{t}\Big)^{c\beta}\frac{2cL}{tmn}\nonumber \\
    & = \frac{2cLT^{c\beta}}{T^{c\beta +1}mn} + \frac{2cLT^{c\beta}}{mn}\sum_{t=t_0+1}^{T-1} t^{-c\beta-1} \nonumber \\
    & = \frac{2cLT^{c\beta}}{mn}\sum_{t=t_0+1}^{T} t^{-c\beta-1} \nonumber \\
    & \leq \frac{2cLT^{c\beta}}{mn} \frac{t_0^{-\beta c}}{c\beta} = \frac{2L}{\beta mn}\Big(\frac{T}{t_0}\Big)^{c \beta}, \label{eq:recursion-non-convex}
\end{align}
where the last inequality is obtained using bounds over (partial) harmonic series.

Plugging this result into \eqref{eq:right-hand}, we obtain
\begin{equation*}
    |\EE_{A,S}[R(A_k(S)) - R_S(A_k(S))]| \leq \frac{t_0}{n} + \frac{2L^2}{\beta mn}\Big(\frac{T}{t_0}\Big)^{c \beta}.
\end{equation*}

Then, taking $t_0 = \Big(\frac{2L^2c}{m}\Big)^{\frac{1}{c\beta + 1}}T^{\frac{c\beta}{c\beta + 1}}$ (approximate minimizer of the right-hand term above), we have 
\begin{equation*}
    |\EE_{A,S}[R(A_k(S)) - R_S(A_k(S))]| \leq (1+\frac{1}{\beta c})(2cL^2)^{\frac{1}{\beta c +1}}\frac{T^{\frac{\beta c}{\beta c +1}}}{m^{\frac{1}{\beta c +1}}n}\;,
\end{equation*}
which concludes the proof for Variant B.

\begin{remark}
Note that the inequality \eqref{eq:recursion-non-convex} is not optimal in the specific case $t_0 = 0$ (it diverges) and somehow prevents us from taking $t_0=0$ in the proof. However, taking this value could be optimal in some regimes. Hence, the analysis and our final bound could be slightly improved by looking at the minimum between the cases where $t_0=0$ or not. When taking $t_0=0$, the bound in Eq. \eqref{eq:recursion-non-convex} can be improved to $\frac{2cLT^{c\beta}}{mn}(1+\frac{1}{c\beta})$.   
\end{remark}

\textbf{Variant A:}

The proof for the variant A is essentially the same, where instead of Equation \eqref{eq:almost_sure_convex}, we have:

    \begin{align}
        \|\theta_k^{(t+1)} -  \tilde{\theta}_k^{(t+1)}\|_2  & = \Big\|\sum^m_{l=1}W_{kl}\Big(\theta_l^{(t)} - \eta_t\nabla \ell(\theta^{(t)}_l;Z_{I^t_ll})\Big) - \sum^m_{l=1}W_{kl}\Big(\tilde{\theta}_l^{(t)} + \eta_t\nabla \ell(\tilde{\theta}^{(t)}_l;Z'_{I^t_ll})\Big)\Big\|_2 \nonumber  \\
        & = \sum^m_{l=1}W_{kl}\Big\|\theta_l^{(t)} - \eta_t\nabla \ell(\theta^{(t)}_l;Z_{I^t_ll}) + \tilde{\theta}_l^{(t)} + \eta_t\nabla \ell(\tilde{\theta}^{(t)}_l;Z'_{I^t_ll})\Big\|_2
        \label{eq:almost_sure-nonconvex}
    \end{align}

Since, thanks to Lemma \ref{lem:expansivity} (part. 1), the update rules $G_{\eta_t,Z_{I^t_ll}}(\theta)=\theta -\eta_t\nabla\ell(\theta;Z_{I^t_ll})$ and $G_{\eta_t,Z'_{I^t_ll}}(\theta)=\theta -\eta_t\nabla\ell(\theta;Z'_{I^t_ll})$ are $(1+\eta_t\beta)$-expansive, we can use the same arguments as before to show that

\begin{equation*}
    \EE[\|\theta_l^{(t)} - \eta_t\nabla \ell(\theta^{(t)}_l;Z_{I^t_ll}) + \tilde{\theta}_l^{(t)} + \eta_t\nabla \ell(\tilde{\theta}^{(t)}_l;Z'_{I^t_ll})\| \big| \mathcal{F}_t] \leq (1+\eta_t\beta)\|\theta_l^{(t)} - \tilde{\theta}_l^{(t)}\| + \frac{2L\eta_t}{n}\mathds{1}_{\{l=j\}}\;,
\end{equation*}
where $\mathcal{F}_t$ is the natural filtration at time $t$.

Combining previous equations and the notation of the proof of Variant B, we get the vector format relation: 
\begin{align*}
    \Delta^{(t+1)} & \leq (1 + \eta_t\beta)W\Delta^{(t)} + \frac{2\eta_t L}{mn}\1 \\
    & \leq (1 + \frac{c \beta}{t+1} )W\Delta^{(t)} + \frac{2c L}{mn(t+1)}\1
\end{align*}

Since $\Delta^{(t_0)} = \mathbf{0}$, we can unroll the previous recursion from $T$ to $t_0+1$ and get:
\begin{align*}
    \Delta^{(T)} & \leq \frac{2cL}{Tmn}\1 + \sum_{t=t_0+1}^{T-1}\Big\{\prod_{\tau=t+1}^T \Big(1 + \frac{c \beta}{\tau}\Big)W\Big\}\frac{2cL}{tmn}\1 \nonumber \\ 
    & = \frac{2cL}{Tmn}\1 + \sum_{t=t_0+1}^{T-1}\Big\{\prod_{\tau=t+1}^T \Big(1 + \frac{c \beta}{\tau} \Big)\Big\}\frac{2cL}{tmn}\1\;, \nonumber 
\end{align*}

where in the last equality we used the fact that $(1 + \frac{c \beta}{\tau})W\1 = (1+ \frac{c \beta}{\tau})\1$.  From this point, the proof is the same as the one of Variant B, starting from the beginning of the derivation of Equation \eqref{eq:recursion-non-convex}.

\hfill\qedsymbol{}

\section{Proofs of Section \ref{sec:data-dep}}
\label{app:data-dep}

Like in the proof of Theorem \ref{thme:convex}, we simply need to consider the case where $\min_k\{W_{kk}\}>0$, the case $\min_k\{W_{kk}\}=0$ being trivial since it implies that $\eta_t=0$. 

\subsection{Lemma \ref{lemma:link-local-optim}}

We start the proof with the following lemma.

\begin{lemma}\emph{(Link with point-wise gradient norms).} \label{lemma:data-dep-nodewise} Under the same hypothesis as Theorem \ref{thme:convex}:
    \begin{equation*}
        |\EE_{A,S}[R(A_k(S)) - R_S(A_k(S))]| \leq \frac{2L}{mn}\sum_{t=0}^{T-1}\eta_t\sum_{j=1}^{m}(W^{T-t-1})_{kj}\frac{1}{n}\sum_{i=1}^{n}\EE[\|\nabla\ell(\theta_j^{(t)};Z_{ij})\|_2]
    \end{equation*}
\end{lemma}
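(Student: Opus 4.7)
My plan is to follow the exact structure of the proof of Theorem \ref{thme:convex} in Appendix \ref{app:convex}, keeping the same on-average model stability analysis but refusing to bound gradient norms by $L$ via Lipschitzness at the final step. With the same notation $\delta_k^{(t)}(i,j) = \|\theta_k^{(t)} - \tilde{\theta}_k^{(t)}\|_2$, I would start from inequality \eqref{eq:starting-point-lemma}, which is the case $k=j$ with $I_j^t = i$ (probability $1/n$). Instead of applying Lemma \ref{lem:growth} (which consumes Lipschitzness to produce the $2\eta_t L$ term), I would split the inner quantity with a direct triangle inequality:
\begin{equation*}
\|G_{\eta_t/W_{jj},Z_{ij}}(\theta_j^{(t)}) - G_{\eta_t/W_{jj},\TZ_{ij}}(\tilde{\theta}_j^{(t)})\|_2 \leq \|\theta_j^{(t)} - \tilde{\theta}_j^{(t)}\|_2 + \tfrac{\eta_t}{W_{jj}}\bigl(\|\nabla\ell(\theta_j^{(t)};Z_{ij})\|_2 + \|\nabla\ell(\tilde{\theta}_j^{(t)};\TZ_{ij})\|_2\bigr),
\end{equation*}
so that the analogue of the step from \eqref{eq:starting-point-lemma} onwards becomes, after multiplying by $W_{jj}$ and adding $\sum_{l\neq j} W_{jl}\delta_l^{(t)}(i,j)$, a bound of the form $\sum_l W_{jl}\delta_l^{(t)}(i,j) + \eta_t(\|\nabla\ell(\theta_j^{(t)};Z_{ij})\|_2 + \|\nabla\ell(\tilde{\theta}_j^{(t)};\TZ_{ij})\|_2)$.

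The key step I expect to be the only subtle one is a symmetry argument to collapse the two gradient terms. Since $Z_{ij}$ and $\TZ_{ij}$ are both iid from $\calD_j$ and the pair $(S, S^{(ij)})$ has the same joint law as $(S^{(ij)}, S)$ under the swap $Z_{ij} \leftrightarrow \TZ_{ij}$, the random variable $(\tilde{\theta}_j^{(t)}, \TZ_{ij})$ has the same distribution as $(\theta_j^{(t)}, Z_{ij})$, which gives $\EE[\|\nabla\ell(\tilde{\theta}_j^{(t)};\TZ_{ij})\|_2] = \EE[\|\nabla\ell(\theta_j^{(t)};Z_{ij})\|_2]$. Combining this with the probability $1/n$ of the swap event and with the non-swap case (which contributes nothing new, since $1$-expansivity of $G_{\eta_t/W_{kk},z}$ already gives $\delta_k^{(t+1)}(i,j) \leq \sum_l W_{kl}\delta_l^{(t)}(i,j)$ exactly as in Theorem~\ref{thme:convex}), I obtain the vector recursion
\begin{equation*}
\EE[\delta^{(t+1)}(i,j)] \leq W\,\EE[\delta^{(t)}(i,j)] + \frac{2\eta_t}{n}\,\EE[\|\nabla\ell(\theta_j^{(t)};Z_{ij})\|_2]\, e_j .
\end{equation*}

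To conclude, I would unroll this recursion from $t=0$, using $\delta^{(0)}(i,j) = \mathbf{0}$, to get coordinate-wise $\EE[\delta_k^{(T)}(i,j)] \leq \frac{2}{n}\sum_{t=0}^{T-1}\eta_t (W^{T-t-1})_{kj}\EE[\|\nabla\ell(\theta_j^{(t)};Z_{ij})\|_2]$, exactly as in \eqref{eq:final-convex-worst} but retaining the gradient norms. Averaging over $i$ and $j$ and applying Lemma \ref{lemma:ob-avg-gen} to the Lipschitz loss yields the announced bound $\frac{2L}{mn}\sum_{t=0}^{T-1}\eta_t \sum_{j}(W^{T-t-1})_{kj}\frac{1}{n}\sum_{i}\EE[\|\nabla\ell(\theta_j^{(t)};Z_{ij})\|_2]$. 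The case $\min_k\{W_{kk}\} = 0$ is trivial, exactly as noted at the start of Appendix \ref{app:convex}, since the stepsize constraint forces $\eta_t = 0$ and both sides vanish.
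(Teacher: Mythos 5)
Your proposal is correct and follows essentially the same route as the paper's proof in Appendix \ref{app:data-dep}: the paper likewise reuses the derivation of Theorem \ref{thme:convex} up to Eq.~\eqref{eq:starting-point-lemma}, replaces the growth-recursion/Lipschitz step by a direct triangle inequality retaining the two gradient norms, collapses them via the equality in law of $(\theta_j^{(t)},Z_{ij})$ and $(\tilde{\theta}_j^{(t)},\TZ_{ij})$, and then unrolls the same vector recursion before applying Lemma \ref{lemma:ob-avg-gen}. Your explicit justification of the symmetry step via the swap $Z_{ij}\leftrightarrow\TZ_{ij}$ is a slightly more detailed version of the paper's ``same law'' remark, but the argument is identical.
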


\begin{proof}
    Until Equation \eqref{eq:starting-point-lemma}, the proof of Lemma \ref{lemma:data-dep-nodewise} is exactly the same as the one of Theorem \ref{thme:convex}. Let's start the proof from this point:

    If $k=j$, then with probability $\frac{1}{n}$ we have:

    \begin{align*}
        \|\theta_j^{(t+1)} - \tilde{\theta}_j^{(t+1)}\|_2  & \leq W_{jj}\Big\|\theta_j^{(t)}- \frac{\eta_t}{W_{jj}}\nabla \ell(\theta^{(t)}_j;Z_{ij}) - \tilde{\theta}_j^{(t)} + \frac{\eta_t}{W_{jj}}\nabla \ell(\tilde{\theta}^{(t)}_j;\TZ_{ij})\Big\|_2 + \sum^m_{l\neq j}W_{jl}\Big\|\theta_l^{(t)}-\tilde{\theta}_l^{(t)}  \Big\|_2  \\
        & \leq \sum^m_{l=1}W_{jl}\Big\|\theta_l^{(t)}-\tilde{\theta}_l^{(t)}  \Big\|_2 +\eta_t\Big\|\nabla \ell(\theta^{(t)}_j;Z_{ij})\Big\|_2 + \eta_t\Big\|\nabla \ell(\tilde{\theta}^{(t)}_j;\TZ_{ij})\Big\|_2 
    \end{align*}

Since the gradient norms $\|\nabla \ell(\theta^{(t)}_j;Z_{ij})\|_2$ and $\|\nabla \ell(\tilde{\theta}^{(t)}_j;\TZ_{ij})\|_2 $ have same law, they have the same expectation and we have:

\begin{equation*}
     \EE[\|\theta_j^{(t+1)} - \tilde{\theta}_j^{(t+1)}\|_2] \leq \sum^m_{l=1}W_{jl}\EE[\|\theta_l^{(t)}-\tilde{\theta}_l^{(t)}  \|_2] +\frac{2\eta_t}{n}\EE[\|\nabla \ell(\theta^{(t)}_j;Z_{ij})\|_2] 
\end{equation*}

Combining with the result for $k\neq j$ in the proof of Theorem \ref{thme:convex}, we have the following relation in vector format: 

\begin{align*}
    \EE[\delta^{(T)}(i,j)] & \leq W \EE[\delta^{(T-1)}(i,j)]  + \frac{2\eta_t}{n}\EE[\|\nabla \ell(\theta^{(t)}_j;Z_{ij})\|_2] \cdot e_j \\
    & \leq \frac{2}{n}\sum_{t=0}^{T-1}W^{T-t-1}\eta_t\EE[\|\nabla\ell(\theta^{(t)}_j;Z_{ij})\|_2] \cdot e_j
\end{align*}
where the second inequality is obtained by unrolling the recursion until $t=0$. For any agent $k=1,\ldots,n$, we therefore have 

\begin{equation}\label{eq:final-data-lemma}
\EE[\delta_k^{(T)}(i,j)]\leq \frac{2}{n}\sum_{t=0}^{T-1}(W^{T-t-1})_{kj}\eta_t\EE[\|\nabla\ell(\theta^{(t)}_j;Z_{ij})\|_2]    
\end{equation}

 Averaging over $i$ and $j$ and using Lemma \ref{lemma:ob-avg-gen} gives the final result.

\end{proof}

Going back to the proof of Lemma \ref{lemma:link-local-optim}, we can now average over $k$ the equation from Lemma \ref{lemma:data-dep-nodewise} and use the double stochasticity of $W^{T-t-1}$ to get 

\begin{align}
    & \frac{1}{m}\sum_{k=1}^m  |\EE_{A,S}[R(A_k(S)) - R_S(A_k(S))]| \leq \frac{2L}{mn}\sum_{t=0}^{T-1}\eta_t\frac{1}{mn}\sum_{i=1}^{n}\sum_{j=1}^{m}\EE[\|\nabla\ell(\theta_j^{(t)};Z_{ij})\|_2] \label{eq:nice-form} \\
    & \leq \frac{2L}{mn}\sum_{t=0}^{T-1}\eta_t\frac{1}{mn}\sum_{i=1}^{n}\sum_{j=1}^{m}\EE[\|\nabla\ell(\theta_j^{(t)};Z_{ij}) - \nabla R_{S_j}(\theta_j^{(t)})\|_2] + \frac{2L}{mn}\sum_{t=0}^{T-1}\eta_t\frac{1}{m}\sum_{j=1}^{m}\EE[\|\nabla R_{S_j}(\theta_j^{(t)})\|_2] \nonumber \\
    & =  \frac{2L}{mn}\sum_{t=0}^{T-1}\eta_t\frac{1}{mn}\sum_{i=1}^{n}\sum_{j=1}^{m}\EE\Big[\sqrt{\|\nabla\ell(\theta_j^{(t)};Z_{ij}) - \nabla R_{S_j}(\theta_j^{(t)})\|^2_2}\Big] + \frac{2L}{mn}\sum_{t=0}^{T-1}\eta_t\frac{1}{m}\sum_{j=1}^{m}\EE[\|\nabla R_{S_j}(\theta_j^{(t)})\|_2] \nonumber \\
    & \leq \frac{2L}{mn}\sum_{t=0}^{T-1}\eta_t\frac{1}{m}\sum_{j=1}^{m}\sqrt{\frac{1}{n}\sum_{i=1}^{n}\EE\Big[\|\nabla\ell(\theta_j^{(t)};Z_{ij}) - \nabla R_{S_j}(\theta_j^{(t)})\|^2_2\Big]} + \frac{2L}{mn}\sum_{t=0}^{T-1}\eta_t\frac{1}{m}\sum_{j=1}^{m}\EE[\|\nabla R_{S_j}(\theta_j^{(t)})\|_2]\;,\nonumber
\end{align}
where we used Jensen inequality in the last step. Using the fact that, by Assumption \ref{hyp:bounded-variance}, we have 
$\frac{1}{n}\sum_{i=1}^{n}\|\nabla\ell(\theta_j^{(t)};Z_{ij}) - \nabla R_{S_j}(\theta_j^{(t)})\|^2_2 \leq \sigma^2$ finishes the proof.

\hfill\qedsymbol{}

\subsection{Theorem \ref{thme:optim-dep}}
We start proving that under, the hypothesis of Theorem~\ref{thme:optim-dep}, all the eigenvalues of $W$  belong to $(-1,1]$.

\begin{lemma}
    \label{lem:second_eigenvalue}
    Let $W$ be an $n \times n$ symmetric, stochastic matrix with positive elements on the diagonal, then all the eigenvalues of $W$ are in $(-1,1]$.
\end{lemma}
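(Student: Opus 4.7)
The plan is to handle the upper and lower bounds on the spectrum separately, both exploiting the fact that $W$ is symmetric (so eigenvalues are real) and stochastic (so row sums equal $1$ and entries are nonnegative).

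For the upper bound, I would argue that $1$ is an eigenvalue (with eigenvector $\mathbf{1}$) and that no eigenvalue can exceed $1$ in modulus. The latter follows from the standard fact that for any nonnegative stochastic matrix $W$, the operator norm $\|W\|_\infty$ equals the maximum row sum, which is $1$; hence the spectral radius is at most $1$. Combined with symmetry, this gives that all eigenvalues lie in $[-1,1]$ and that $1$ is attained.

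The heart of the proof is showing that $-1$ is excluded. I would proceed by contradiction: suppose $Wv = -v$ for some $v \neq 0$, and pick an index $i$ achieving $|v_i| = \max_j |v_j| > 0$. Splitting the $i$-th coordinate of the identity $Wv = -v$ into the diagonal and off-diagonal contributions gives
\begin{equation*}
-(1 + W_{ii})\, v_i \;=\; \sum_{j \neq i} W_{ij}\, v_j .
\end{equation*}
Taking absolute values and using nonnegativity of the off-diagonal entries together with the choice of $i$,
\begin{equation*}
(1 + W_{ii})\,|v_i| \;\leq\; \sum_{j \neq i} W_{ij}\, |v_j| \;\leq\; (1 - W_{ii})\,|v_i|,
\end{equation*}
where the last inequality uses $\sum_{j \neq i} W_{ij} = 1 - W_{ii}$. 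This forces $2\,W_{ii}\,|v_i| \leq 0$, which contradicts $W_{ii} > 0$ and $|v_i| > 0$. Hence $-1$ is not an eigenvalue, and the spectrum lies in $(-1,1]$.

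I do not anticipate a major obstacle: the only delicate point is making sure the index $i$ where $|v_j|$ is maximized is well-defined and contributes a strictly positive $W_{ii}$, which is exactly what the positive-diagonal assumption buys us. Note that symmetry of $W$ is in fact not used in the strict lower bound argument (only nonnegativity, stochasticity, and positive diagonal), but it is invoked to ensure the eigenvalues are real so that bounding them in $[-1,1]$ is meaningful.
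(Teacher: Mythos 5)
Your proof is correct, but it takes a genuinely different route from the paper. The paper first permutes $W$ into a block-diagonal form whose blocks correspond to the connected components of the communication graph, observes that each block is irreducible with a positive diagonal entry and hence primitive, and then invokes the Perron--Frobenius fact that a primitive matrix has $1$ as its only eigenvalue on the unit circle. Your argument bypasses irreducibility and primitivity entirely: the maximum-coordinate contradiction for a putative eigenvector of $-1$ is an elementary Gershgorin-type argument, and it even yields the quantitative strengthening $\lambda_{\min}(W) \geq 2\min_k W_{kk} - 1 > -1$, which the paper's qualitative route does not give. Each step checks out: the identity $-(1+W_{ii})v_i = \sum_{j\neq i} W_{ij} v_j$ is correct, the two bounds use only nonnegativity, the row-sum condition $\sum_{j\neq i}W_{ij} = 1 - W_{ii}$, and the maximality of $|v_i|$, and the conclusion $2W_{ii}|v_i|\leq 0$ indeed contradicts the positive-diagonal hypothesis. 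Your closing remark that symmetry is only needed to make the eigenvalues real (so that the interval statement is meaningful) is also accurate; the exclusion of $-1$ itself holds for any nonnegative stochastic matrix with positive diagonal. The trade-off is that the paper's approach leans on standard cited results and makes the graph-theoretic structure (connected components) explicit, whereas yours is shorter, self-contained, and quantitatively sharper.
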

\begin{proof}
As $W$ is symmetric and stochastic, then the module of its largest eigenvalue is equal to $1$ and all eigenvalues are in $[-1,1]$. We want now to prove that no eigenvalue can be equal to $-1$.
By an opportune permutation of the nodes, we can write the matrix $W$ as follows 
\[W=\left(\begin{array}{cccc}W_1 & 0_{n_1 \times n_2} & \cdots & 0_{n_1 \times n_C} \\ 
0_{n_2 \times n_1} & W_2 & \cdots & 0_{n_2 \times n_C} \\ 
\cdots & \cdots & \cdots & \cdots\\
0_{n_C \times n_1} & 0_{n_C \times n_2} & \cdots & W_C\end{array}\right),\]
where $0_{n \times m}$ denotes an $n\times m$ matrix with 0 elements, $\sum_{c=1}^C n_c= n$, and each matrix $W_c$ has size $n_c \times n_c$ and is irreducible~\cite{meyer01}[p.~671]. Each matrix corresponds to a connected component of the communication graph.

The eigenvalues of $W$ (taken with their multiplicity) are the eigenvalues of the different matrices $W_c$, for $c=1, \dots, C$. It is then sufficient to prove that the eigenvalues of each $W_c$ are in $(-1,1]$.
This result follows immediately from the fact that $W_c$ is  irreducible with non-negative elements on the diagonal, then it is primitive~\cite{meyer01}[Example 8.3.3], i.e., $1$ is the only eigenvalue on the unit circle. 
\end{proof}

The objective of the proof is to control the term $\sum_{t=0}^{T-1}\eta_t\frac{1}{m}\sum_{j=1}^{m}\EE[\|\nabla R_{S_j}(\theta_j^{(t)})\|_2]$ in Lemma \ref{lemma:link-local-optim}. It starts with the following descent lemma.

\begin{lemma}\emph{(Descent Lemma).} \label{lemma:descent} Let the same setting as Theorem \ref{thme:convex}, with a constant stepsize $\eta>0$ and additional Assumption \ref{hyp:bounded-variance}. We have:
        \begin{equation*}
            \frac{\eta}{m}\sum_{j=1}^{m}\EE[\|\nabla R_{S_j}(\theta_j^{(t)})\|_2^2] \leq \frac{2}{m}\sum_{j=0}^{m}\EE\Big[R_{S_j}(\theta_j^{(t)})-R_{S_j}(\theta_j^{(t+1)})\Big] + \beta\sigma^2\eta^2 + \frac{1}{m\eta}\sum_{j=1}^m\EE\|\sum_{l=1}^mW_{jl}\theta_l^{(t)}-\theta_j^{(t)}\|_2^2 
        \end{equation*}
    \end{lemma}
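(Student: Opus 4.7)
The plan is to prove this descent lemma by applying $\beta$-smoothness of $R_{S_j}$ to the Variant~B update $\theta_j^{(t+1)} = \sum_l W_{jl}\theta_l^{(t)} - \eta g_j^{(t)}$, where $g_j^{(t)} = \nabla\ell(\theta_j^{(t)}; Z_{I_j^t j})$, and carefully handling the discrepancy between the point of gradient evaluation $\theta_j^{(t)}$ and the consensus point $\bar\theta_j^{(t)} := \sum_l W_{jl}\theta_l^{(t)}$. Introducing the consensus drift $d_j^{(t)} := \bar\theta_j^{(t)} - \theta_j^{(t)}$, the update decomposes as $\theta_j^{(t+1)} - \theta_j^{(t)} = d_j^{(t)} - \eta g_j^{(t)}$, which makes $d_j^{(t)}$ the natural quantity appearing in the bound.

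First, I would apply $\beta$-smoothness of $R_{S_j}$ between $\theta_j^{(t)}$ and $\theta_j^{(t+1)}$ to get
\begin{equation*}
R_{S_j}(\theta_j^{(t+1)}) \le R_{S_j}(\theta_j^{(t)}) + \langle \nabla R_{S_j}(\theta_j^{(t)}), d_j^{(t)} - \eta g_j^{(t)}\rangle + \tfrac{\beta}{2}\|d_j^{(t)} - \eta g_j^{(t)}\|_2^2.
\end{equation*}
Then I would take the conditional expectation given $\mathcal{F}_t$, using that $\EE[g_j^{(t)}|\mathcal{F}_t] = \nabla R_{S_j}(\theta_j^{(t)})$ and that the bounded-variance Assumption~\ref{hyp:bounded-variance} yields $\EE[\|g_j^{(t)}\|_2^2|\mathcal{F}_t] \le \|\nabla R_{S_j}(\theta_j^{(t)})\|_2^2 + \sigma^2$, which produces a $-\eta\|\nabla R_{S_j}\|_2^2$ term, a $\frac{\beta\eta^2}{2}(\|\nabla R_{S_j}\|_2^2 + \sigma^2)$ term, a $\frac{\beta}{2}\|d_j^{(t)}\|_2^2$ term, and a cross term $(1-\beta\eta)\langle \nabla R_{S_j}, d_j^{(t)}\rangle$ obtained by combining the first-order contribution with the expansion of the quadratic.

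The next step is to apply Young's inequality to the cross term with coefficient chosen so that exactly $\frac{1}{2\eta}\|d_j^{(t)}\|_2^2$ appears (i.e.\ $\langle \nabla R_{S_j}, d_j^{(t)}\rangle \le \frac{\eta}{2}\|\nabla R_{S_j}\|_2^2 + \frac{1}{2\eta}\|d_j^{(t)}\|_2^2$), then rearrange to isolate $\eta\|\nabla R_{S_j}(\theta_j^{(t)})\|_2^2$ on the left-hand side and multiply by $2$. The step-size assumption $\eta \le \min_k W_{kk}/\beta$ inherited from Theorem~\ref{thme:convex} ensures the coefficient of $\|\nabla R_{S_j}\|_2^2$ remaining on the left-hand side after absorption is still positive, and makes the $\frac{\beta}{2}\|d_j^{(t)}\|_2^2$ contribution dominated by the $\frac{1}{2\eta}\|d_j^{(t)}\|_2^2$ one. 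Averaging over $j$ and taking total expectation then yields the claim.

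The main obstacle is bookkeeping with the constants: the target bound demands exactly the factor $2$ in front of the descent gap, the coefficient $\beta\sigma^2\eta^2$ (with no extra factor of $\tfrac12$) for the variance, and exactly $\frac{1}{\eta}$ for the consensus drift. Achieving this forces a specific Young's inequality calibration and relies on the small step-size hypothesis to absorb the parasitic $\|\nabla R_{S_j}\|_2^2$ terms from both the Young's bound and the $\frac{\beta\eta^2}{2}\|\nabla R_{S_j}\|_2^2$ piece coming from the variance inequality. Once these constants are reconciled, the rest is mechanical.
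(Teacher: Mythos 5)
Your proposal is correct and follows essentially the same route as the paper: smoothness of $R_{S_j}$ applied to the Variant~B update, conditional expectation with the bias--variance split $\EE[\|g_j^{(t)}\|_2^2\,|\,\mathcal{F}_t]\le\|\nabla R_{S_j}(\theta_j^{(t)})\|_2^2+\sigma^2$, and Young's inequality with parameter $\eta$ on the cross term $(1-\beta\eta)\langle\nabla R_{S_j}(\theta_j^{(t)}),\,\sum_l W_{jl}\theta_l^{(t)}-\theta_j^{(t)}\rangle$, followed by rearranging and averaging over $j$. The only cosmetic difference is that in the paper the parasitic $\|\nabla R_{S_j}\|_2^2$ and $\|d_j^{(t)}\|_2^2$ coefficients cancel exactly ($\tfrac{\beta\eta^2}{2}-\eta+\tfrac{\eta(1-\beta\eta)}{2}=-\tfrac{\eta}{2}$ and $\tfrac{\beta}{2}+\tfrac{1-\beta\eta}{2\eta}=\tfrac{1}{2\eta}$) rather than being absorbed via the step-size condition.
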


\begin{proof}
    For all $j=1,\ldots,m$, the convexity and $\beta$-smoothness of $R_{S_j}$ gives:
    \begin{align}
        R_{S_j} & (\theta_j^{(t+1)}) - R_{S_j}(\theta_j^{(t)}) \leq \langle \nabla R_{S_j}(\theta_j^{(t)}), \theta_j^{(t+1)} - \theta_j^{(t)} \rangle + \frac{\beta}{2}\|\theta_j^{(t+1)} - \theta_j^{(t)}\|_2^2 \nonumber \\
        & \leq \Big\langle \nabla R_{S_j}(\theta_j^{(t)}), \sum^m_{l=1}W_{jl}\theta_l^{(t)} - \eta\nabla \ell(\theta^{(t)}_j;Z_{I^t_jj}) - \theta_j^{(t)} \Big\rangle + \frac{\beta}{2}\Big\|\sum^m_{l=1}W_{jl}\theta_l^{(t)} - \eta\nabla \ell(\theta^{(t)}_j;Z_{I^t_jj}) - \theta_j^{(t)}\Big\|_2^2 \nonumber \\
        & \leq -\eta\Big\langle \nabla R_{S_j}(\theta_j^{(t)}), \nabla \ell(\theta^{(t)}_j;Z_{I^t_jj}) \Big\rangle + \Big\langle \nabla R_{S_j}(\theta_j^{(t)}), \sum^m_{l=1}W_{jl}\theta_l^{(t)} - \theta_j^{(t)} \Big\rangle \nonumber \\
        & \qquad + \frac{\beta}{2}\Big(\Big\|\sum^m_{l=1}W_{jl}\theta_l^{(t)} - \theta_j^{(t)}\Big\|_2^2 - 2\eta\Big\langle \nabla \ell(\theta^{(t)}_j;Z_{I^t_jj}), \sum^m_{l=1}W_{jl}\theta_l^{(t)} - \theta_j^{(t)} \Big\rangle  + \eta^2\Big\| \nabla \ell(\theta^{(t)}_j;Z_{I^t_jj}) \Big\|_2^2\Big) \label{eq:step-1-descent}
    \end{align}

Taking the conditional expectation of \eqref{eq:step-1-descent} given $\mathcal{F}_t$, the natural filtration at time $t$, and the dataset $S$ gives:

\begin{align}
    \EE[R_{S_j} (\theta_j^{(t+1)}) - R_{S_j}(\theta_j^{(t)})|\mathcal{F}_t, S] & \leq -\eta\Big\| \nabla R_{S_j}(\theta^{(t)}_j) \Big\|_2^2 + (1-\beta\eta)\Big\langle \nabla R_{S_j}(\theta_j^{(t)}), \sum^m_{l=1}W_{jl}\theta_l^{(t)} - \theta_j^{(t)} \Big\rangle \nonumber \\
    & \qquad \qquad + \frac{\beta}{2}\Big\|\sum^m_{l=1}W_{jl}\theta_l^{(t)} - \theta_j^{(t)}\Big\|_2^2 + \frac{\beta\eta^2}{2n}\sum_{i=1}^n\Big\| \nabla \ell(\theta^{(t)}_j;Z_{ij}) \Big\|_2^2  \nonumber \\
    & \leq \Big(\frac{\beta\eta^2}{2}-\eta\Big)\Big\| \nabla R_{S_j}(\theta^{(t)}_j) \Big\|_2^2 + \frac{\beta}{2}\Big\|\sum^m_{l=1}W_{jl}\theta_l^{(t)} - \theta_j^{(t)}\Big\|_2^2 + \frac{\beta\sigma^2\eta^2}{2} \nonumber \\
    & \qquad \qquad + (1-\beta\eta)\Big\langle \nabla R_{S_j}(\theta_j^{(t)}), \sum^m_{l=1}W_{jl}\theta_l^{(t)} - \theta_j^{(t)} \Big\rangle \label{eq:step-2-descent}
\end{align}

Applying the inequality $\langle a, b \rangle \leq \frac{\alpha}{2}\|a\|_2^2 + \frac{\alpha^{-1}}{2}\|b\|_2^2$, which is true $\forall \alpha>0$, to the last term in Equation \ref{eq:step-2-descent} gives, with $\alpha=\eta$:

\begin{align*}
    \EE[R_{S_j} (\theta_j^{(t+1)}) - R_{S_j}(\theta_j^{(t)})|\mathcal{F}_t, S] \leq  -\frac{\eta}{2} \Big\| \nabla R_{S_j}(\theta^{(t)}_j) \Big\|_2^2 +\frac{1}{2\eta}\Big\|\sum^m_{l=1}W_{jl}\theta_l^{(t)} - \theta_j^{(t)}\Big\|_2^2 + \frac{\beta\sigma^2\eta^2}{2}
\end{align*}

Passing to the expectation and rearranging the terms gives:

\begin{equation*}
    \eta \EE[\|\nabla R_{S_j}(\theta^{(t)}_j) \|_2^2] \leq 2\EE[R_{S_j} (\theta_j^{(t+1)}) - R_{S_j}(\theta_j^{(t)})] + \beta\sigma^2\eta^2 + \frac{1}{\eta}\EE[\|\sum^m_{l=1}W_{jl}\theta_l^{(t)} - \theta_j^{(t)}\|_2^2] \;.
\end{equation*}
Summing over $j=1,\ldots,m$ and dividing by $m$ provides the desired result.

\end{proof}

The following lemma controls the last term in Lemma \ref{lemma:descent}, under the setting of Theorem \ref{thme:optim-dep}.

    \begin{lemma}\emph{(Decentralization error control).} \label{lemma:C_W} Let the same setting as Theorem \ref{thme:convex}, with a constant stepsize $\eta$ and additional Assumption \ref{hyp:bounded-variance}. Assume further that $W$ is symmetric. Then, there exist a graph-dependent constant $C_W<\infty$ such that:

        \begin{equation*}
            \frac{1}{m\eta}\sum_{j=1}^m\EE[\|\sum_{l=1}^mW_{jl}\theta_l^{(t)}-\theta_j^{(t)}\|_2^2] \leq \eta L^2(C_W^{(t)})^2\;,
        \end{equation*}
        where $C_W^{(t)} \triangleq \sum_{s=0}^{t-1}\|W^s - W^{s+1}\|_2 \leq C_W$  and $\|\cdot\|_2$ is the $\ell_2$-operator norm.
        
    \end{lemma}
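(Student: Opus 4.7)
The plan is to first rewrite the left-hand side as a single Frobenius-norm quantity, unroll the D-SGD (Variant B) recursion, and then exploit the telescoping identity $(W-I)W^{k} = W^{k+1}-W^{k}$ so that the consensus error can be written as a sum of ``gradient innovations'' weighted by $W^{s+1}-W^{s}$. A spectral argument based on Lemma~\ref{lem:second_eigenvalue} will then bound $C_W^{(t)}$ uniformly in~$t$.

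Concretely, I would stack the iterates into the $m\times d$ matrix $\Theta^{(t)} = [\theta_1^{(t)},\ldots,\theta_m^{(t)}]^\top$ and the stochastic gradients into $G^{(t)}$ whose $k$-th row is $\nabla\ell(\theta_k^{(t)};Z_{I^t_k k})^\top$. Variant~B then reads $\Theta^{(t+1)} = W\Theta^{(t)} - \eta G^{(t)}$, and the sum in the statement equals $\|(W-I)\Theta^{(t)}\|_F^2$. Unrolling gives $\Theta^{(t)} = W^t\Theta^{(0)} - \eta\sum_{s=0}^{t-1} W^{t-1-s} G^{(s)}$; since all rows of $\Theta^{(0)}$ coincide and $W\1=\1$, we have $W^t\Theta^{(0)} = \Theta^{(0)}$, which lies in the kernel of $W-I$. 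After the change of index $u = t-1-s$ this yields
\[ (W-I)\Theta^{(t)} = -\eta\sum_{u=0}^{t-1}\bigl(W^{u+1}-W^{u}\bigr) G^{(t-1-u)}. \]

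Combining the triangle inequality in Frobenius norm with the standard bound $\|AB\|_F\le \|A\|_2 \|B\|_F$ and the pointwise Lipschitz estimate $\|G^{(s)}\|_F \le L\sqrt{m}$ (Assumption~\ref{ass:lipschitz}), I obtain $\|(W-I)\Theta^{(t)}\|_F \le \eta L\sqrt{m}\, C_W^{(t)}$ almost surely. Squaring, taking expectation, and dividing by $m\eta$ yields the claimed inequality $\eta L^2 (C_W^{(t)})^2$.

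It remains to show that $C_W^{(t)}$ admits a finite uniform upper bound, and this is the only step that genuinely uses the symmetry of $W$. By Lemma~\ref{lem:second_eigenvalue}, the spectrum of $W$ lies in $(-1,1]$, so one can write $W = P + R$ with $P$ the orthogonal projector onto the $1$-eigenspace, $PR=RP=0$, and spectral radius $\|R\|_2 = \gamma < 1$. Since $P^2 = P$ commutes with $R$, an induction gives $W^s = P + R^s$, hence $W^s - W^{s+1} = R^s - R^{s+1}$. In the eigenbasis of $W$ this matrix is diagonal with entries $\lambda_i^s(1-\lambda_i)$ bounded by $2\gamma^s$, and summing a geometric series yields $C_W^{(t)} \le 2/(1-\gamma) =: C_W < \infty$. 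The main conceptual point is the telescoping observation $(W-I)\Theta^{(0)}=0$, which kills the initial condition and turns the consensus error into a pure sum over gradient innovations; everything else is clean bookkeeping plus a standard spectral argument.
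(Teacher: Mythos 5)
Your proposal is correct and follows essentially the same route as the paper's proof: the same matrix reformulation, the same unrolling of the Variant~B recursion, the same observation that $(I-W)W^t\Theta^{(0)}=0$ kills the initial condition, and the same $\|AB\|_F\le\|A\|_2\|B\|_F$ plus Lipschitz bookkeeping leading to $\sqrt{m}\,L\,C_W^{(t)}$. The only (minor) difference is the last step: the paper establishes finiteness of $C_W$ via the Cauchy root test applied to $a_s=\sup_{k:\lambda_k\neq1}|\lambda_k|^s|1-\lambda_k|$, whereas you sum a geometric series to get the explicit constant $C_W\le 2/(1-\gamma)$ with $\gamma=\max_{k:\lambda_k\neq 1}|\lambda_k|$ --- a slightly more informative conclusion (just note, as the paper does, the degenerate case $W=I$ where $\gamma$ is undefined and $C_W=0$).
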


    \begin{proof}
        Let us rewrite the desired quantity in matrix form. Let $\Theta^{(t)}\in\IR^{m\times p}$ be the matrix that contains the parameters $\theta_1^{(t)},\ldots,\theta_m^{(t)}$ row-wise. In other word, the $j$-th row of $\Theta^{(t)}$ is $\theta_j^{(t)T}$. Similarly, let $\nabla L(\Theta^{(t)};Z_{I^t})\in\IR^{m\times p}$ be the matrix that contains the stochastic gradients $\nabla\ell(\theta_j^{(t)};Z_{I_j^tj})$, $j=1,\ldots,m$, also row-wise. When it is clear from context, we simply write $\nabla L(\Theta^{(t)})$. In matrix form, the quantity of interest is equal to $\frac{1}{m\eta}\EE\|W\Theta^{(t)} - \Theta^{(t)} \|_F^2$ and the D-SGD (variant B) updates are:

        \begin{align}
            \Theta^{(t)} & = W\Theta^{(t-1)} - \eta \nabla L(\Theta^{(t-1)};Z_{I^{t-1}}) \nonumber \\
            & = W^t\Theta^{(0)} - \eta \sum_{s=0}^{t-1}W^s\nabla L(\Theta^{(t-s-1)})\;. \nonumber
        \end{align}

        Hence, 

        \begin{align}
            \frac{1}{m\eta}\EE\|W\Theta^{(t)} - \Theta^{(t)} \|_F^2 & = \frac{1}{m\eta}\EE\|(I-W)\Theta^{(t)} \|_F^2 \nonumber \\
            & = \frac{1}{m\eta}\EE\Big\|(I-W) \Big(W^t\Theta^{(0)} - \eta \sum_{s=0}^{t-1}W^s\nabla L(\Theta^{(t-s-1)})\Big) \Big\|_F^2 \nonumber \\
            & = \frac{1}{m\eta}\EE\Big\|(W^t-W^{t+1})\Theta^{(0)} - \eta \sum_{s=0}^{t-1}(W^s- W^{s+1})\nabla L(\Theta^{(t-s-1)}) \Big\|_F^2 \nonumber \\
            & = \frac{\eta}{m}\EE\Big\| \sum_{s=0}^{t-1}(W^s- W^{s+1})\nabla L(\Theta^{(t-s-1)}) \Big\|_F^2 \label{eq:c_W_1} \;,
        \end{align}
        where we used the fact that $(W^t-W^{t+1})\Theta^{(0)} = (W^t-W^{t+1})(\Theta^{(0)} - \frac{\1\1^T}{m}\Theta^{(0)}) = \mathbf{0}$, since all agents start from the same initialization point $\theta^{(0)}$.

        Let's now control the quantity of interest without the square over the norm:

        \begin{align*}
            \Big\| \sum_{s=0}^{t-1}(W^s- W^{s+1})\nabla L(\Theta^{(t-s-1)}) \Big\|_F & \leq  \sum_{s=0}^{t-1} \Big\|(W^s- W^{s+1})\nabla L(\Theta^{(t-s-1)}) \Big\|_F \\
            & \leq \sum_{s=0}^{t-1} \|W^s- W^{s+1}\|_2\|\nabla L(\Theta^{(t-s-1)}) \|_F \\
            &  \hspace{-0.05cm} \overset{A. \ref{ass:lipschitz}}{\leq}  \hspace{-0.2cm} \sqrt{m}L\sum_{s=0}^{t-1} \|W^s- W^{s+1}\|_2 = \sqrt{m}LC_W^{(t)}\;
        \end{align*}

    Raising the last quantity to the square and plugging it into Equation \eqref{eq:c_W_1} gives the main result of Lemma \ref{lemma:C_W}. It remains to prove that $\exists C_W<\infty$ such that $C_W^{(t)} \leq C_W$.
    To this aim, we are going to show the sufficient condition that the series $C_W^{(\infty)} = \sum_{s=0}^{\infty}\|W^s - W^{s+1}\|_2$ converges to some limit $C_W<\infty$.

    Let $a_s\triangleq\|W^s - W^{s+1}\|_2$ be any term of the series and denote by $\lambda_1,\ldots,\lambda_m$, the eigenvalues of $W$, which are in $(-1,1]$ by Lemma \ref{lem:second_eigenvalue}. We therefore have $a_s=\sup_{k}\{|\lambda_k^s-\lambda_k^{s+1}|\} = \sup_{k}\{|\lambda_k|^s|1-\lambda_k|\}$. We note that if $\lambda_k=1$, $|\lambda_k|^s|1-\lambda_k| = 0$, so we can omit the eigenvalues equal to $1$ is the computation of $a_s$:

\begin{equation*}
    a_s = \sup_{k:\lambda_k\neq 1}\{|\lambda_k|^s|1-\lambda_k|\}\;.
\end{equation*}
If all eigenvalues are equal to $1$, $a_s=0$ for all $s\in \mathbb{N}$ and we directly have the convergence. Otherwise, we are going to show that the series converge by using the Cauchy root test, which states that a series converges if $\limsup_{s\rightarrow \infty}{|a_s|^\frac{1}{s}}=r<1$. As a matter of fact, we have

\begin{align*}
    |a_s|^\frac{1}{s} & = \sup_{k:\lambda_k\neq 1}\{|\lambda_k|^s|1-\lambda_k|\}^\frac{1}{s} \\
    & = \sup_{k:\lambda_k\neq 1}\{|\lambda_k||1-\lambda_k|^\frac{1}{s}\} \underset{s\rightarrow\infty}{\longrightarrow} \sup_{k:\lambda_k\neq 1}\{|\lambda_k|\}<1\;,
\end{align*}
which allows to conclude that the series converges and that there exists $C_W<\infty$ such that $C_W^{(t)} \leq C_W$.
        
\end{proof}

We can now prove Theorem \ref{thme:optim-dep}.Combining Lemma \ref{lemma:C_W} with Lemma \ref{lemma:descent}, we have:

\begin{align}
    \frac{\eta}{m}\sum_{j=1}^{m}\EE[\|\nabla R_{S_j}(\theta_j^{(t)})\|_2^2] & \leq \frac{2}{m}\sum_{j=0}^{m}\EE\Big[R_{S_j}(\theta_j^{(t)})-R_{S_j}(\theta_j^{(t+1)})\Big] + \beta\sigma^2\eta^2 + \eta L^2(C_W^{(t)})^2 \nonumber \\
    & \leq \frac{2}{m}\sum_{j=0}^{m}\EE\Big[R_{S_j}(\theta_j^{(t)})-R_{S_j}(\theta_j^{(t+1)})\Big] + \beta\sigma^2\eta^2 + \eta L^2C_W^2 \label{eq:thme-dep-step1}
\end{align}

Moreover, from Lemma \ref{lemma:link-local-optim} and with multiple use of Jensen inequality, we have

\begin{align}
    \frac{1}{m}\sum_{k=1}^m|\EE_{A,S}[R(A_k(S)) - R_S(A_k(S))]| & \leq \frac{2L\sigma \eta T}{mn} + \frac{2L}{mn}\sum_{t=0}^{T-1}\eta\frac{1}{m}\sum_{j=1}^{m}\EE[\|\nabla R_{S_j}(\theta_j^{(t)})\|_2] \nonumber \\
    & \leq \frac{2L\sigma \eta T}{mn} + \frac{2L\sqrt{T\eta}}{mn}\sqrt{\sum_{t=0}^{T-1}\eta\frac{1}{m}\sum_{j=1}^{m}\EE[\|\nabla R_{S_j}(\theta_j^{(t)})\|^2_2]} \label{eq:thme-dep-step2}
\end{align}

Summing Equation \eqref{eq:thme-dep-step1} over $t=0\ldots,T-1$ gives:

\begin{align*}
    \sum_{t=0}^{T-1}\eta\frac{1}{m}\sum_{j=1}^{m}\EE[\|\nabla R_{S_j}(\theta_j^{(t)})\|^2_2] & \leq \frac{2}{m}\sum_{j=0}^{m}\EE\Big[R_{S_j}(\theta_j^{(0)})-R_{S_j}(\theta_j^{(T)})\Big] + T\beta\sigma^2\eta^2 + T\eta L^2C_W^2 \\
    & \leq \frac{2}{m}\sum_{j=0}^{m}\EE\Big[R_{S_j}(\theta_j^{(0)})-R_{S_j}(\theta_{S_j}^*)\Big] + T\beta\sigma^2\eta^2 + T\eta L^2C_W^2
\end{align*}

Plugging this last equation into \eqref{eq:thme-dep-step2} gives the final result.

\hfill\qedsymbol{}

\section{Additional results and discussions}

%\subsection{Additional results}\label{app:additional-nonconvex}

\subsection{On the generalization of $A(S)=\bar{\theta}^{(T)}$}
\label{app:gen-avg}

In Remarks \ref{rmk:avg-worst-case}, we claimed that our generalization bounds are all also valid for the average of final iterates $A(S)=\bar{\theta}^{(T)}$. This is ensured by the following propositions.

\begin{proposition}\label{prop:gen-avg}
    Let $A(S)=\bar{\theta}^{(T)}$. Under the same set of hypotheses, the upper-bounds derived in Theorem \ref{thme:convex}, Theorem \ref{thme:strongly} and Theorem \ref{thme:optim-dep} are also valid upper-bounds on $|\EE_{A,S}[R(A(S)) - R_S(A(S))]|$.
\end{proposition}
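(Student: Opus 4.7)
The plan is to show that the on-average model stability of the averaged algorithm $\bar{A}(S) = \bar{\theta}^{(T)} = \frac{1}{m}\sum_{k=1}^m \theta_k^{(T)}$ is bounded by the \emph{average} (over agents $k$) of the on-average model stabilities of the individual outputs $A_k(S) = \theta_k^{(T)}$. This reduction is immediate from the convexity of the Euclidean norm, and since each $A_k$ satisfies the stability bound used to derive Theorems~\ref{thme:convex} and~\ref{thme:strongly}, the same bound will transfer to $\bar{A}$ via a direct application of Lemma~\ref{lemma:ob-avg-gen}.

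More concretely, with the notation of Definition~\ref{def:on-average}, I would first apply the triangle inequality to obtain
\begin{equation*}
    \|\bar{A}(S) - \bar{A}(S^{(ij)})\|_2 = \Big\|\frac{1}{m}\sum_{k=1}^m (\theta_k^{(T)} - \tilde{\theta}_k^{(T)})\Big\|_2 \leq \frac{1}{m}\sum_{k=1}^m \delta_k^{(T)}(i,j),
\end{equation*}
where $\delta_k^{(T)}(i,j) = \|\theta_k^{(T)} - \tilde{\theta}_k^{(T)}\|_2$ as in the proof of Theorem~\ref{thme:convex}. Averaging over $i,j$ and taking expectations yields
\begin{equation*}
    \frac{1}{mn}\sum_{i,j}\EE\|\bar{A}(S) - \bar{A}(S^{(ij)})\|_2 \leq \frac{1}{m}\sum_{k=1}^m \Big( \frac{1}{mn}\sum_{i,j}\EE[\delta_k^{(T)}(i,j)] \Big).
\end{equation*}
For Theorem~\ref{thme:convex}, the right-hand side of the bracketed quantity is controlled by $\frac{2L\sum_t \eta_t}{mn}$ uniformly in $k$ (see Eq.~\eqref{eq:final-convex-worst} and the doubly-stochasticity argument that follows), so the average inherits the same bound, and Lemma~\ref{lemma:ob-avg-gen} delivers the conclusion. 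The exact same scheme, starting from Eq.~\eqref{eq:final-worst-strongly} in the proof of Theorem~\ref{thme:strongly}, yields the strongly convex bound $\frac{4L^2}{\mu mn}$ for $\bar{A}(S)$.

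For Theorem~\ref{thme:optim-dep}, there is essentially nothing to do: the bound there is already stated as a bound on $\frac{1}{m}\sum_k |\EE[R(A_k(S)) - R_S(A_k(S))]|$. Applying the triangle-inequality reduction above together with Lemma~\ref{lemma:ob-avg-gen} gives
\begin{equation*}
    |\EE[R(\bar{A}(S)) - R_S(\bar{A}(S))]| \leq L \cdot \frac{1}{mn}\sum_{i,j} \EE\|\bar{A}(S) - \bar{A}(S^{(ij)})\|_2 \leq \frac{L}{m}\sum_k \frac{1}{mn}\sum_{i,j}\EE[\delta_k^{(T)}(i,j)],
\end{equation*}
and the right-hand side is exactly the quantity bounded via Lemma~\ref{lemma:data-dep-nodewise} and the computation leading to Theorem~\ref{thme:optim-dep}, once one averages over $k$ (as is already done in Eq.~\eqref{eq:nice-form}). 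I do not anticipate any genuine obstacle: the only subtlety is to be careful that $\tilde{\theta}_k^{(T)}$ in each summand corresponds to the \emph{same} perturbed dataset $S^{(ij)}$ across $k$ (so that the triangle inequality is applied pointwise before taking expectations), which is the natural and correct definition of stability for the averaged iterate.
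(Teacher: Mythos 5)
Your proposal is correct and follows essentially the same route as the paper's own proof: a triangle inequality reduces the on-average model stability of $\bar{\theta}^{(T)}$ to the average over $k$ of the per-agent stabilities, which are then controlled via Eq.~\eqref{eq:final-convex-worst}, Eq.~\eqref{eq:final-worst-strongly} and Eq.~\eqref{eq:final-data-lemma}/Eq.~\eqref{eq:nice-form} respectively, before applying Lemma~\ref{lemma:ob-avg-gen}. Your closing remark about keeping the same perturbed dataset $S^{(ij)}$ across agents when applying the triangle inequality is exactly the right point of care.
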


\begin{proof}

    \underline{Extension of Theorem \ref{thme:convex}:} Like in the proof of the latter theorem, we are going to show that $A(S)$ is on-average $\varepsilon$-stable with $\varepsilon \leq \frac{2L \sum_{t=0}^{T-1}\eta_t}{mn}$.

    \begin{align*}
        \frac{1}{mn}\sum_{i,j}\EE[\|A(S)-A(S^{(i,j)})\|_2]& \leq \frac{1}{mn}\sum_{i,j}\frac{1}{m}\sum_{k=1}^m\EE[\|A_k(S)-A_k(S^{(i,j)})\|_2] \\
        &\leq \frac{1}{mn}\sum_{i,j}\frac{1}{m}\sum_{k=1}^m\EE[\delta_k^{(T)}(i,j)]\;,
 \end{align*}

where we took back the notation of the proof in \ref{app:convex}. Then based on Equation \eqref{eq:final-convex-worst} and the double stochasticity of $W$ and its powers, we get the desired result.

\underline{Extension of Theorem \ref{thme:strongly}:} In the same way, using this time Equation \eqref{eq:final-worst-strongly} we have:

\begin{equation*}
     \frac{1}{mn}\sum_{i,j}\EE[\|A(S)-A(S^{(i,j)})\|_2] \leq \frac{1}{mn}\sum_{i,j}\frac{1}{m}\sum_{k=1}^m\EE[\delta_k^{(T)}(i,j)] \leq \frac{4L}{\mu mn}\;,
\end{equation*}
which shows that $A(S)$ with $\mu$-strongly convex functions is on-average $\frac{4L}{\mu mn}$-stable.

\underline{Extension of Theorem \ref{thme:optim-dep}:} It suffices to show that Lemma \ref{lemma:link-local-optim} is also valid for $|\EE_{A,S}[R(A(S)) - R_S(A(S))]|$. Again we are going to use the link between generalization and on-average stability. This time, using Equation \eqref{eq:final-data-lemma} , we have:

\begin{align*}
     \frac{1}{mn}\sum_{i,j}\EE[\|A(S)-A(S^{(i,j)})\|_2] & \leq \frac{1}{mn}\sum_{i,j}\frac{1}{m}\sum_{k=1}^m\EE[\delta_k^{(T)}(i,j)] \\
     & \leq \frac{1}{mn}\sum_{i,j}\frac{1}{m}\sum_{k=1}^m\frac{2}{n}\sum_{t=0}^{T-1}(W^{T-t-1})_{kj}\eta_t\EE[\|\nabla\ell(\theta^{(t)}_j;Z_{ij})\|_2] \\
     & = \frac{2}{mn}\sum_{t=0}^{T-1}\eta_t\frac{1}{mn}\sum_{i=1}^{n}\sum_{j=1}^{m}\EE[\|\nabla\ell(\theta_j^{(t)};Z_{ij})\|_2]
\end{align*}

Using Lemma \ref{lemma:ob-avg-gen}, we therefore have 

\begin{equation*}
    |\EE_{A,S}[R(A(S)) - R_S(A(S))]| \leq \frac{2L}{mn}\sum_{t=0}^{T-1}\eta_t\frac{1}{mn}\sum_{i=1}^{n}\sum_{j=1}^{m}\EE[\|\nabla\ell(\theta_j^{(t)};Z_{ij})\|_2]\;.
\end{equation*}

We recognize the right-hand term of Equation \eqref{eq:nice-form} and using the same arguments, we finally show that 
\begin{equation*}
    |\EE_{A,S}[R(A(S)) - R_S(A(S))]| \leq \frac{2L\sigma}{mn}\sum_{t=0}^{T-1}\eta_t+ \frac{2L}{mn}\sum_{t=0}^{T-1}\eta_t\frac{1}{m}\sum_{j=1}^{m}\EE[\|\nabla R_{S_j}(\theta_j^{(t)})\|_2]\;.
\end{equation*}

\end{proof}

\begin{proposition}\label{prop:gen-avg-nonconv}
    Let $A(S)=\bar{\theta}^{(T)}$. Under the same set of hypotheses, the upper-bound derived in Theorem \ref{thme:non-convex} is also a valid upper-bound on $|\EE_{A,S}[R(A(S)) - R_S(A(S))]|$.
\end{proposition}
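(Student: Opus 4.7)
The plan is to mirror the structure of Proposition~\ref{prop:gen-avg} by first extending Lemma~\ref{lemma:key-non-conv} to the averaged iterate $A(S)=\bar{\theta}^{(T)}$, and then reusing, with minimal modification, the vectorial recursion for the conditional stability that was already derived in the proof of Theorem~\ref{thme:non-convex}.

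First, I would reproduce verbatim the symmetrization argument at the beginning of the proof of Lemma~\ref{lemma:key-non-conv} (it only uses exchangeability of $Z_{ij}$ and $\tilde Z_{ij}$, not the choice of estimator), obtaining
\begin{equation*}
\bigl|\EE_{A,S}[R(\bar\theta^{(T)})-R_S(\bar\theta^{(T)})]\bigr|
\leq \frac{1}{mn}\sum_{i,j}\EE\bigl[\bigl|\ell(\bar\theta^{(T)}(S^{(ij)});Z_{ij})-\ell(\bar\theta^{(T)}(S);Z_{ij})\bigr|\bigr].
\end{equation*}
Using $L$-Lipschitzness of $\ell$, boundedness $\ell\in[0,1]$, and the triangle inequality
$\|\bar\theta^{(T)}(S)-\bar\theta^{(T)}(S^{(ij)})\|_2\leq \tfrac{1}{m}\sum_{k=1}^m\delta_k^{(T)}(i,j)$,
followed by the same conditioning on the event $\mathcal{E}(i,j)=\{\delta^{(t_0)}(i,j)=\mathbf{0}\}$ as in the original proof of Lemma~\ref{lemma:key-non-conv}, I get, for every $t_0\in\{0,\ldots,T\}$,
\begin{equation*}
\bigl|\EE_{A,S}[R(\bar\theta^{(T)})-R_S(\bar\theta^{(T)})]\bigr|
\leq \frac{t_0}{n} + \frac{L}{m}\sum_{k=1}^m \Delta_k^{(T)},
\end{equation*}
where $\Delta_k^{(T)}=\tfrac{1}{mn}\sum_{i,j}\EE[\delta_k^{(T)}(i,j)\mid\mathcal{E}(i,j)]$.

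The second step observes that in the proof of Theorem~\ref{thme:non-convex} the recursion for the vector $\Delta^{(t)}$ is coordinate-wise dominated by a recursion whose right-hand side is of the form $(\text{scalar})\cdot\mathbf{1}$. Hence the bound $\Delta_k^{(T)}\leq \tfrac{2L}{\beta mn}(T/t_0)^{c\beta}$ derived there holds \emph{uniformly} in $k$, so the average over $k$ admits the same bound. Plugging this back and optimizing the resulting right-hand side $\tfrac{t_0}{n}+\tfrac{2L^2}{\beta mn}(T/t_0)^{c\beta}$ over $t_0$ with the same choice $t_0=(2L^2c/m)^{1/(c\beta+1)}T^{c\beta/(c\beta+1)}$ recovers exactly the bound of Theorem~\ref{thme:non-convex}.

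There is no genuine obstacle here: the only point requiring care is that the conditioning event $\mathcal{E}(i,j)$ and the chosen stopping time $t_0$ must be \emph{common} to all coordinates $k$ when we bound the average $\tfrac{1}{m}\sum_k\Delta_k^{(T)}$. This is automatic because $\mathcal{E}(i,j)=\{\delta^{(t_0)}(i,j)=\mathbf{0}\}$ already pins all coordinates simultaneously, and because the scalar upper bound on the vector recursion does not depend on $k$. The argument therefore goes through identically for both Variant~A and Variant~B, as in Theorem~\ref{thme:non-convex}.
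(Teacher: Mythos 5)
Your proposal is correct and follows essentially the same route as the paper: replace $A_k$ by $A=\bar\theta^{(T)}$ in the proof of Lemma~\ref{lemma:key-non-conv}, bound $\|\bar\theta^{(T)}(S)-\bar\theta^{(T)}(S^{(ij)})\|_2$ by the average of the $\delta_k^{(T)}(i,j)$ via the triangle inequality, and exploit that the bound $\Delta_k^{(T)}\leq \tfrac{2L}{\beta mn}(T/t_0)^{c\beta}$ from Eq.~\eqref{eq:recursion-non-convex} is uniform in $k$. Your explicit remark that the conditioning event $\mathcal{E}(i,j)$ pins all coordinates simultaneously is a point the paper leaves implicit, but the arguments coincide.
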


\begin{proof}

    Replacing $A_k$ by $A$ in the proof of Lemma \ref{lemma:key-non-conv}, and using the fact that $\ell\in [0,1]$ we get:

    \begin{align*}
        |\EE_{A,S}[R(A(S)) - R_S(A(S))]| & \leq  \frac{L}{mn}\sum_{i,j}\EE[\|A(S)  - A(S^{(ij)})\| \big| \mathcal{E}(i,j)]+ \frac{t_0}{n} \\
        & \leq \frac{L}{mn}\sum_{i,j}\frac{1}{m}\sum_{k=1}^m\EE[\|\theta_k^{(T)}  - \tilde{\theta}_k^{(T)}\| \big| \mathcal{E}(i,j)]+\frac{t_0}{n} \\
        & = \frac{1}{m}\sum_{k=1}^m \Delta_k^{(T)}+\frac{t_0}{n} \;
    \end{align*}

where $\Delta_k^{(T)}$ is defined in the proof of Theorem \ref{thme:non-convex} and can be controlled in the exact same way (Eq. \eqref{eq:recursion-non-convex}), leading to the same final result.

\end{proof}

\subsection{On the stepsize assumption in Theorem \ref{thme:convex}}
\label{app:step_bound}
In this section, we show that the assumption $\eta\leq \frac{2\min_k W_{kk}}{\beta}$ is rather mild, and automatically verified for typical choices of stepsize that ensure the convergence of D-SGD. More specifically, note that, when $W$ is symmetric, the iterates of Variant B are precisely the (stochastic) gradient steps for the optimization of the objective function
$$F(\Theta) = \sum_{k=1}^m \EE_{Z \sim \calD_k} [\ell(\theta_k; Z)] + \frac{1}{2\eta}\Theta^\top (I - W) \Theta\,,$$
where $\Theta\in\IR^{m\times d}$ is the concatenation of all local parameters. As this objective function is smooth and convex, typical convex optimization analysis requires the stepsize to be smaller than $1/\beta_F$, where $\beta_F > 0$ is the smoothness constant of $F$ (see \eg \citealp{bubeck2014convex} or \citealp[Theorem 3.4]{garrigos2023handbook}). However, a simple calculation shows that $\beta_F \leq \beta + \frac{1-\lambda_m}{\eta}$ (and this bound is tight as we have equality for the loss function $\ell(\theta,z)=\theta^2/2$), and the condition on the stepsize under our assumptions is thus $\eta \left( \beta + \frac{1-\lambda_m}{\eta} \right)\leq 1$, which gives
$$\eta \leq \frac{\lambda_m}{\beta}\,.$$
Finally, we conclude by noting that $\min_k W_{kk} = \min_k e_k^\top W e_k \geq \min_{u~:~\|u\|=1} u^\top W u = \lambda_m$, and thus the condition $\eta \leq \frac{\lambda_m}{\beta}$ directly implies $\eta \leq \frac{\min_k W_{kk}}{\beta}$ and the assumption of Theorem \ref{thme:convex}.

\subsection{Mistake in \citet{sun2021stability}}
\label{app:mistake_sun}

In Table \ref{tab:a} and Section \ref{sec:non-convex}, we claim that there is unfortunately a mistake in the proof of Theorem 3 in \citet{sun2021stability}, where the authors provide their generalization upper-bound for non-convex functions. In the paper, they provide an upper bound of order $\calO\big(T^{\frac{\beta c}{\beta c +1}}(\frac{1}{mn} + C_\rho )\big)$, however, here we suggest that they should have a bound of order $\calO\big(T^{\frac{\beta c}{\beta c +1}}(\frac{1}{n} + C_\rho )\big)$ instead.

Let's start by determining which part of the proof is wrong. To do this, we take the most recent version of the article on Arxiv as a reference, which can be found at: \url{https://arxiv.org/pdf/2102.01302.pdf}. 
The proof of Theorem 3 can be found on page 15, it relies on Lemma 7 which can be found on page 11. This Lemma is key in their proof and analogue to our Lemma \ref{lemma:key-non-conv} with a uniform rather than an on-average model stability argument. With our notation, it states that $\forall z \in Z$, $S$ and $S'$ that differ in a single data point, they have:

\begin{equation*}
        \EE[|\ell(\bar{\theta}^{(T)};z) - \ell(\bar{\tilde{\theta}}^{(T)};z)|] \leq \frac{t_0}{n}\sup_{\theta, z} \ell(\theta;z) + L\EE\Big[\|\bar{\theta}_k^{(T)} - \bar{\tilde{\theta}}^{(T)}\|\Big| \|\bar{\theta}^{(t_0)} - \bar{\tilde{\theta}}^{(t_0)}\| = 0\Big]\;.     
\end{equation*}
At this point, we observe that their Lemma is very similar to ours and notably that the first term, like us, is divided by $n$. The rest of the proof essentially consists of controlling the second term in the upper bound above. They show (page 16) that:

\begin{equation*}
    \EE\Big[\|\bar{\theta}_k^{(T)} - \bar{\tilde{\theta}}^{(T)}\|\Big| \|\bar{\theta}^{(t_0)} - \bar{\tilde{\theta}}^{(t_0)}\| = 0\Big] \leq \Big(\frac{2Lc}{mn} + 4(1+cL)LC_\rho\Big)c\beta\left(\frac{T}{t_0}\right)^{c\beta}\;.
\end{equation*}
However, when they plug this last equation into the one coming from Lemma 7 above, they claim that they have:

\begin{equation}
        \EE[|\ell(\bar{\theta}^{(T)};z) - \ell(\bar{\tilde{\theta}}^{(T)};z)|] \leq \frac{t_0}{mn} + L\Big(\frac{2Lc}{mn} + 4(1+cL)LC_\rho\Big)c\beta\left(\frac{T}{t_0}\right)^{c\beta}\;. \label{eq:mistake_sun}    
\end{equation}
Here, we can see that for no specific reason, \textbf{$t_0$ is now divided by $mn$ instead of $n$ only}, which is a mistake. This obviously increases stability, and taking $t_0=c^\frac{1}{c \beta +1}T^\frac{c \beta}{c \beta +1}$ leads to their final result. However, by taking this value for $t_0$ and dividing the first term by $n$ only would give a rate of order $\calO\big(T^{\frac{\beta c}{\beta c +1}}(\frac{1}{n} + C_\rho )\big)$ instead. We could imagine to take a better value for $t_0$ in the corrected version of Eq. \eqref{eq:mistake_sun}. However, this is out of the scope of this paper and and in any case, the resulting bound would be worse than ours due to the additional term in $C_\rho$.

\subsection{Experimental setup}
\label{app:exps}

%\textbf{Experimental setup.}
We consider a logistic regression problem with two classes. Each data point $(X,Y)$ is i.i.d. and drawn as follows. With probability $0.5$, the point is first associated to a class $Y=0$ or $1$. If $Y=0$ then $X$ follows a bivariate random Gaussian variable with vector mean $(1,-1)$ and isotropic covariance $I$. If $Y=1$, then the vector mean is $(-1,1)$. To make the problem slightly more complicated and avoid separability, $Y$ is then flipped with probability $0.1$. We take the loss $\ell$ to be:

\begin{equation*}
    \ell(\theta;(x,y)) = -y\log{\Big(\frac{1}{1+\exp{(-x^T\theta})}\Big)} - (1-y)\log{\Big(\frac{\exp{(-x^T\theta})}{1+\exp{(-x^T\theta})}\Big)}\;.
\end{equation*}

For the training, we have $m=20$ agents. To simulate the low noise regime, we take $n=1$ local data point (i.e. full batch: $\sigma^2 = 0$), while we take $n=10$ local data points in the higher noise regime. We then run D-SGD (Variant~B) for $T=500$ iterations, with constant step size $\eta=0.03$ and initial point $\theta^{(0)} = \mathbf{0}$. We consider four communication graphs: (i) Complete graph with uniform weights $1/m$, (ii) Identity graph $I$ (local SGD), (iii) Circle graph with self-edges and uniform weighs $1/3$, and (iv) Complete graph with diagonal elements equal to $0.95$ and remaining elements uniformly equal to $0.05/(m-1)$.

At each iteration $t=1,\ldots,T$, we compute a test loss (empirical population risk) using $500$ i.i.d. data points, evaluated at all parameters $\theta_1^{(t)},\ldots,\theta_m^{(t)}$, and compute the difference with the associated training loss (full empirical risk). Theses differences empirically correspond to local generalization errors, we then average them to match with Theorem \ref{thme:optim-dep}.

We repeat the experiment over $50$ different training-test data sets and for each data set, we run the algorithm $3$ times, which leads to $150$ different runs of D-SGD for each communication graphs. At the end, we average all these $150$ runs and take the absolute value.

\end{document}